\numberwithin{equation}{section}
\icmltitlerunning{Learning Using Privileged Information: SVM+ and Weighted SVM}
\begin{document}

\twocolumn[
\icmltitle{Learning Using Privileged Information: SVM+ and Weighted SVM}

\icmlauthor{Maksim Lapin\texorpdfstring{$^{\rm a}$}{}}%
{mlapin@mpi-inf.mpg.de}
%\icmladdress{Max Planck Institute for Informatics, Saarbr\"{u}cken, Germany}
\icmlauthor{Matthias Hein\texorpdfstring{$^{\rm b}$}{}}%
{hein@cs.uni-saarland.de}
%\icmladdress{Saarland University, Saarbr\"{u}cken, Germany}
\icmlauthor{Bernt Schiele\texorpdfstring{$^{\rm a}$}{}}%
{schiele@mpi-inf.mpg.de}
%\icmladdress{Max Planck Institute for Informatics, Saarbr\"{u}cken, Germany}

\icmlkeywords{SVM, SVM+, Weighted SVM, Importance weighting,
Privileged information, Prior knowledge}

\vskip 0.3in
]

\begin{abstract}
Prior knowledge can be used to improve predictive performance
of learning algorithms or reduce the amount of data required for training.
The same goal is pursued within the learning using privileged information
paradigm which was recently introduced by Vapnik et al.\ and
is aimed at utilizing additional information available only at training time
-- a framework implemented by SVM+.
We relate the privileged information to importance weighting
and show that the prior knowledge expressible with privileged
features can also be encoded by weights associated with every
training example.
We show that a weighted SVM can always replicate an SVM+ solution,
while the converse is not true and we construct a counterexample
highlighting the limitations of SVM+.
Finally, we touch on the problem of choosing weights for
weighted SVMs when privileged features are not available.
\end{abstract}

%% main text
\section{Introduction: prior knowledge, privileged information,
and instance weights}
\label{sec:intro}

Classification is a well-studied problem in machine learning,
however, learning still remains a challenging task when the amount of
training data is limited. Hence, information available \emph{in addition}
to the training sample -- the prior knowledge -- is the crucial factor
in achieving further performance improvement.

Prior knowledge comes in different forms and its incorporation
into the learning problem depends on a particular setting as well as
the algorithm. This paper focuses on introducing prior knowledge
into a support vector machine (SVM) for binary classification.
\citet{lauer:2008:prior} provide a review
of different ways to incorporate prior knowledge into SVMs
and give a categorization of the reviewed methods
based on the \emph{type} of prior knowledge they assume;
see also \citep{schoelkopf:2002:learning}.
We will mainly consider the scenario where the additional information
is about the \emph{training data} rather than about the target function.
A loosely related setting is the semi-supervised learning approach
\citep{chapelle:2006:semisup}, where unlabeled data carries certain
information about the marginal distribution in the input space.

Recently, \citet{vapnik:2009:lupi} introduced the learning using privileged
information (LUPI) paradigm which aims at improving predictive
performance of learning algorithms and reducing the amount of required
training data. The additional information in this framework comes in
the form of privileged features, which are available at training time,
but not at test time. These features are used to parametrize the upper
bound on the loss function and, essentially, are used to estimate
the loss of an optimal classifier on the given training sample.
Higher loss may be seen as an indication that a given point
is likely to be an outlier, and, hence, should be treated differently
than a non-outlier. This simple idea has been extensively explored
in the literature and we give a few pointers in Section~\ref{sec:related}.
The additional information about which training examples
are likely to be outliers can be encoded via instance weights,
therefore, one can already anticipate a close relation between
the LUPI framework and importance weighting which is discussed next.

In the weighted learning scenario, each training example comes
with a non-negative weight which is used in the loss function
to balance the cost of errors.
A typical example where instance weights appear naturally
is the cost-sensitive learning \citep{elkan:2001:costsens}.
If classes are unbalanced or
different misclassification errors incur different penalties,
one can encode that prior knowledge in the form of instance weights.
Assigning high weight to a data point suggests that the learning
algorithm should try to classify that point correctly, possibly
at the cost of misclassifying ``less important'' points.
In this paper, however, we do \emph{not} make the cost-sensitive
assumption, i.e., we do not assume that different errors incur
different costs on the \emph{test} set. Instead, we decouple
importance weighting on the training and on the test sets,
and we only focus on the former. This allows us, in particular,
to also assign a \emph{high} weight to an outlier
if that ultimately leads to a better model.

As mentioned above, there are different forms of prior knowledge
that can be encoded differently. In this paper, we show that
instance weights can express \emph{the same type of prior knowledge}
that is encoded via privileged features.
In particular, this allows one to interpret the effect of
privileged features in terms of the incurred importance weights.
Remarkably, the resulting weights \emph{do} emphasize outliers,
which also happen to be support vectors in SVMs.

Our focus in this work is on the study of the SVM$+$ algorithm,
which is an extension of the support vector machine to the
LUPI framework \citep{vapnik:2009:lupi}.
Using basic tools of convex analysis, we investigate
uniqueness of the SVM$+$ solution and its relation to solutions
of the weighted SVM (WSVM). It turns out there is a simple connection
between an SVM$+$ solution and WSVM instance weights, moreover,
that relation can be used to better understand the SVM$+$ algorithm
and to study its limitations. Having realized that instance weights
in WSVMs can serve the same purpose as privileged features in SVM$+$,
we also turn to the problem of choosing weights when privileged
features are not available.

\subsection{Our contributions}
\label{sec:contrib}

Below is a summary of contributions of this work.

\begin{itemize}

\item We show that any non-trivial SVM$+$ solution is unique
(in the primal), which is a stronger result than the one available
for (W)SVMs, where the offset $b$ may not be unique.

\item By reformulating the SVM$+$ dual optimization problem, we reveal
its close connection to the WSVM algorithm. In particular, we show
that any SVM$+$ dual solution can be used to construct
weights for the WSVM that will yield the same primal solution
up to the non-uniqueness of $b$. This implies that WSVM
with appropriately chosen weights can mimic SVM$+$ and that it is
always possible to go from an SVM$+$ solution to a WSVM solution.

\item We also study whether it is always possible to go in the opposite
direction (which would imply that the two algorithms are equivalent).
We give the necessary and sufficient condition for such an equivalence
to hold and reveal that the SVM$+$ solutions are a strict subset
of the WSVM solutions.
We construct a simple counterexample where a WSVM solution
cannot be found by SVM$+$, no matter which privileged features
are used or which values the hyper-parameters take.

\item Finally, we turn to the problem of choosing weights in the
absence of privileged features. We show that the weights can be learned
directly from data by minimizing an estimate of risk similar to
standard procedures of hyper-parameter tuning. In the idealized
setting, where the estimate is computed on a large validation set,
we show that the WSVM with learned weights outperforms both the SVM
and the SVM$+$. This highlights the potential of weighted learning
and should motivate further work on the choice of weights.

\end{itemize}

\subsection{Related work}
\label{sec:related}

We now briefly discuss related work on learning using privileged information
and weighted learning.

Since the introduction of the new learning paradigm
and the corresponding SVM$+$ algorithm in
\citep{vapnik:2006:afterword} and later in
\citep{vapnik:2009:luhi,vapnik:2009:lupi},
there is a growing body of work on theoretical analysis
\citep{pechyony:2010:tlpi}, implementation \citep{pechyony:2011:fastopt}
and application of the proposed framework to various machine learning
settings.
\citet{liang:2008:svmpmtl, liang:2009:structdata} study the relation
between the SVM$+$ approach and the multi-task learning scenario,
\citet{fouad:2012:gmlvq} apply the SVM$+$ idea to metric learning,
and \citet{chen:2012:boosting} extend it to boosting algorithms.
\citet{feyereis:2012:clustering} use privileged information for
data clustering and \citet{wolf:2013:svmminus} propose
an SVM$\ominus$ method to compute similarity scores in
video face recognition. Note, however, that the latter method is not
related to the SVM$-$ algorithm we have in mind
in Section~\ref{sec:svmminus}. In particular,
SVM$\ominus$ reduces to SVM with a pre-processing step,
similar to \citep{schoelkopf:1998:prior}, while in our case
the optimization problem as well as the motivation are entirely different.

Instance weighting has been widely used in various machine learning settings
and the topic is to too vast to cover all of the related work here.
We only give a few pointers to papers on
cost-sensitive learning
\citep{margineantu:2002:classprob, zadrozny:2003:costsens},
sample bias correction \citep{heckman:1979:sample, cortes:2010:bounds},
domain adaptation \citep{shimodaira:2000:covshift, sugiyama:2005:covshift},
online learning \citep{dredze:2008:confidence},
and active learning \citep{beygelzimer:2009}.
Perhaps the most related in terms of the learning algorithm (SVM)
and the \emph{interpretation} of instance weights are the works on
fuzzy SVM \citep{lin:2002:fuzzysvm}, where each data point has
a fuzzy class membership represented by a weight between 0 and 1,
weighted margin SVM \citep{wu:2004:wmsvmpriorknow},
where again each label has a confidence score between 0 and 1,
and weighted SVM with an outlier detection pre-processing step
\citep{yang:2005:wsvm}, where a kernel-based clustering algorithm
is used to generate instance weights.

\subsection{Organization}
\label{sec:organization}

The rest of the paper is organized as follows.
In Section~\ref{sec:preliminaries}
we introduce the SVM$+$ and the weighted SVM (WSVM) algorithms.
In Section~\ref{sec:uniqueness}
we study basic properties of these algorithms, namely,
uniqueness of their solutions.
In Section~\ref{sec:svmp_svmw_relation} we present our main result
which consists of four parts.
Theorem~\ref{thm:svmw_from_svmp} shows that any SVM$+$ solution is
also a WSVM solution with appropriately chosen weights,
Theorem~\ref{thm:svmw_to_svmp_iff} gives the necessary and sufficient
condition for equivalence between the SVM$+$ and WSVM problems, and
Section~\ref{sec:example_svmw_not_svmp} presents an example
where a WSVM solution cannot be found by SVM$+$,
no matter which privileged features are used.
Finally, Section~\ref{sec:svmminus} discusses whether it is possible
to complement SVM$+$ with an SVM$-$.

Section~\ref{sec:choose_weights} is concerned with the problem of
choosing weights, where we propose a weight learning method
in Section~\ref{sec:weight_learning}.
Lastly, Section~\ref{sec:experiments} presents experimental results
on a number of publicly available data sets
and Section~\ref{sec:conclusion} gives some concluding remarks.

All proofs are moved to Appendix to enhance readability.

\section{Preliminaries}
\label{sec:preliminaries}

In this section we describe the necessary background.
Our results are based on basic notions from convex analysis
\citep{boyd:2004:cvx}
and, in particular, on the Karush-Kuhn-Tucker (KKT) conditions.
For convenience, the latter are provided in Appendix~\ref{sec:kkt}
for both of the optimization problems studied below.

\subsection{The setting and notation}
\label{sec:notation}

We consider a binary classification problem with an instance space $\Xc$
and the label set $\Yc = \{-1,1\}$.
Let $S = \{(\vx_i, y_i)\}_{i=1}^{n}$ be a training sample drawn
i.i.d.\ from an unknown distribution $\Prob$ on $\Xc \times \Yc$,
and $\loss$ be a convex loss function $\loss : \Rb \rightarrow \Rb_+$,
e.g., the hinge loss $\loss(y f(\vx)) = [1 - y f(\vx)]_{+}$.
The task is to learn $f : \Xc \rightarrow \Rb$
from a set of hypotheses $\Hc$,
that yields label prediction by $\sign f(\vx)$
and achieves the lowest expected loss $L(f) \bydef \Exp \loss(Y f(X))$.

We use $\cXc$ to denote
the space of privileged information used in the SVM$+$,
while the $^\star$ is reserved to indicate a solution
to an optimization problem.

In the non-linear setting, the input data is first mapped
into a feature space endowed with an inner product.
The decision space $\Xc$ is mapped into $\Zc$ via a feature map $\Phi$
($\vx_i \mapsto \Phi(\vx_i) = \vz_i$) and
the correcting space $\cXc$ is mapped into $\cZc$ via $\cPhi$
($\cvx_i \mapsto \cPhi(\cvx_i) = \cvz_i$).
It is known \citep{schoelkopf:2001:representer} that inner
products correspond to positive definite kernel functions\footnote{
A function $k: \Xc \times \Xc \rightarrow \Rb$ which for all $n \in \Nb$,
$\vx_1, \ldots, \vx_n \in \Xc$ gives rise to a positive definite
kernel matrix $\mK$ is called a positive definite kernel.}
as follows:
$\inner{\vz_i, \vz_j}_{\Zc}
= \inner{\Phi(\vx_i), \Phi(\vx_j)}_{\Zc}
= k(\vx_i, \vx_j)$
(and similar for $\cXc$),
which allows to formulate algorithms with general kernels in mind.
Since the corresponding space should be clear from the context,
we omit the subscripts when dealing with inner products
and the induced norms.

Unless transposed with $^\top$, all vectors are column vectors
denoted by lower case bold letters, matrices are denoted by
capital bold letters, and random variables are denoted by capital letters.
We let $\vy = (y_1, \ldots, y_n)^{\top}$ and $\mY = \diag(\vy)$.
The kernel matrices $\mK$ and $\cmK$ are defined entrywise via
$\mK_{ij} = k(\vx_i, \vx_j)$ and $\cmK_{ij} = \ck(\cvx_i, \cvx_j)$,
where $i,j = 1, \ldots, n$.
We also introduce the index sets
$\Ic_{\pm} \bydef \{i : y_i \gtrless 0\}$, $\Ic_{0} \bydef \{i : y_i
f(\vx_i) < 1\}$, $\Ic_{1} \bydef \{i : y_i f(\vx_i) \leq 1\}$,
and a shorthand $\Prob(1|\vx) \bydef \Prob(Y=1|X=\vx)$.

Finally, $\Null(\mA)$ and $\Range(\mA)$ stand correspondingly for the
null space and the column space of a matrix $\mA$,
$\va^{\bot}$ is the orthogonal complement of $\va$,
and $\zeros$ (respectively $\ones$) is the vector of all zeros (ones).

\subsection{The SVM+ optimization problem}
\label{sec:background_svmp}

In the framework of learning using privileged information (LUPI), the decision
space $\Xc$ is augmented with a correcting space $\cXc$ of privileged features
$\cvx$ that are available \emph{at training time only} and are essentially used
to estimate the loss $\loss(y_i f^{\star}(\vx_i))$ of an optimal classifier
$f^\star \bydef \argmin_{f \in \Hc} L(f)$ on the given training sample.
The SVM$+$ algorithm \citep{pechyony:2011:fastopt} is a generalization of
the support vector machine that implements the LUPI paradigm. The slack
variables $\xi_i$ are parametrized as a function of privileged features:
\begin{align*}
\xi_i(\cvw, \cb) \bydef \inner{\cvw, \cvz_i} + \cb ,
\end{align*}
where $(\cvw, \cb)$ are the additional parameters to be learned.
The following optimization problem defines the SVM$+$ algorithm.
\begin{equation}
\label{eq:svmp_primal}
\begin{aligned}
\min_{\vw, b, \cvw, \cb} & \quad \frac{1}{2}
    (\norms{\vw} + \gamma \norms{\cvw})
    + C \sum_{i=1}^{n} \xi_i(\cvw, \cb) \\
\st & \quad y_i (\inner{\vw, \vz_i} + b) \geq 1 - \xi_i(\cvw, \cb) \\
    & \quad \xi_i(\cvw, \cb) \geq 0
\end{aligned}
\end{equation}
Note that there are two hyper-parameters, $\gamma$ and $C$, that control the
trade-off between the three terms of the objective, where the second term
limits the capacity of the set of correcting functions $\xi_i(\cvw, \cb)$.

\subsection{The WSVM optimization problem}
\label{sec:background_svmw}

The weighted support vector machine (WSVM) is a well-known generalization
of the standard SVM. Each instance $(\vx_i,y_i)$ is assigned an importance
weight $c_i \in \Rb_+$ and in place of the standard empirical risk estimator
$\hat{L}(f) \bydef n^{-1} \sum_{i=1}^{n} \loss(y_i f(\vx_i))$
its weighted version is employed:
\begin{align*}
\hat{L}_{w}(f) \bydef \sum_{i=1}^{n} c_i \loss(y_i f(\vx_i)) .
\end{align*}

The WSVM optimization problem is given below.
\begin{equation}
\label{eq:svmw_primal}
\begin{aligned}
\min_{\vw, b, \vxi} &
    \quad \frac{1}{2} \norms{\vw} + \sum_{i=1}^{n} c_i \xi_i \\
\st & \quad y_i (\inner{\vw, \vz_i} + b) \geq 1 - \xi_i , \; \xi_i \geq 0
\end{aligned}
\end{equation}

At first glance, it may appear that the two generalizations of the SVM
are unrelated. As will become clear in the following, however,
there is a relation between the two and the solution space of WSVMs
includes SVM$+$ solutions. 
This is not very surprising as soon as one realizes that re-weighting
allows to alter the loss function to a large extent and, in particular,
one can mimic the effect of privileged features.
The close relationship can already be seen when comparing the corresponding
dual problems.

\subsection{The dual optimization problems}
\label{sec:background_duals}

Let $\valpha$ and $\vbeta$ be the Lagrange dual variables of the SVM$+$
or the WSVM problem corresponding respectively to the first and
the second inequality constraints
\citep{schoelkopf:2002:learning, vapnik:2009:luhi}.
Define $\cvalpha \bydef \valpha + \vbeta - \vc$, where for the SVM$+$ we set
$\vc = C\ones$, and note that $\vbeta$ can be eliminated leading to the
constraint $\alpha_i \leq c_i + \calpha_i$. Let
\begin{align*}
F(\valpha) &\bydef
\frac{1}{2} \valpha^{\top} \mY\mK\mY \valpha - \ones^{\top} \valpha , &
\cF(\cvalpha) &\bydef \frac{1}{2} \cvalpha^{\top} \cmK \cvalpha .
\end{align*}
It is not hard to see that the following optimization problem
is equivalent to the dual of the SVM$+$ problem (\ref{eq:svmp_primal}).
\begin{equation}
\label{eq:svmp_dual}
\begin{aligned}
\min_{\valpha, \cvalpha} &
    \quad F(\valpha) + \frac{1}{\gamma} \cF(\cvalpha) \\
\st &
    \quad \vy^{\top} \valpha = 0, \;
    \ones^{\top} \cvalpha = 0, \;
    0 \leq \alpha_i \leq C + \calpha_i
\end{aligned}
\end{equation}
Likewise, the problem below is equivalent to the dual of the WSVM
problem (\ref{eq:svmw_primal}).
\begin{equation}
\label{eq:svmw_dual}
\begin{aligned}
\min_{\valpha} &
    \quad F(\valpha) \\
\st &
    \quad \vy^{\top} \valpha = 0, \;
    0 \leq \alpha_i \leq c_i
\end{aligned}
\end{equation}

Note that the constraint $\alpha_i \leq c_i + \calpha_i$ is the
crucial part of the SVM$+$ problem as it introduces the coupling
between the decision space $\Xc$ and the correcting space $\cXc$.
Recall from the representer theorem \citep{schoelkopf:2001:representer}
that an SVM solution has the form
$f = \sum_{i=1}^{n} \alpha_i y_i k(\vx_i, \cdot)$.
Correcting features thus control the maximum influence a data
point $(\vx_i,y_i)$ can have on the resulting classifier,
just like the weights in WSVMs.

\section{Uniqueness results}
\label{sec:uniqueness}

The connection between SVM$+$ and WSVM explored in
Section~\ref{sec:svmp_svmw_relation} relies on the analysis
of uniqueness of their solutions.
Effectively, the statements can only be made with respect to
the classes of equivalent solutions and equivalent weights,
hence, it is imperative to first obtain a better understanding
of different sources of non-uniqueness in the aforementioned problems.

In this section, we show that every non-trivial SVM$+$ solution is unique,
unlike WSVM solutions that may have a non-unique offset $b$. Furthermore,
we describe a set of equivalent weights that yield the same WSVM solutions.
The latter will be used to prove equivalence between the SVM$+$ and the
WSVM algorithms under additional constraints.

\subsection{Uniqueness of WSVM and SVM+ solutions}
\label{sec:unique_solution}

We begin with a known result due to \citet{burges:1999:unique}
that characterizes uniqueness of the weighted SVM solution.
Essentially, it states that if there is an equilibrium
between instance weights of support vectors, then the separating
hyperplane can be shifted within a certain range without altering
the total cost in the WSVM problem. In that case, a WSVM solver has
to choose a value for the offset using some heuristic, e.g.,
it can choose the middle point in the allowed range of $b$.

\begin{theorem}
\label{thm:svmw_unique}
The solution to the problem (\ref{eq:svmw_primal}) is unique in $\vw$.
It is not unique in $b$ and $\vxi$ iff one of the following
two conditions holds:
\begin{align*}
\sum_{i \in \Ic_{-} \cap \Ic_{0}} c_i = \sum_{i \in \Ic_{+} \cap \Ic_{1}} c_i ,
\quad
\sum_{i \in \Ic_{+} \cap \Ic_{0}} c_i = \sum_{i \in \Ic_{-} \cap \Ic_{1}} c_i .
\end{align*}
\end{theorem}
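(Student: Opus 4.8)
The plan is to pin down $\vw$ first, then eliminate the slacks and reduce the residual problem in $b$ to a one-dimensional piecewise-linear minimization whose minimizer is non-degenerate exactly under the two stated equalities.

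\emph{Uniqueness in $\vw$.} Suppose $(\vw_1,b_1,\vxi_1)$ and $(\vw_2,b_2,\vxi_2)$ both solve (\ref{eq:svmw_primal}). Their midpoint is feasible since the feasible set of (\ref{eq:svmw_primal}) is convex; moreover $\vw\mapsto\tfrac12\norms{\vw}$ is strictly convex and $\sum_i c_i\xi_i$ is linear, so the objective at the midpoint is at most the average of the two optimal values, with equality only if $\vw_1=\vw_2$. Optimality rules out a strictly smaller value, hence $\vw_1=\vw_2=:\vw^\star$. Write $f(\vx)\bydef\inner{\vw^\star,\vz}+b$. With $\vw^\star$ frozen, the optimal slacks are $\xi_i=[1-y_i f(\vx_i)]_+$ (we may assume every $c_i>0$; an index with $c_i=0$ affects neither $\vw^\star$, nor the optimal $b$, nor the sums in the statement). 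So the task reduces to minimizing the convex piecewise-linear function $g(b)\bydef\sum_{i=1}^n c_i[1-y_i(\inner{\vw^\star,\vz_i}+b)]_+$. As both classes are non-empty, $g$ is coercive, so its minimizers form a non-empty compact interval, and the WSVM solution is non-unique in $b$ iff that interval is non-degenerate.

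\emph{The one-sided-derivative criterion.} A training point lies exactly on the margin ($y_i f(\vx_i)=1$) only at a breakpoint of $g$. Splitting the sum by $\sign y_i$ and recording which hinge terms are active just below and just above a minimizer $b^\star$ gives
\[
g'_-(b^\star)=\sum_{i\in\Ic_-\cap\Ic_0}c_i-\sum_{i\in\Ic_+\cap\Ic_1}c_i,
\qquad
g'_+(b^\star)=\sum_{i\in\Ic_-\cap\Ic_1}c_i-\sum_{i\in\Ic_+\cap\Ic_0}c_i,
\]
with $\Ic_0,\Ic_1$ read off the solution $f$; thus $b^\star$ minimizes $g$ iff $g'_-(b^\star)\le0\le g'_+(b^\star)$. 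A univariate convex piecewise-linear function has a non-degenerate minimizer set iff it is flat on some interval, which happens iff one of $g'_\pm$ vanishes at a minimizer ($g'_+=0$ at the left endpoint, $g'_-=0$ at the right endpoint). Setting either expression above to zero is exactly one of the two displayed equalities, establishing the ``iff'' for $b$.

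\emph{From $b$ to $\vxi$, and the main obstacle.} Given $\vw^\star$ and $b$ the slacks are determined, so $\vxi$ is unique whenever $b$ is. Conversely, if the optimal offsets fill a non-degenerate interval, some $\xi_i$ must be positive somewhere on it --- otherwise all slacks vanish there, which forces $\vw^\star$ to be the minimum-norm separator, whose offset is pinned down by active constraints on both sides, contradicting non-uniqueness of $b$ --- and a positive hinge term is strictly monotone in $b$, so $\vxi$ varies as well. I expect the only delicate parts to be the bookkeeping of $g'_\pm$ at breakpoints where several points sit on the margin simultaneously (keeping $\Ic_0$ versus $\Ic_1$ straight on each side of $b^\star$) and excluding that last degenerate, perfectly separable subcase.
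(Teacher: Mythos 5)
Your proof is correct: the strict-convexity argument for $\vw$, the reduction to the one-dimensional convex piecewise-linear problem $g(b)$, and the one-sided-derivative criterion $g'_-(b^\star)\le 0\le g'_+(b^\star)$ --- with non-uniqueness holding exactly when one side vanishes, i.e.\ when $g$ is flat on an adjacent piece --- reproduce the two stated equalities with the correct signs and index sets, and your treatment of the $\vxi$ part and of the $c_i=0$ and separable degeneracies is sound. Note that the paper does not actually prove this theorem itself but imports it from \citet{burges:1999:unique}; your argument is essentially that standard perturbation/subgradient analysis, so there is nothing to reconcile beyond observing that the paper's own convexity argument for Theorem~\ref{thm:svmp_unique} (the family $\vu_t=(1-t)\vu_1+t\vu_2$) is the same device you use for uniqueness in $\vw$.
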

Note that in practice it may happen that one of the two conditions holds and
the WSVM problem (\ref{eq:svmw_primal}) does not have a unique solution.
This is not the case for the SVM$+$ as shown next.

\begin{theorem}
\label{thm:svmp_unique}
The solution to the problem (\ref{eq:svmp_primal}) is unique in $(\vw,\cvw,\cb)$
for any $C > 0$, $\gamma > 0$. If there is a support vector, then $b$ is unique
as well, otherwise:
\begin{align*}
\max_{i \in \Ic_{+}} (1 - \inner{\cvw,\cvz_i} - \cb) \leq b \leq 
\min_{i \in \Ic_{-}} (\inner{\cvw,\cvz_i} + \cb - 1) .
\end{align*}
\end{theorem}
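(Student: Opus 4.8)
The plan is to derive uniqueness from strict convexity of the objective on the relevant subspace, following the standard KKT-based argument for (W)SVM but carefully tracking the two extra parameters $(\cvw,\cb)$. First I would write down the dual problem~(\ref{eq:svmp_dual}) and observe that $F(\valpha) + \frac{1}{\gamma}\cF(\cvalpha)$ is convex but not strictly so, hence the dual optimizers $\valpha^\star$, $\cvalpha^\star$ need not be unique. However, by the representer theorem the primal quantities of interest are $\vw = \sum_i \alpha_i y_i \vz_i$ and $\cvw = \frac{1}{\gamma}\sum_i \calpha_i \cvz_i$; so it suffices to show that $\mY\mK\mY\valpha$ and $\cmK\cvalpha$ are the same for all dual optima. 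This is the classical trick: if $\valpha^{(1)},\valpha^{(2)}$ are both dual-optimal, then so is their midpoint, and strict convexity of the quadratic form $\valpha^\top\mY\mK\mY\valpha$ \emph{restricted to $\Range(\mY\mK\mY)$} forces $\mY\mK\mY\valpha^{(1)} = \mY\mK\mY\valpha^{(2)}$, i.e.\ $\vw$ is unique; the identical argument applied to $\cF$ gives uniqueness of $\cvw$. Uniqueness of $\cb$ is then extracted from the equality constraint $\ones^\top\cvalpha = 0$ together with the KKT stationarity/complementary-slackness conditions: since $\vw,\cvw$ are pinned down, the primal objective $\frac{1}{2}(\norms{\vw}+\gamma\norms{\cvw}) + C\sum_i(\inner{\cvw,\cvz_i}+\cb)$ is \emph{strictly increasing} in $\cb$ (coefficient $Cn>0$), while every feasible $\cb$ must keep $\xi_i(\cvw,\cb) = \inner{\cvw,\cvz_i}+\cb \ge 0$ and satisfy the margin constraints — so the optimal $\cb$ is forced to the unique smallest value compatible with feasibility given the now-fixed $\vw,\cvw,b$. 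I would make this precise by showing the constraint set for $\cb$ alone (after fixing everything else) is a closed interval bounded below, and the objective selects its left endpoint.

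For the offset $b$: with $\vw,\cvw,\cb$ fixed, the remaining feasibility conditions are $y_i(\inner{\vw,\vz_i}+b) \ge 1 - \xi_i(\cvw,\cb)$ for all $i$, and $b$ does not appear in the objective at all. Hence $b$ is determined only up to the interval of values satisfying these inequalities; writing them out, for $i \in \Ic_+$ we need $b \ge 1 - \xi_i(\cvw,\cb) - \inner{\vw,\vz_i}$ and for $i \in \Ic_-$ we need $b \le -1 + \xi_i(\cvw,\cb) + \inner{\vw,\vz_i}$. Wait — I should route this through complementary slackness instead: if some point is a support vector (its first constraint is active, $\alpha_i^\star > 0$), then equality $y_i(\inner{\vw,\vz_i}+b) = 1 - \xi_i(\cvw,\cb)$ pins $b$ exactly, giving the unique value. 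If there is no support vector (all $\alpha_i^\star = 0$), then $\vw = \zeros$, and the feasibility inequalities above with $\inner{\vw,\vz_i} = 0$ collapse to exactly the stated double inequality $\max_{i\in\Ic_+}(1 - \inner{\cvw,\cvz_i} - \cb) \le b \le \min_{i\in\Ic_-}(\inner{\cvw,\cvz_i}+\cb - 1)$, using $\xi_i(\cvw,\cb) = \inner{\cvw,\cvz_i}+\cb$. One must also check the case $\Ic_+$ or $\Ic_-$ is empty (then $b$ is one-sidedly or entirely unconstrained), which I would flag as a degenerate subcase.

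The main obstacle I anticipate is the uniqueness of $\cb$: unlike $b$, the parameter $\cb$ \emph{does} enter the objective (through $\sum_i \xi_i$) and also appears in the nonnegativity constraints $\xi_i \ge 0$ and in the margin constraints, so the clean "doesn't appear in objective $\Rightarrow$ interval of optima" reasoning does not apply. I would handle it by the strict-monotonicity argument sketched above: having fixed the unique $\vw,\cvw$, suppose $\cb_1 < \cb_2$ were both optimal; then the solution with $\cb_2$ has strictly larger objective (the term $C\sum_i \xi_i = C\sum_i(\inner{\cvw,\cvz_i}+\cb)$ increases by $Cn(\cb_2-\cb_1) > 0$) unless decreasing $\cb$ violates some constraint — but $\cb_1$ is itself feasible by assumption, contradiction. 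Hence $\cb$ is the unique minimal feasible value. Finally I would double-check that the trivial/zero solution (all $\xi_i = 0$, $\vw = \cvw = \zeros$), which the theorem statement explicitly excludes via "non-trivial" elsewhere but here is covered by the "no support vector" clause, is consistent with the displayed interval for $b$.
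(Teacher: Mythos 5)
Your proposal is correct, and it arrives at the same three-step decomposition as the paper (quadratic strict convexity pins $(\vw,\cvw)$; the linear coefficient $Cn$ of $\cb$ in the objective pins $\cb$; complementary slackness or, absent support vectors, bare feasibility with $\vw=\zeros$ handles $b$), but you route the first step through the dual while the paper works purely in the primal. The paper applies the Burges--Crisp segment argument directly to the primal objective $\tfrac12(\norms{\vw}+\gamma\norms{\cvw})+C\sum_i(\inner{\cvw,\cvz_i}+\cb)$: the $t^2$ coefficient of the (constant) objective along the segment between two optima is $\tfrac12\norms{\vw_1-\vw_2}+\tfrac{\gamma}{2}\norms{\cvw_1-\cvw_2}$, which vanishes, and the $t$ coefficient then gives $\cb_1=\cb_2$ in one line. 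Your dual route is valid but costs you strong duality plus the representer relations $\vw=\sum_i\alpha_iy_i\vz_i$ and $\gamma\cvw=\sum_i\calpha_i\cvz_i$ to translate $\mY\mK\mY(\valpha_1-\valpha_2)=\zeros$ and $\cmK(\cvalpha_1-\cvalpha_2)=\zeros$ back into $\vw_1=\vw_2$ and $\cvw_1=\cvw_2$; what it buys is exactly the content of the paper's separate Proposition on dual (non-)uniqueness, so you effectively prove both results at once. One phrasing in your $\cb$ step is off: the optimal $\cb$ is not characterized as ``the smallest feasible value given fixed $\vw,\cvw,b$'' (different optima could a priori pair different $b$ with different $\cb$, and $b$ is not yet known to be fixed at that stage); however, the contradiction you actually run --- two optima share the objective value, the objective is independent of $b$ and strictly increasing in $\cb$ once $\vw,\cvw$ are fixed, hence $\cb_1=\cb_2$ --- is sound and is precisely the paper's argument. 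Your explicit handling of empty $\Ic_+$ or $\Ic_-$ is a small point the paper omits.
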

This result is interesting on its own, since it shows that the SVM$+$
is formulated in a way that privileged features always give
enough information to choose \emph{the} unique solution
(if there are no support vectors, then the constant classifier
can be given by $b = \pm 1$ depending on the class balance).

Results concerning uniqueness of dual solutions are more technical
and are moved to the Appendix.

\subsection{Equivalent weights}
\label{sec:equivalent_weights}

Apart from the conditions discussed in the previous section,
another source of non-uniqueness is that any given WSVM solution
corresponds, in general, to multiple weight vectors $\vc$.
In this section, we give a characterization of all such vectors.

\begin{definition}
\label{def:equiv_weights}
A family of equivalent weights $\Wc$ is defined for a given
WSVM solution as follows:
\begin{align*}
\Wc \bydef \{ & \vmu + \vnu \; | \;
    \vmu \in \Uc, \; \vnu \in \Vc \} , \\
\Uc \bydef \{ & \vmu \in \Rb_{+}^n \; | \;
    \textstyle\sum_{i} \mu_i y_i \vz_i = \vw^{\star}, \;
    \textstyle\sum_{i} \mu_i y_i = 0, \\
&
    \textstyle\sum_{i} \mu_i = \textstyle\sum_{i} \alpha^{\star}_i , \;
    \mu_i (\xi^{\star}_i - h_i) = 0 \; \forall i
    \}, \\
\Vc \bydef \{ & \vnu \in \Rb_{+}^n \; | \;
    \nu_i \xi^{\star}_i = 0 \; \forall i
    \},
\end{align*}
where $h_i \bydef [1 - y_i (\inner{\vw^{\star}, \vz_i} + b^{\star})]_{+} $
is the hinge loss at a point $i = 1, \ldots, n$.
\end{definition}

The following simple statement shows that the set $\Wc$
defined above contains \emph{all} weights that correspond to
a given WSVM solution.

\begin{proposition}
\label{prop:equiv_weights}
Let $(\vw^{\star},b^{\star},\vxi^{\star},\valpha^{\star}, \vbeta^{\star})$ be a
primal-dual optimal point for the WSVM problem (\ref{eq:svmw_primal}).
The point $(\vw^{\star},b^{\star},\vxi^{\star})$
is primal optimal for any weight vector $\vc \in \Wc$,
and all such weights are contained in $\Wc$.
\end{proposition}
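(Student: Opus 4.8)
The plan is to establish the two directions of the claim separately, both by appealing to the KKT conditions for the WSVM problem~(\ref{eq:svmw_primal}). For the first direction, I would fix an arbitrary $\vc \in \Wc$, write $\vc = \vmu + \vnu$ with $\vmu \in \Uc$ and $\vnu \in \Vc$, and exhibit a dual vector certifying that $(\vw^{\star}, b^{\star}, \vxi^{\star})$ is optimal for the WSVM with weights $\vc$. The natural candidate is $\valpha \bydef \vmu$ together with $\vbeta \bydef \vc - \vmu = \vnu$. I then check the KKT conditions one at a time: stationarity in $\vw$ reduces to $\sum_i \mu_i y_i \vz_i = \vw^{\star}$, which holds since $\vmu \in \Uc$; stationarity in $b$ is $\sum_i \mu_i y_i = 0$, again from $\vmu \in \Uc$; stationarity in $\xi_i$ gives $c_i = \alpha_i + \beta_i$, which holds by construction; dual feasibility $\alpha_i, \beta_i \geq 0$ holds because $\vmu, \vnu \in \Rb_+^n$; primal feasibility of $(\vw^{\star}, b^{\star}, \vxi^{\star})$ is inherited from optimality for the original weights. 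The only slightly delicate part is complementary slackness. The constraint $\xi_i \geq 0$ has multiplier $\beta_i = \nu_i$, and $\nu_i \xi^{\star}_i = 0$ is exactly the defining condition of $\Vc$. The margin constraint has multiplier $\alpha_i = \mu_i$, and we need $\mu_i\,[\,1 - y_i(\inner{\vw^{\star},\vz_i}+b^{\star}) - \xi^{\star}_i\,] = 0$; since at the original optimum $\xi^{\star}_i \geq h_i$ always holds and equals $h_i$ precisely on the active margin constraints, the condition $\mu_i(\xi^{\star}_i - h_i) = 0$ from $\Uc$ forces $\mu_i$ to vanish wherever $\xi^{\star}_i > h_i$, i.e.\ wherever the margin constraint is slack, which is what we need. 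This shows $(\vw^{\star}, b^{\star}, \vxi^{\star})$ is primal optimal for every $\vc \in \Wc$.

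For the converse, suppose $\vc$ is a weight vector for which $(\vw^{\star}, b^{\star}, \vxi^{\star})$ is primal optimal for~(\ref{eq:svmw_primal}); I must show $\vc \in \Wc$. Take any dual-optimal $\valpha$ for this problem and set $\vbeta \bydef \vc - \valpha$ (nonnegative by the KKT stationarity in $\xi_i$). Now split $\vc = \vmu + \vnu$ by putting $\vmu \bydef \valpha$ and $\vnu \bydef \vbeta$. The KKT conditions for the new problem give immediately $\sum_i \mu_i y_i \vz_i = \vw^{\star}$, $\sum_i \mu_i y_i = 0$, and the complementary slackness relations $\mu_i$ vanishes on slack margin constraints (hence $\mu_i(\xi^{\star}_i - h_i) = 0$) and $\nu_i \xi^{\star}_i = \beta_i\xi^{\star}_i = 0$. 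The one condition requiring a small argument is $\sum_i \mu_i = \sum_i \alpha^{\star}_i$: here $\valpha^{\star}$ is the dual variable for the \emph{original} weight vector. I would obtain this from summing the margin-constraint complementary slackness together with $\sum_i \mu_i y_i = 0$: on the set of support vectors the slack is $\xi^{\star}_i = h_i = 1 - y_i(\inner{\vw^{\star},\vz_i}+b^{\star})$, so $\sum_i \mu_i = \sum_{i \in \mathrm{SV}} \mu_i(1 - \text{margin}_i) + \sum_i \mu_i\,\text{margin}_i$, and the value $\sum_i\mu_i\xi^{\star}_i$ equals the optimal objective's slack term, which is the same for any dual-optimal multiplier of a fixed primal solution; matching against $\valpha^{\star}$ gives equality. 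Thus $\vmu \in \Uc$, $\vnu \in \Vc$, so $\vc \in \Wc$.

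The main obstacle is the bookkeeping around complementary slackness and the normalization $\sum_i \mu_i = \sum_i \alpha^{\star}_i$: one has to be careful that $\xi^{\star}_i = h_i$ only on support vectors while $\xi^{\star}_i$ may exceed $h_i$ (in fact both can be zero) elsewhere, and to verify that the quantity $\sum_i \alpha_i \xi^{\star}_i$ is invariant across the different weight vectors and dual solutions associated with the same primal point. Everything else is a routine term-by-term verification of the KKT system for~(\ref{eq:svmw_primal}) with the substituted $(\valpha, \vbeta)$.
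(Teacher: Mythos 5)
Your first direction follows the paper's route exactly: a term-by-term KKT check with $\valpha \bydef \vmu$ and $\vbeta \bydef \vnu$. One inaccuracy there is worth noting: $\xi^{\star}_i = h_i$ does \emph{not} hold ``precisely on the active margin constraints''. For a point with $y_i (\inner{\vw^{\star},\vz_i}+b^{\star}) > 1$ one has $\xi^{\star}_i = h_i = 0$ while the margin constraint is slack, so the defining condition $\mu_i(\xi^{\star}_i - h_i)=0$ of $\Uc$ does not force $\mu_i=0$ there, and the complementary slackness condition $\mu_i[\xi^{\star}_i - 1 + y_i(\inner{\vw^{\star},\vz_i}+b^{\star})]=0$ is not automatically verified for such indices. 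The paper's one-line proof glosses over the same point, so I only flag it.

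The genuine gap is in your converse, at the normalization $\sum_i \mu_i = \sum_i \alpha^{\star}_i$. Your identity $\sum_i\mu_i = \norms{\vw^{\star}} + \sum_i \mu_i\xi^{\star}_i$ is fine, but the claim that $\sum_i\mu_i\xi^{\star}_i$ ``is the same for any dual-optimal multiplier of a fixed primal solution'' only holds for multipliers of the \emph{same} problem: whenever $\xi^{\star}_i>0$, complementary slackness forces $\mu_i=c_i$, so $\sum_i\mu_i\xi^{\star}_i = \sum_i c_i\xi^{\star}_i$, which is the weight-dependent slack term of the objective. Here $\vmu$ is a multiplier for the \emph{new} weights $\vc$ while $\valpha^{\star}$ is a multiplier for the original weights, so ``matching against $\valpha^{\star}$'' amounts to asserting $\inner{\vc,\vxi^{\star}} = \inner{\vc_0,\vxi^{\star}}$, which is equivalent to the equality you are trying to establish --- the argument is circular. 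The paper closes this step differently, by invoking Proposition~\ref{prop:unique_dual}: two dual vectors reproducing the same $\vw^{\star}$ differ by an element of $\Null(\mY\mK\mY)\cap\ones^{\perp}\cap\vy^{\perp}$, and it is the $\ones^{\perp}$ component of that statement (obtained from the perturbation-and-differentiation argument in Appendix~\ref{sec:proof:unique_dual}) that yields $\ones^{\top}(\vmu-\valpha^{\star})=0$. Your proof needs that lemma, or a genuine substitute for it, to go through.
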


\begin{corollary}
\label{cor:equiv_weights}
There always exists a weight vector $\vc' \in \Wc$ such that
$\vc' = \valpha' = \valpha^\star$ and $\vbeta' = \zeros$.
\end{corollary}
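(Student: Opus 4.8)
The plan is to exhibit the weight vector $\vc' = \valpha^\star$ explicitly and verify that it lies in $\Wc$ by checking membership in $\Uc$ (with $\vnu = \zeros$, so that $\vc' = \vmu$). Setting $\vmu = \valpha^\star$, the first two conditions defining $\Uc$, namely $\sum_i \mu_i y_i \vz_i = \vw^\star$ and $\sum_i \mu_i y_i = 0$, are immediate from the KKT stationarity conditions for the WSVM primal--dual pair $(\vw^\star, b^\star, \vxi^\star, \valpha^\star, \vbeta^\star)$: stationarity in $\vw$ gives $\vw^\star = \sum_i \alpha^\star_i y_i \vz_i$, and stationarity in $b$ gives $\sum_i \alpha^\star_i y_i = 0$. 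The third condition $\sum_i \mu_i = \sum_i \alpha^\star_i$ holds trivially since $\vmu = \valpha^\star$. Also $\valpha^\star \in \Rb_+^n$ by dual feasibility.

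The only condition requiring a short argument is the complementarity-type constraint $\mu_i(\xi^\star_i - h_i) = 0$ for all $i$, i.e.\ $\alpha^\star_i(\xi^\star_i - h_i) = 0$. First I would observe that for any primal-optimal $(\vw^\star, b^\star, \vxi^\star)$ one always has $\xi^\star_i = h_i$: indeed $\xi^\star_i \ge h_i$ follows from primal feasibility ($\xi^\star_i \ge 1 - y_i(\inner{\vw^\star,\vz_i}+b^\star)$ and $\xi^\star_i \ge 0$), while $\xi^\star_i > h_i$ for some $i$ would strictly decrease the objective $\sum_i c_i \xi_i$ (for $c_i>0$; for $c_i = 0$ the variable $\xi_i$ is irrelevant and can be taken equal to $h_i$), contradicting optimality. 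Hence $\xi^\star_i - h_i = 0$ for every $i$, so the condition is satisfied regardless of $\alpha^\star_i$. This establishes $\valpha^\star \in \Uc \subseteq \Wc$.

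Finally, for the companion dual point: by Proposition~\ref{prop:equiv_weights}, $(\vw^\star, b^\star, \vxi^\star)$ is primal optimal for the weight vector $\vc' = \valpha^\star$, and I would produce the corresponding dual certificate $(\valpha', \vbeta')$ by taking $\valpha' = \valpha^\star$ and $\vbeta' = \zeros$. One checks this is dual feasible and satisfies all KKT conditions for the WSVM with weights $\vc'$: the box constraint becomes $0 \le \alpha'_i \le c'_i = \alpha^\star_i$, which holds with equality at the upper end; $\beta'_i = c'_i - \alpha'_i = 0 \ge 0$; the stationarity conditions in $\vw$ and $b$ are inherited from $\valpha^\star$; and the complementary slackness conditions $\beta'_i \xi^\star_i = 0$ hold since $\vbeta' = \zeros$, while $\alpha'_i(1 - y_i(\inner{\vw^\star,\vz_i}+b^\star) - \xi^\star_i) = \alpha^\star_i(\dots) = 0$ is inherited. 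I do not anticipate a genuine obstacle here; the only point that needs care is the degenerate case $c_i = 0$ (equivalently $\alpha^\star_i = 0$), where one must note that the choice of $\xi^\star_i$ is immaterial and can be normalized to $h_i$ so that the argument above goes through uniformly.
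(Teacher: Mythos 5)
Your proof is correct and follows the same route the paper implies: the paper leaves the corollary as an immediate consequence of Definition~\ref{def:equiv_weights} and the KKT conditions, and your write-up is simply a careful spelling-out of that verification (membership of $\valpha^{\star}$ in $\Uc$ via stationarity and complementarity, then the dual certificate $\valpha'=\valpha^{\star}$, $\vbeta'=\zeros$). One minor simplification: the condition $\alpha^{\star}_i(\xi^{\star}_i-h_i)=0$ follows directly from (\ref{eq:svmw_kkt:comp1}) together with $\xi^{\star}_i\geq 0$, and the degenerate case $c_i=0$ forces $\alpha^{\star}_i=0$ by (\ref{eq:svmw_kkt:xi}), so no normalization of $\xi^{\star}_i$ is actually needed for this step.
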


It is not surprising that \emph{a posteriori} all weight could be
concentrated on support vectors as suggested by
Corollary~\ref{cor:equiv_weights}.
As will become clear in the following, this is close to
what the SVM$+$ algorithm is constrained to do.

\section{Relation between SVM+ and WSVM}
\label{sec:svmp_svmw_relation}

In this section, we present our main theoretical result on the conditions
under which the SVM$+$ and the WSVM are equivalent.
Section~\ref{sec:svmp_to_svmw} shows that it is always possible
to construct weights from an SVM$+$ solution such that the WSVM
will have the same solution.
Section~\ref{sec:svmw_to_svmp} discusses when it is possible
to go in the opposite direction and reveals a fundamental
constraint of the SVM$+$ algorithm.
Finally, Section~\ref{sec:svmp_eq_svmw} states
the necessary and sufficient condition for their equivalence.
Furthermore, we present a counterexample violating
that condition in Section~\ref{sec:example_svmw_not_svmp}
and discuss SVM$-$ in Section~\ref{sec:svmminus}.

\subsection{SVM+ solutions are also WSVM solutions}
\label{sec:svmp_to_svmw}

The following theorem shows that any SVM$+$ solution is also a solution to the
WSVM problem with appropriately chosen weights and such a choice of weights
can always be given by the SVM$+$ dual variables.

\begin{theorem}
\label{thm:svmw_from_svmp}
Let
$(\vw^{\star},b^{\star},\cvw^{\star},\cb^{\star},
\valpha^{\star},\vbeta^{\star})$
be a primal-dual optimal point for the SVM$+$ problem.
There exists a choice of weights $\vc$, namely
$\vc = \valpha^{\star} + \vbeta^{\star}$, and $\vxi^{\star}$ such that
$(\vw^{\star},b^{\star},\vxi^{\star},\valpha^{\star},\vbeta^{\star})$ is a
primal-dual optimal point for the WSVM problem.
\end{theorem}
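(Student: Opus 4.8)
The plan is to verify directly that the candidate point $(\vw^{\star},b^{\star},\vxi^{\star},\valpha^{\star},\vbeta^{\star})$, with the weight vector $\vc \bydef \valpha^{\star} + \vbeta^{\star}$, satisfies the KKT conditions of the WSVM problem~(\ref{eq:svmw_primal}). Since both problems are convex with affine constraints (so Slater's condition holds trivially and strong duality applies), the KKT conditions are necessary and sufficient for optimality, and it suffices to check each one. The starting point is that the given tuple is KKT-optimal for SVM$+$; I will simply read off the SVM$+$ KKT conditions from Appendix~\ref{sec:kkt} and show each one implies the corresponding WSVM KKT condition under the substitution $\vc = \valpha^{\star}+\vbeta^{\star}$.

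Concretely, I would proceed in the following order. First, stationarity in $\vw$: the SVM$+$ condition gives $\vw^{\star} = \sum_i \alpha^{\star}_i y_i \vz_i$, which is exactly the WSVM stationarity condition in $\vw$ — it does not involve the weights at all. Second, stationarity in $b$: both problems yield $\vy^{\top}\valpha^{\star} = 0$, again independent of $\vc$. Third, the stationarity in the slack variables: for WSVM this reads $c_i = \alpha^{\star}_i + \beta^{\star}_i$, which holds by our very definition of $\vc$; for SVM$+$ the analogous condition (after introducing $\cvalpha = \valpha + \vbeta - \vc$ with $\vc = C\ones$, as in Section~\ref{sec:background_duals}) is what motivates this choice in the first place. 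Fourth, primal feasibility: I need $\xi^{\star}_i \geq 0$ and $y_i(\inner{\vw^{\star},\vz_i}+b^{\star}) \geq 1 - \xi^{\star}_i$; here I must be slightly careful about which $\vxi^{\star}$ I take — the natural choice is $\xi^{\star}_i = \inner{\cvw^{\star},\cvz_i} + \cb^{\star}$, the SVM$+$ slack, which is feasible for SVM$+$ and hence also feasible for the WSVM constraints (the constraint sets on $(\vw,b,\vxi)$ are literally identical). Fifth, dual feasibility: $\alpha^{\star}_i \geq 0$ and $\beta^{\star}_i \geq 0$ carry over verbatim, and $\alpha^{\star}_i \leq c_i$ is immediate since $c_i = \alpha^{\star}_i + \beta^{\star}_i \geq \alpha^{\star}_i$. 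Sixth, complementary slackness: the two conditions $\alpha^{\star}_i\bigl(1 - \xi^{\star}_i - y_i(\inner{\vw^{\star},\vz_i}+b^{\star})\bigr) = 0$ and $\beta^{\star}_i \xi^{\star}_i = 0$ are required for WSVM; the first is identical to the SVM$+$ margin complementary slackness, and the second is exactly the SVM$+$ complementary slackness for the constraint $\xi_i \geq 0$ (recall $\beta_i$ is dual to that constraint in SVM$+$ too).

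The step I expect to require the most care is not any single KKT check — those are essentially bookkeeping — but rather pinning down precisely how the dual variables $\valpha^{\star},\vbeta^{\star}$ of the \emph{original} SVM$+$ primal~(\ref{eq:svmp_primal}) relate to the reformulated dual~(\ref{eq:svmp_dual}), and in particular confirming that the SVM$+$ complementary-slackness condition for $\xi_i \geq 0$ really is $\beta^{\star}_i \xi^{\star}_i = 0$ with the same $\vxi^{\star}$ appearing in both the SVM$+$ and WSVM KKT systems. Once that correspondence is made explicit (it follows from writing out the SVM$+$ Lagrangian with multipliers $\valpha$ for the margin constraints and $\vbeta$ for $\xi_i \geq 0$, taking the $\xi$-derivative, and eliminating $\vbeta$), every remaining condition transfers mechanically, and the claim that $(\vw^{\star},b^{\star},\vxi^{\star},\valpha^{\star},\vbeta^{\star})$ is primal-dual optimal for WSVM with $\vc = \valpha^{\star}+\vbeta^{\star}$ follows. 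A closing remark would note that this also explains Corollary~\ref{cor:equiv_weights}: by Proposition~\ref{prop:equiv_weights} the weights $\valpha^{\star}+\vbeta^{\star}$ lie in the equivalence class $\Wc$, so one may further push all weight onto the support vectors, which is exactly the regime SVM$+$ operates in.
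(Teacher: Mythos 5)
Your proposal is correct and takes essentially the same route as the paper: both arguments come down to setting $\xi^{\star}_i = \inner{\cvw^{\star},\cvz_i} + \cb^{\star}$ and $\vc = \valpha^{\star}+\vbeta^{\star}$ and observing that every WSVM KKT condition in (\ref{eq:svmw_kkt}) is then either literally one of the SVM$+$ KKT conditions in (\ref{eq:svmp_kkt}) or holds by the definition of $\vc$. The only cosmetic difference is that the paper first records the dual-level observation that fixing $\cvalpha$ in (\ref{eq:svmp_dual}) yields exactly (\ref{eq:svmw_dual}) with $\vc = C\ones + \cvalpha$, whereas you verify the primal-dual system directly; both are complete.
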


Note that a direct corollary of this result is that, just like a good
choice of privileged features leads to improved predictive performance
of the SVM$+$ \citep{pechyony:2010:tlpi},
a good choice of weights leads to improved performance of the WSVM.
This claim is verified empirically in the experimental
Section~\ref{sec:experiments} when weights are learned in an
idealized setting, which is close to the Oracle SVM setting of
\citet{vapnik:2009:lupi}.

\begin{figure}[ht]
\centering
\centerline{\includegraphics[width=\columnwidth]{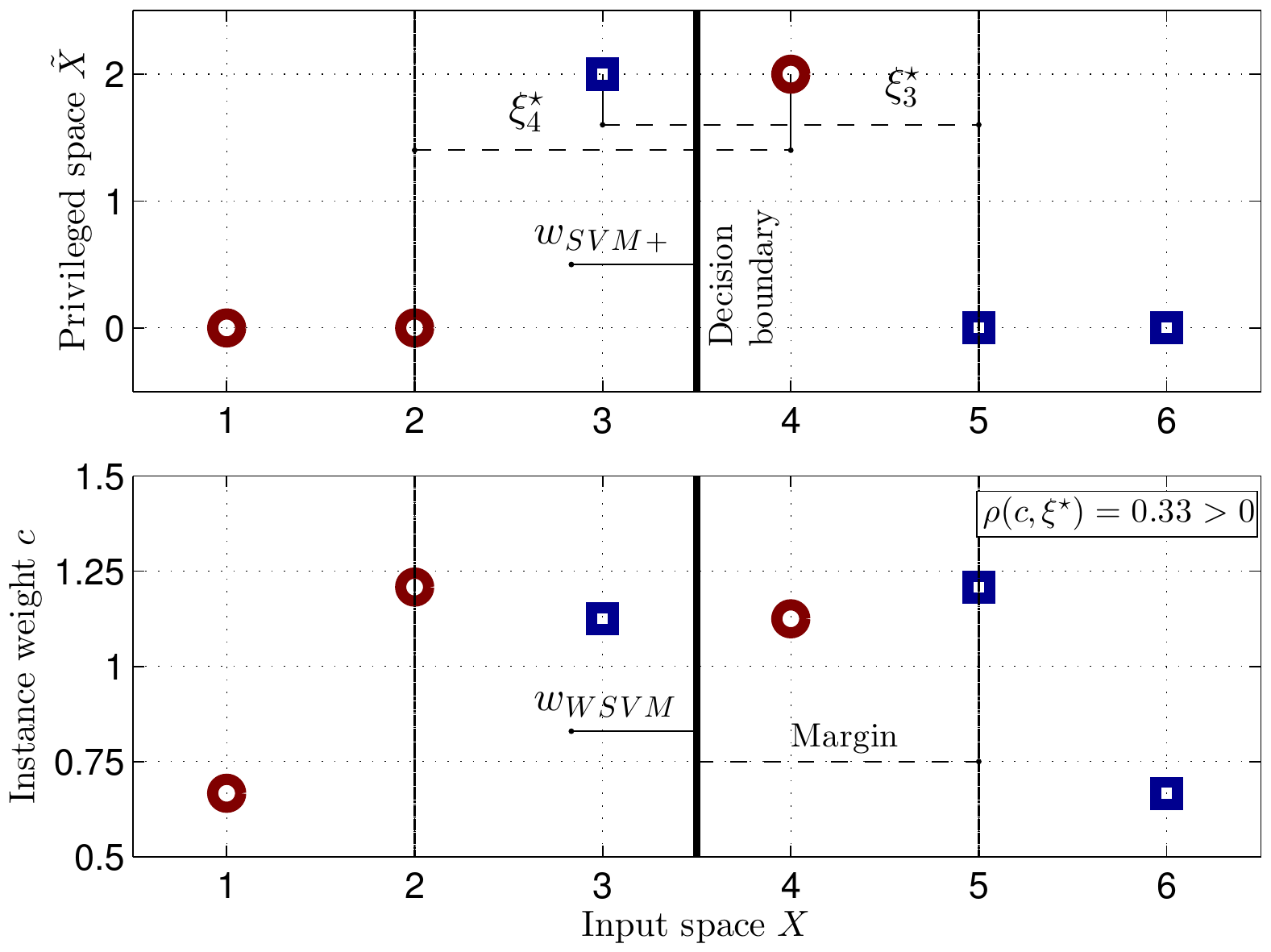}}
\caption{An example of equivalence between SVM$+$ (top)
and WSVM (bottom).
The privileged features coincide with the optimal slack variables
$\xi_i^{\star}$, as motivated by the LUPI paradigm,
and instance weights $c_i$
are given by the sum of SVM$+$ dual variables
(Theorem~\ref{thm:svmw_from_svmp}).
Note that whenever a WSVM solution is constructed from an SVM$+$
solution, as in this case,
the weighted average loss is greater than the non-weighted one,
i.e., $\rho(\vc, \vxi^{\star}) \geq 0$ (Theorem~\ref{thm:svmw_to_svmp_iff}).}
\label{fig:equiv_example}
\end{figure}

Figure~\ref{fig:equiv_example} shows a toy example where an SVM$+$
solution is used to compute weights $\vc = \valpha^{\star} + \vbeta^{\star}$
that force the WSVM to find exactly the same solution.
Note that the outliers (points 3 and 4) receive relatively high
weight, so that the weighted average loss is greater than
the non-weighted one.
See Section~\ref{sec:svmp_eq_svmw} for further details.

\subsection{Which WSVM solutions are SVM+ solutions?}
\label{sec:svmw_to_svmp}

We now consider the opposite direction and characterize the SVM$+$
solutions in terms of the induced instance weights.
The following Lemma~\ref{lem:svmp_necessary} highlights the bias
of the SVM$+$ algorithm as it establishes that every solution must
satisfy a certain relation between the dual variables
(respectively the weights) and the loss on the training sample.
This is the key to showing that the SVM$+$ and the WSVM algorithms
are not equivalent, and that the latter is strictly more generic
as it does not impose that additional constraint.

\begin{lemma}
\label{lem:svmp_necessary}
Assume any given $C > 0$, $\gamma \geq 0$ and let
$(\vw^{\star},b^{\star},\cvw^{\star},\cb^{\star},
\valpha^{\star},\vbeta^{\star} )$ be a primal-dual optimal point for
the SVM$+$ problem (\ref{eq:svmp_primal}), then the following holds:
\begin{align}
\label{eq:svmp_necessary}
\frac{ \sum_{i=1}^{n} (\alpha^{\star}_i + \beta^{\star}_i) h_i }{ \sum_{i=1}^{n}
(\alpha^{\star}_i + \beta^{\star}_i) } \geq \frac{1}{n} \sum_{i=1}^{n} h_i ,
\end{align}
where $h_i \bydef [1 - y_i (\inner{\vw^{\star}, \vz_i} + b^{\star})]_{+} $
is the hinge loss at a point $i = 1, \ldots, n$.
If $\gamma = 0$, then (\ref{eq:svmp_necessary}) is satisfied with equality.
\end{lemma}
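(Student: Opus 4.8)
The plan is to extract the claimed inequality directly from the KKT/stationarity conditions of the SVM$+$ problem, by pairing the stationarity condition for $\cvw$ with the one for $\cb$. First I would write down the Lagrangian of (\ref{eq:svmp_primal}) with multipliers $\valpha$ (for the margin constraints) and $\vbeta$ (for the nonnegativity $\xi_i \geq 0$), and recall that $\xi_i = \xi_i(\cvw,\cb) = \inner{\cvw,\cvz_i} + \cb$ is an affine function of the correcting parameters, so that the slack variables are \emph{not} free: their ``stationarity'' is really stationarity in $\cvw$ and $\cb$. Differentiating in $\cb$ gives $\gamma^{-1}\ones^\top\cvalpha$-type balance — more precisely, summing the $\xi_i$-coefficients yields $\sum_i (C - \alpha_i^\star - \beta_i^\star) = 0$ when $\gamma>0$ interacts with the $\cb$-stationarity, i.e. a relation of the form $\sum_i(\alpha_i^\star+\beta_i^\star) = nC$. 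Differentiating in $\cvw$ gives $\gamma\cvw^\star = \cmK\,\cvalpha^\star$ (equivalently $\gamma\cvw^\star = \sum_i \calpha_i^\star \cvz_i$ with $\calpha_i^\star = \alpha_i^\star + \beta_i^\star - C$), which is the coupling that carries the geometry of the privileged space.

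The key step is then to relate $h_i$ to $\xi_i^\star$. By complementary slackness for the two constraint families: if $\beta_i^\star > 0$ then $\xi_i^\star = 0$, and if $\alpha_i^\star < C + \calpha_i^\star$ (the reduced upper bound) then the margin constraint pairing forces $\xi_i^\star = h_i$; in general one shows $\xi_i^\star \geq h_i$ always, with $(\alpha_i^\star+\beta_i^\star)(\xi_i^\star - h_i)$ controlled. The cleanest route: establish $\xi_i^\star \geq h_i$ for every $i$ (since $\xi_i^\star \geq 0$ and $\xi_i^\star \geq 1 - y_i(\inner{\vw^\star,\vz_i}+b^\star)$ from primal feasibility, whence $\xi_i^\star \geq [1-y_i(\cdots)]_+ = h_i$), and separately compute $\sum_i (\alpha_i^\star+\beta_i^\star)\xi_i^\star$ and $\sum_i \xi_i^\star$ in terms of dual quantities. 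Plugging $\vc = \valpha^\star+\vbeta^\star$ and using $\sum_i c_i = nC$, the desired inequality
\[
\frac{\sum_i c_i h_i}{\sum_i c_i} \;\geq\; \frac{1}{n}\sum_i h_i
\]
is equivalent to $\sum_i c_i h_i \geq C\sum_i h_i = \frac{1}{n}(\sum_i c_i)(\sum_i h_i)$, i.e. to the statement that $h_i$ and $c_i$ are nonnegatively correlated. I expect this to follow because the weight $c_i$ is forced up precisely on the high-loss points via the $\cvw$-stationarity: using $\gamma\norms{\cvw^\star} = \cvalpha^{\star\top}\cmK\cvalpha^\star \geq 0$ together with $\sum_i \calpha_i^\star = 0$ (the $\cb$-stationarity / $\ones^\top\cvalpha = 0$ constraint from (\ref{eq:svmp_dual})), one gets $\sum_i \calpha_i^\star \xi_i^\star = \cvalpha^{\star\top}(\cmK\cvalpha^\star/\gamma) \cdot\gamma\cdot(\text{sign bookkeeping}) \geq 0$, and since $\calpha_i^\star = c_i - C$ this reads $\sum_i (c_i - C)\xi_i^\star \geq 0$; combined with $\xi_i^\star \geq h_i$ and $\sum_i(c_i-C) = 0$ this delivers the correlation inequality. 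When $\gamma = 0$ the term $\gamma\norms{\cvw^\star}$ vanishes and the inequality $\sum_i(c_i-C)\xi_i^\star \geq 0$ becomes an equality (the $\cvw$ regularizer disappears, so the correcting functions fit the slacks exactly), which is exactly the equality case in the lemma.

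The main obstacle I anticipate is the careful handling of the $\xi_i^\star$ versus $h_i$ distinction together with complementary slackness: one must be sure that replacing $\xi_i^\star$ by $h_i$ inside $\sum_i(c_i-C)\xi_i^\star \geq 0$ preserves the inequality, which requires checking the sign of $c_i - C = \calpha_i^\star$ on the set where $\xi_i^\star > h_i$. On that set the margin constraint is slack, so $\alpha_i^\star = 0$, hence $\beta_i^\star = c_i$ and then $\xi_i^\star = 0$ by complementary slackness for $\xi_i \geq 0$ — forcing $h_i = 0$ too, so $\xi_i^\star = h_i$ after all and the replacement is in fact an identity on the relevant indices. Establishing this ``$\xi_i^\star = h_i$ whenever it matters'' reconciliation is the delicate bookkeeping, but once it is in place the rest is the short computation sketched above. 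I would present it by first recording the stationarity identities (deferring to Appendix~\ref{sec:kkt}), then the complementary-slackness case analysis, then the one-line algebraic rearrangement into the stated ratio inequality, with the $\gamma = 0$ equality falling out as a remark.
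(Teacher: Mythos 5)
Your plan follows essentially the same route as the paper's proof: both rest on the stationarity conditions for $\cvw$ and $\cb$ (giving $\sum_i(\alpha_i^\star+\beta_i^\star)=nC$ and $\gamma\cvw^\star=\sum_i\calpha_i^\star\cvz_i$), the observation that $\sum_i\calpha_i^\star\,\xi_i^\star(\cvw^\star,\cb^\star)=\gamma\norms{\cvw^\star}\ge 0$, and a complementary-slackness argument to pass from $\xi_i^\star$ to $h_i$, with the $\gamma=0$ case yielding equality. The one place your bookkeeping goes wrong is the claim that the set $\{i:\xi_i^\star>h_i\}$ is empty: from $\alpha_i^\star=0$ you conclude $\beta_i^\star=c_i$ and then $\xi_i^\star=0$ by complementary slackness, but that last step needs $\beta_i^\star>0$. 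When $\alpha_i^\star=\beta_i^\star=0$ the correcting function $\inner{\cvw^\star,\cvz_i}+\cb^\star$ is free to strictly overshoot $h_i$ (this is exactly the slack $\delta_i\ge 0$ with $(\alpha_i^\star+\beta_i^\star)\delta_i=0$ that the paper introduces), so the replacement is \emph{not} an identity on all indices. The proof survives anyway, for precisely the reason you flagged but then abandoned: on that exceptional set $c_i-C=-C<0$ and $\xi_i^\star-h_i>0$, so the correction term $\sum_i(c_i-C)(\xi_i^\star-h_i)$ is nonpositive and discarding it only strengthens the inequality $\sum_i(c_i-C)h_i\ge 0$. With that sub-step repaired, your argument matches the paper's.
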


Taking into account that the corresponding weights in the WSVM are given by
the sum of the SVM$+$ dual variables, the above inequality can be
re-written in a more compact form.

\begin{corollary}[The Necessary Condition]
\label{cor:svmp_from_svmw_necessary}
Assume the setting of Theorem~\ref{thm:svmw_from_svmp}, then $\xi^{\star}_i =
h_i$ and
\begin{align*}
\inner{\vc - \bar{c} \ones, \vxi^{\star}} & \geq 0 , &
    \text{where } \bar{c} & \bydef n^{-1} \textstyle\sum_{i=1}^{n} c_i .
\end{align*}
\end{corollary}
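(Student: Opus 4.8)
The plan is to derive Corollary~\ref{cor:svmp_from_svmw_necessary} directly from Lemma~\ref{lem:svmp_necessary} together with Theorem~\ref{thm:svmw_from_svmp}. First I would invoke Theorem~\ref{thm:svmw_from_svmp} to fix the correspondence $\vc = \valpha^{\star} + \vbeta^{\star}$, so that $\sum_i c_i = \sum_i (\alpha^{\star}_i + \beta^{\star}_i)$ and, for each $i$, $c_i = \alpha^{\star}_i + \beta^{\star}_i$. The claim $\xi^{\star}_i = h_i$ should follow from the KKT conditions of the SVM$+$ problem: since $(\vw^{\star},b^{\star},\vxi^{\star},\valpha^{\star},\vbeta^{\star})$ is primal-dual optimal for the WSVM problem by Theorem~\ref{thm:svmw_from_svmp}, complementary slackness on the constraints $y_i(\inner{\vw^{\star},\vz_i}+b^{\star}) \geq 1 - \xi_i$ and $\xi_i \geq 0$ forces $\xi^{\star}_i$ to equal the hinge loss $h_i = [1 - y_i(\inner{\vw^{\star},\vz_i}+b^{\star})]_{+}$ at any point where the weight $c_i > 0$ (and where $c_i = 0$ one may take $\xi^{\star}_i = h_i$ WLOG without changing optimality). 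This identifies $\xi^{\star}_i = h_i$ for all $i$.

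With $\xi^{\star}_i = h_i$ in hand, I would rewrite inequality~(\ref{eq:svmp_necessary}) from Lemma~\ref{lem:svmp_necessary}. Substituting $c_i = \alpha^{\star}_i + \beta^{\star}_i$ and $h_i = \xi^{\star}_i$, the left-hand side becomes $\bigl(\sum_i c_i \xi^{\star}_i\bigr) / \bigl(\sum_i c_i\bigr) = \inner{\vc,\vxi^{\star}} / (n\bar{c})$, and the right-hand side becomes $n^{-1}\sum_i \xi^{\star}_i = n^{-1}\inner{\ones,\vxi^{\star}}$. Thus the lemma reads $\inner{\vc,\vxi^{\star}} / (n\bar{c}) \geq n^{-1}\inner{\ones,\vxi^{\star}}$, i.e.\ $\inner{\vc,\vxi^{\star}} \geq \bar{c}\inner{\ones,\vxi^{\star}} = \inner{\bar{c}\ones,\vxi^{\star}}$, which rearranges to $\inner{\vc - \bar{c}\ones,\vxi^{\star}} \geq 0$. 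This is exactly the asserted inequality, so the corollary follows. One should note the edge case $\sum_i c_i = 0$: then $\valpha^{\star} = \vzero$, the trivial solution, and the inequality holds trivially since $\vc - \bar c\ones = \vzero$.

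The main obstacle I anticipate is the cleanly-justified identification $\xi^{\star}_i = h_i$ for \emph{all} $i$, including indices with $c_i = 0$. The subtlety is that the WSVM primal (\ref{eq:svmw_primal}) only penalizes $\xi_i$ through $c_i \xi_i$, so at points with zero weight the optimal $\xi_i$ is not pinned down by the objective — only the constraint $\xi_i \geq \max(0, 1 - y_i(\inner{\vw^{\star},\vz_i}+b^{\star}))$ forces $\xi_i \geq h_i$, and there is freedom above that. The resolution is that Theorem~\ref{thm:svmw_from_svmp} already hands us a \emph{specific} $\vxi^{\star}$ — namely the SVM$+$ optimal slacks $\xi^{\star}_i = \inner{\cvw^{\star},\cvz_i} + \cb^{\star}$ — and the statement of Lemma~\ref{lem:svmp_necessary} is in terms of exactly those quantities together with $h_i$; so I would argue that for this particular optimal point the SVM$+$ KKT conditions (dual feasibility $\alpha^{\star}_i + \beta^{\star}_i \geq 0$ combined with complementary slackness) give $\xi^{\star}_i = h_i$ at every $i$, appealing to the KKT list in Appendix~\ref{sec:kkt}. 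Once that identification is secured, the remainder is a one-line algebraic rearrangement of Lemma~\ref{lem:svmp_necessary}.
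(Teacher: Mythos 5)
Your proposal is correct and follows essentially the same route as the paper, whose proof of this corollary is literally the one-line ``Follows from Theorem~\ref{thm:svmw_from_svmp} and Lemma~\ref{lem:svmp_necessary}''; you simply spell out the substitution $\vc = \valpha^{\star}+\vbeta^{\star}$, $\xi^{\star}_i = h_i$ and the algebraic rearrangement. The subtlety you flag about indices with $c_i=0$ is real, and your resolution (take $\xi^{\star}_i$ at its lower bound $h_i$ there) is exactly the convention the paper itself adopts later, in the proof of Theorem~\ref{thm:svmw_to_svmp_iff}.
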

\begin{proof}
Follows from Theorem~\ref{thm:svmw_from_svmp} and
Lemma~\ref{lem:svmp_necessary}.
\end{proof}

Note that this result suggests a simple way to interpret the effect
of privileged features -- they impose a re-weighting of the input
training data. Moreover, at the end of training more emphasis
will be on points with positive loss and less on easy points,
in particular, the non-support vectors may end up with zero weight.

\subsection{SVM+ and WSVM equivalence}
\label{sec:svmp_eq_svmw}

We now state the main result of this paper which gives the
necessary and sufficient condition for the equivalence between
the SVM$+$ and the WSVM.

\begin{theorem}
\label{thm:svmw_to_svmp_iff}
Let
$(\vw^{\star},b^{\star},\vxi^{\star},\valpha_{0}^{\star},\vbeta_{0}^{\star})$
be a primal-dual optimal point for the WSVM problem
with instance weights $\vc_0 \in \Rb_{+}^{n}$, not all zero.
There exists a choice of $C$, $\gamma$,
and correcting features $\{\cvx_i\}_{i=1}^{n}$
such that $(\vw^{\star},b^{\star})$ is optimal for the SVM$+$ problem iff:
\begin{align}
\label{eq:svmw_to_svmp_iff}
\exists \: \vc \in \Wc \; : \;
\rho(\vc, \vxi^{\star}) \bydef
    \inner{\vc - \bar{c} \ones, \vxi^{\star}} \geq 0 ,
\end{align}
where $\bar{c} \bydef n^{-1} \sum_{i=1}^{n} c_i$.
If $\rho(\vc, \vxi^{\star}) \geq 0$, one such possible choice is as follows:
\begin{align}
\label{eq:svmw_to_svmp_feat}
C &= \bar{c} , & 
\gamma &= \rho(\vc, \vxi^{\star}) , &
\cx_i &= \xi^{\star}_{i} - \cb^{\star} , \; \forall i
\end{align}
moreover, the optimal $\cw^{\star}$ and $\cb^{\star}$ in that case are:
\begin{align}
\label{eq:svmw_to_svmp_copt}
\cw^{\star} &= 1, & \cb^{\star} &= \inner{\vc, \vxi^{\star}} / \inner{\vc,
\ones} .
\end{align}
\end{theorem}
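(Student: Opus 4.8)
The plan is to prove the two directions separately, using the dual formulations (\ref{eq:svmp_dual}) and (\ref{eq:svmw_dual}) together with the KKT conditions and the characterization of equivalent weights in Proposition~\ref{prop:equiv_weights}. For the \emph{necessity} direction, suppose $(\vw^\star, b^\star)$ is optimal for some SVM$+$ instance with parameters $C, \gamma$ and correcting features $\{\cvx_i\}$. By Theorem~\ref{thm:svmw_from_svmp}, the SVM$+$ primal-dual optimum yields a weight vector $\vc = \valpha^\star + \vbeta^\star$ for which $(\vw^\star, b^\star, \vxi^\star)$ is WSVM-optimal; in particular $\vc \in \Wc$ since $\Wc$ by Proposition~\ref{prop:equiv_weights} contains \emph{all} weights consistent with that WSVM solution. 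Lemma~\ref{lem:svmp_necessary} (equivalently Corollary~\ref{cor:svmp_from_svmw_necessary}) then gives exactly $\rho(\vc,\vxi^\star) = \inner{\vc - \bar c\ones, \vxi^\star} \ge 0$, which is (\ref{eq:svmw_to_svmp_iff}). The only subtlety here is that the original WSVM was stated with weights $\vc_0$, not $\vc$, so I must note that $\vc_0$ and $\vc$ lie in the same family $\Wc$ and therefore define the same primal solution and the same $\vxi^\star$, so it does not matter which representative one starts from.

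For the \emph{sufficiency} direction, assume some $\vc \in \Wc$ satisfies $\rho(\vc, \vxi^\star) \ge 0$, and I will verify that the explicit choice (\ref{eq:svmw_to_svmp_feat}) together with $\cw^\star, \cb^\star$ from (\ref{eq:svmw_to_svmp_copt}) produces an SVM$+$ primal-dual optimal point with decision function $(\vw^\star, b^\star)$. The strategy is to \emph{guess} the dual variables and check KKT for (\ref{eq:svmp_primal}): take the SVM$+$ dual $\valpha^\star$ to be the WSVM dual (which exists because $\vc \in \Wc$ reproduces the WSVM solution), and set $\calpha_i^\star = \alpha_i^\star - C$ so that $\vc = \valpha^\star + \vbeta^\star$ with the appropriate $\vbeta^\star \ge 0$. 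One then checks: (i) the coupling constraint $\ones^\top\cvalpha^\star = 0$, which reduces to $\sum_i \alpha_i^\star = nC = \sum_i c_i$, true because $C = \bar c$ and $\vc \in \Wc$ forces $\sum_i \alpha_i^\star = \sum_i \mu_i = \sum_i c_i$ — wait, more carefully, $\sum_i c_i = \sum_i \alpha_i^\star$ from the $\Uc$-constraints; (ii) stationarity in $\cvw$, namely $\gamma \cvw^\star = \cmK\cvalpha^\star$, which with the chosen one-dimensional $\cmK_{ij} = \cx_i\cx_j = (\xi_i^\star - \cb^\star)(\xi_j^\star - \cb^\star)$ and $\cw^\star = 1$ becomes a scalar identity equating $\gamma$ with $\sum_i \calpha_i^\star(\xi_i^\star - \cb^\star)$; substituting $\calpha_i^\star = \alpha_i^\star - C$ and simplifying, this is exactly $\gamma = \inner{\valpha^\star - C\ones, \vxi^\star}$, and since $\vxi^\star$ equals the hinge loss $h$ on the support set and $\valpha^\star$ (as a valid member/relative of $\Wc$ via $\vc$) satisfies $\inner{\vc,\vxi^\star}$-type identities, this matches $\rho(\vc,\vxi^\star)$; (iii) stationarity in $\cb$, $\ones^\top\cvalpha^\star = 0$ again, consistent with (i); (iv) primal feasibility $\xi_i(\cvw^\star,\cb^\star) = \cx_i + \cb^\star = \xi_i^\star \ge 0$ by the choice of $\cx_i$; and (v) complementary slackness, inherited from the WSVM KKT conditions. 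The value of $\cb^\star$ in (\ref{eq:svmw_to_svmp_copt}) is precisely what is forced by stationarity in $\cvw$ once $\cw^\star = 1$ is fixed: it is the $\vc$-weighted mean of $\vxi^\star$, chosen so that $\sum_i \calpha_i^\star(\xi_i^\star - \cb^\star)$ comes out to the prescribed $\gamma$.

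The main obstacle I anticipate is the bookkeeping in step (ii): one must carefully track the relation $\xi_i^\star = h_i$ on all indices (using Corollary~\ref{cor:svmp_from_svmw_necessary} or re-deriving it from the WSVM KKT conditions — note $\xi_i^\star > 0$ forces $\alpha_i^\star = c_i$, and $h_i = 0$ forces $\xi_i^\star = 0$), then show that the scalar stationarity equation $\gamma \cw^\star = \sum_i \calpha_i^\star \cx_i$ collapses exactly to $\gamma = \rho(\vc,\vxi^\star)$ after substituting $\cb^\star$, and — crucially — that $\gamma \ge 0$, which is why the hypothesis $\rho(\vc,\vxi^\star) \ge 0$ is needed (a negative $\gamma$ is not a valid hyper-parameter, and indeed the regularizer $\tfrac12\gamma\norms{\cvw}$ would cease to be convex). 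A secondary point requiring care is the degenerate case $\rho(\vc,\vxi^\star) = 0$, i.e. $\gamma = 0$: here the problem (\ref{eq:svmp_primal}) becomes an SVM$+$ with no penalty on $\cvw$, and one should check that $\cw^\star = 1$ still satisfies the (now vacuous) stationarity condition and that Lemma~\ref{lem:svmp_necessary}'s equality case is consistent. Finally, one must confirm uniqueness-compatibility: the constructed SVM$+$ point has decision function $(\vw^\star, b^\star)$, and by Theorem~\ref{thm:svmp_unique} this is \emph{the} unique SVM$+$ solution for those parameters (possibly up to $b$ in the support-vector-free case), closing the argument.
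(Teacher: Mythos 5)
Your proposal is correct and follows essentially the same route as the paper: necessity via Theorem~\ref{thm:svmw_from_svmp}, Proposition~\ref{prop:equiv_weights} and Lemma~\ref{lem:svmp_necessary}, and sufficiency by constructing one-dimensional correcting features from the slacks, deriving $\cb^{\star}$ as the $\vc$-weighted mean of $\vxi^{\star}$, and checking the SVM$+$ KKT conditions, with the degenerate case $\rho(\vc,\vxi^{\star})=0$ treated separately at $\gamma=0$. Two small repairs are needed in your write-up: the SVM$+$ dual variable must be $\calpha_i = c_i - C = \alpha_i^{\star}+\beta_i^{\star}-C$ rather than $\alpha_i^{\star}-C$ (the latter breaks $\ones^{\top}\cvalpha=0$ whenever the $\Vc$-component $\vnu$ of $\vc$ is nonzero, since then $\sum_i c_i = \sum_i\alpha_i^{\star}+\sum_i\nu_i$), and the $\gamma=0$ stationarity condition is not vacuous --- it still requires $\sum_i\calpha_i\cx_i=0$, which holds exactly because $\rho(\vc,\vxi^{\star})=0$.
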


Let us make a few remarks. First, condition (\ref{eq:svmw_to_svmp_iff})
can be rewritten in terms of averages as
\begin{align}
\label{eq:svmw_to_svmp_iff_w}
\sum_{i=1}^{n} \omega_i \xi^{\star}_i & \geq
    \frac{1}{n} \sum_{i=1}^{n} \xi^{\star}_i ,
\end{align}
where $\omega_i \bydef c_i / \sum_{i=1}^{n} c_i$ is the normalized weight.
Hence, any SVM$+$ solution has an equivalent WSVM setting
that puts \emph{more weight on hard examples}, i.e.,
the points with higher loss.

Further, it is clear from Definition~\ref{def:equiv_weights} that
the weight of points with $y_i f(\vx_i) > 1$ can be changed arbitrarily
without altering the $f$ since in that case $\xi^{\star}_i = 0$,
$\alpha^{\star}_i = 0$ and $\beta^{\star}_i = c_i$,
i.e., these points are not support vectors and they
have no influence on the final classifier. Hence, their weight --
the upper bound on the influence -- does not matter.

This reasoning leads us to a condition that is much easier to check
in practice than the one in Theorem~\ref{thm:svmw_to_svmp_iff}.
Note that condition (\ref{eq:svmw_to_svmp_iff}) involves the
set of equivalent weights and it \emph{is} possible to check it directly
using the definition of $\Wc$ as will be discussed below.
However, if the kernel matrix is non-singular,
as is often the case with the Gaussian kernel,
then one can simply take $\vc = \valpha^{\star}$
and check (\ref{eq:svmw_to_svmp_iff}) for that particular
weight vector \emph{only}.

\begin{proposition}
\label{prop:svmw_to_svmp_iff_easy}
Let
$(\vw^{\star},b^{\star},\vxi^{\star},\valpha^{\star},\vbeta^{\star})$
be a primal-dual optimal point for the WSVM problem
with instance weights $\vc \in \Rb_{+}^{n}$, not all zero. If
\begin{align*}
\Null(\mY\mK\mY) \cap \ones^{\perp} \cap \vy^{\perp} = \{\zeros\} ,
\end{align*}
then there exists a choice of $C$, $\gamma$, and $\{\cvx_i\}_{i=1}^{n}$
such that $(\vw^{\star},b^{\star})$ is optimal for the SVM$+$ problem iff:
\begin{align}
\label{eq:svmw_to_svmp_iff_easy}
\rho(\valpha^{\star}, \vxi^{\star}) =
    {\vxi^{\star}}^{\top} \big( \Id - \tfrac{1}{n}\ones\ones^{\top} \big)
    \valpha^{\star} \geq 0 .
\end{align}
\end{proposition}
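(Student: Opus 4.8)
The strategy is to show that, under the non-degeneracy hypothesis $\Null(\mY\mK\mY) \cap \ones^{\perp} \cap \vy^{\perp} = \{\zeros\}$, the family of equivalent weights $\Wc$ of Definition~\ref{def:equiv_weights} is ``small enough'' in the relevant direction that checking condition~(\ref{eq:svmw_to_svmp_iff}) for the single vector $\vc = \valpha^{\star}$ is both necessary and sufficient. Theorem~\ref{thm:svmw_to_svmp_iff} already gives the equivalence ``$\exists\,\vc\in\Wc:\rho(\vc,\vxi^{\star})\geq 0$''; what remains is to argue that this existential statement collapses to the specific choice $\valpha^{\star}$. The key observation is that $\valpha^{\star}\in\Wc$ always — indeed by Corollary~\ref{cor:equiv_weights} there is a member $\vc'\in\Wc$ with $\vc'=\valpha^{\star}$ (taking $\vbeta'=\zeros$), so $\valpha^{\star}$ is a legitimate candidate and sufficiency of~(\ref{eq:svmw_to_svmp_iff_easy}) is immediate. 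The work is all in the \emph{only if} direction: I must show that if \emph{some} $\vc\in\Wc$ has $\rho(\vc,\vxi^{\star})\geq 0$, then already $\rho(\valpha^{\star},\vxi^{\star})\geq 0$.

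\textbf{Key steps.}
First I would take an arbitrary $\vc=\vmu+\vnu\in\Wc$ with $\vmu\in\Uc$, $\vnu\in\Vc$ and compare it with $\valpha^{\star}$. Write $\vc-\valpha^{\star}=(\vmu-\valpha^{\star})+\vnu$. For the $\vnu$ part: since $\nu_i\xi^{\star}_i=0$ for all $i$, we get $\inner{\vnu,\vxi^{\star}}=0$; and $\inner{\ones,\vnu}\geq 0$ contributes only through $\bar c$, but one checks this contribution to $\rho$ is $-\tfrac1n\inner{\ones,\vnu}\inner{\ones,\vxi^{\star}}\leq 0$, so dropping $\vnu$ can only \emph{decrease} $\rho$ — hence it suffices to treat the case $\vnu=\zeros$, i.e. $\vc=\vmu\in\Uc$. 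Next, for $\vmu\in\Uc$: I would compute $\rho(\vmu,\vxi^{\star})-\rho(\valpha^{\star},\vxi^{\star})$. Using $\xi^{\star}_i=h_i$ (true at the WSVM optimum, as noted after Corollary~\ref{cor:svmp_from_svmw_necessary}) together with the complementarity $\mu_i(\xi^{\star}_i-h_i)=0$, which is automatic here, the relevant difference reduces to $\inner{\vmu-\valpha^{\star},\vxi^{\star}} - \tfrac1n\inner{\ones,\vmu-\valpha^{\star}}\inner{\ones,\vxi^{\star}}$, and by the defining constraint $\sum_i\mu_i=\sum_i\alpha^{\star}_i$ the second term vanishes. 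So it remains to show $\inner{\vmu-\valpha^{\star},\vxi^{\star}}=0$. Set $\vd\bydef\vmu-\valpha^{\star}$. From the $\Uc$ constraints and the KKT stationarity $\vw^{\star}=\sum_i\alpha^{\star}_i y_i\vz_i$, we have $\sum_i d_i y_i\vz_i=\zeros$ and $\sum_i d_i y_i=0$ and $\sum_i d_i=0$; the first of these says $\mY\mK\mY\vd=\zeros$ after pairing with the $\vz_j$'s (equivalently $\vd^\top\mY\mK\mY\vd=\|\sum_i d_iy_i\vz_i\|^2=0$), so $\vd\in\Null(\mY\mK\mY)\cap\ones^{\perp}\cap\vy^{\perp}$. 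By hypothesis this set is $\{\zeros\}$, hence $\vd=\zeros$, so in fact $\vmu=\valpha^{\star}$ and a fortiori $\inner{\vd,\vxi^{\star}}=0$.

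\textbf{Finishing and the main obstacle.}
Combining the two reductions: for any $\vc\in\Wc$, $\rho(\vc,\vxi^{\star})\leq\rho(\valpha^{\star},\vxi^{\star})$ — in fact under the hypothesis the only ``interesting'' member of $\Wc$ is $\valpha^{\star}$ itself up to the harmless $\vnu$ slack. Therefore $\exists\,\vc\in\Wc:\rho(\vc,\vxi^{\star})\geq 0$ holds if and only if $\rho(\valpha^{\star},\vxi^{\star})\geq 0$, and plugging $\vc=\valpha^{\star}$ into~(\ref{eq:svmw_to_svmp_iff}) and expanding $\bar c\ones=\tfrac1n\ones\ones^\top\valpha^{\star}$ gives exactly the matrix form in~(\ref{eq:svmw_to_svmp_iff_easy}); the conclusion then follows from Theorem~\ref{thm:svmw_to_svmp_iff}. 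I expect the main obstacle to be the bookkeeping in the first reduction — verifying carefully that splitting off $\vnu\in\Vc$ never turns a negative $\rho$ into a nonnegative one, since $\vnu$ affects $\rho$ only through the mean $\bar c$ and one must make sure the sign works out (it does, because $\inner{\vnu,\vxi^{\star}}=0$ while $\inner{\ones,\vnu}\geq 0$ and $\inner{\ones,\vxi^{\star}}\geq 0$). The step identifying $\vd\in\Null(\mY\mK\mY)$ from $\sum_i d_i y_i\vz_i=\zeros$ is the other place to be careful, but it is the standard ``zero norm in feature space'' argument and the non-singularity hypothesis does the rest.
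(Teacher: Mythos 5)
Your proof is correct and essentially identical to the paper's: sufficiency because $\valpha^{\star}\in\Wc$ combined with Theorem~\ref{thm:svmw_to_svmp_iff}, and necessity because the null-space hypothesis forces $\Uc=\{\valpha^{\star}\}$ (your $\vd\in\Null(\mY\mK\mY)\cap\ones^{\perp}\cap\vy^{\perp}=\{\zeros\}$ argument is exactly the content of the paper's appeal to Proposition~\ref{prop:unique_dual}), while any $\vnu\in\Vc$ contributes $-\tfrac{1}{n}\inner{\ones,\vnu}\inner{\ones,\vxi^{\star}}\le 0$ to $\rho$, so the supremum of $\rho$ over $\Wc$ is attained at $\vc=\valpha^{\star}$. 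The only blemish is a wording slip: by your own sign computation, dropping $\vnu$ can only \emph{increase} $\rho$, not decrease it, which is precisely why checking $\vc=\valpha^{\star}$ alone suffices.
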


Intuitively, the SVM$+$ algorithm maximizes the margin $2 \norm{\vw}^{-1}$
by minimizing $F(\valpha)$, as in the standard SVM, and also
gradually shifts focus to hard examples by minimizing $\cF(\cvalpha)$.
As long as there are sufficiently many points
on the ``right'' side of the margin,
(\ref{eq:svmw_to_svmp_iff_w}) can be achieved by reducing the weight
of such non-support vectors, and so the SVM$+$ solution space is
as rich as that of the WSVM.
In general, however, (\ref{eq:svmw_to_svmp_iff_w}) may
not be attainable without altering the $f$ as demonstrated by the counter
example below.

\subsection{WSVM solution not found by SVM+}
\label{sec:example_svmw_not_svmp}

We now consider the case when misclassified training points have
low weight, i.e., $\rho(\vc, \vxi^{\star}) < 0$, and give an example
where SVM$+$ fails to find the corresponding WSVM solution.

Consider the training sample below (Figure~\ref{fig:counter_example}):
\begin{align*}
S &= \{(1, +1), \: (2, -1), \: (3, +1)\}, & \vc & = (4, 6, 2)^{\top} .
\end{align*}
The corresponding primal-dual optimal point is
\begin{align*}
w^{\star} &= -2, &
    \vxi^{\star} &= (0, 0, 4)^{\top}, &
    \valpha^{\star} &= (4, 6, 2)^{\top}, \\
b^{\star} &= 3, &&&
    \vbeta^{\star} &= (0, 0, 0)^{\top}.
\end{align*}

Since $\rho(\vc, \vxi^{\star}) = - \frac{2}{3} < 0$, this solution does not
correspond to any of the SVM$+$ solutions (Lemma~\ref{lem:svmp_necessary}).
Note that one can easily verify that
$\Null(\mY\mK\mY) \cap \ones^{\perp} \cap \vy^{\perp}$
contains only $\zeros$, hence, Proposition~\ref{prop:svmw_to_svmp_iff_easy}
already completes the claim. Similarly, one can show using
Definition~\ref{def:equiv_weights} that $\Uc = \{\valpha^{\star}\}$
and that other equivalent weights can only increase the weight
of points 1 and 2, which would only decrease $\rho(\vc, \vxi^{\star})$.
Therefore, there is no $\vc' \in \Wc$ for which
$\rho(\vc',\vxi^{\star}) \geq 0$ and,
by Theorem~\ref{thm:svmw_to_svmp_iff},
there is no correcting space that would make
$(w^{\star}, b^{\star}) = (-2, 3)$ an SVM$+$ solution.

\begin{figure}[ht]
\centering
\centerline{\includegraphics[width=\columnwidth]{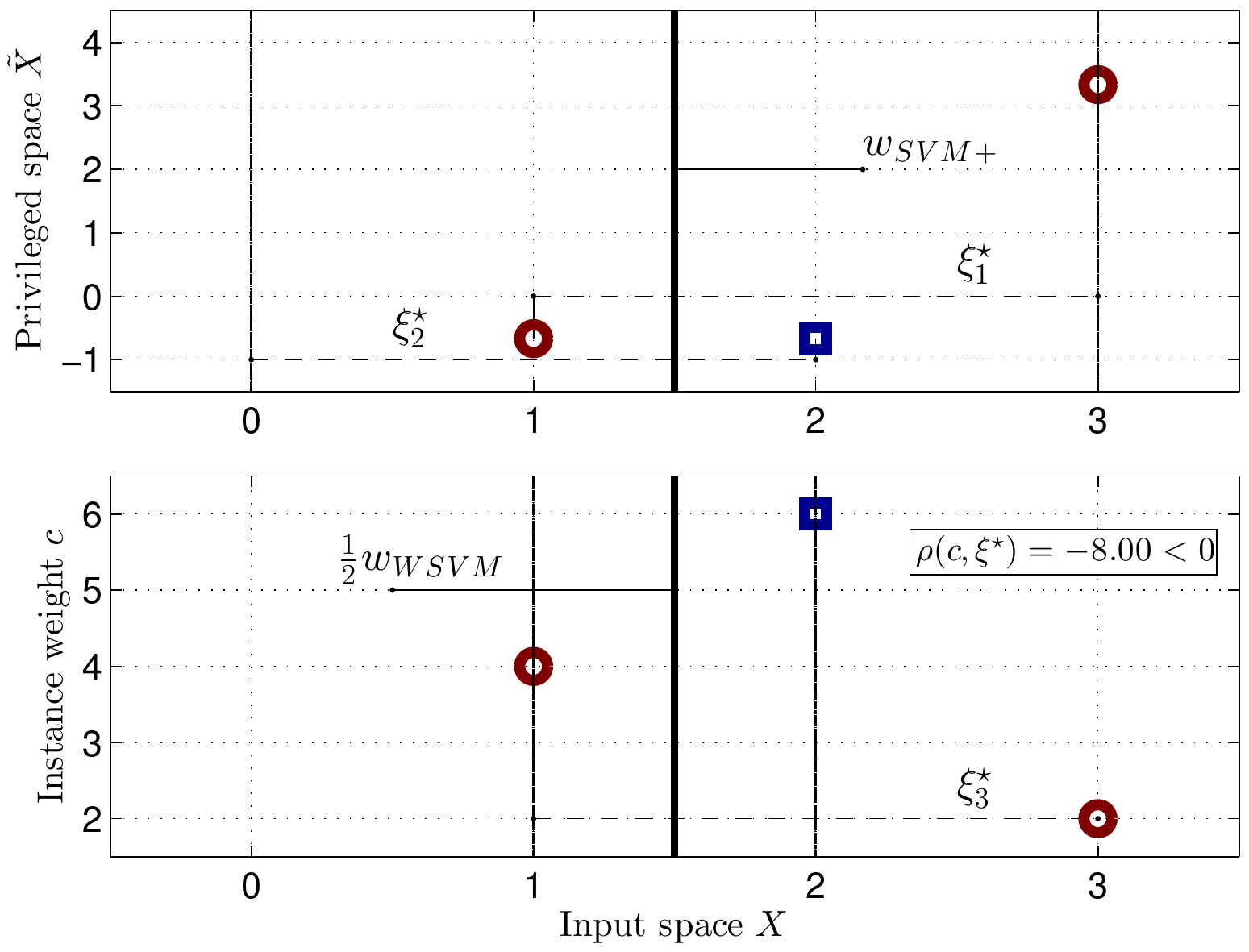}}
\caption{An example of a WSVM solution (bottom)
that cannot be found by SVM$+$ (top).
The instance weights $c_i$ are chosen in a way to avoid a zero norm
constant classifier ($f = +1$).
The resulting weighted average loss is less than the non-weighted one,
hence the SVM$+$ cannot find this solution.
Computing the privileged features as in (\ref{eq:svmw_to_svmp_feat})
leads to an SVM$+$ solution with the opposite prediction
and a higher value of the weighted average loss.}
\label{fig:counter_example}
\end{figure}

Figure~\ref{fig:counter_example} shows the learned WSVM and SVM$+$ models,
where we used
$\cx_i = \xi^{\star}_i - \inner{\vc, \vxi^{\star}} / \inner{\vc,\ones}$,
$C = \bar{c}$, $\gamma = 1$.
A different choice of $C$ and $\gamma$ can make SVM$+$ return
a constant classifier, which is the solution of the standard SVM,
but there is no setting that would make it return
$(w^{\star}, b^{\star}) = (-2, 3)$.

Note that in this example an even stronger result can be shown:
SVM$+$ cannot reproduce the same \emph{type} of dichotomy, i.e.,
even if we allowed it to return a line with \emph{any} negative slope
going through the same point, the SVM$+$ would still fail.
This shows that there are settings where WSVM performs significantly
better than SVM$+$ due to a fundamental constraint of the latter.

\subsection{Is there an \texorpdfstring{SVM$-$}{SVM-}?}
\label{sec:svmminus}

We have seen that the SVM$+$ has a more constrained solution space
than the weighted SVM.
Lemma~\ref{lem:svmp_necessary} gives the exact characterization of
that constraint in terms of the relation between the SVM$+$ dual
variables and the incurred loss on the training sample.
The WSVM solution space can thus be partitioned into solutions
that can be found by SVM$+$ and the rest.
We are now interested if there is a modification to the SVM$+$
algorithm that would yield solutions from that second part.

Theorem~\ref{thm:svmw_to_svmp_iff} suggests that
$\gamma = \rho(\vc, \vxi^{\star}) \geq 0$, so, intuitively,
if we now require $\rho(\vc, \vxi^{\star}) < 0$, the corresponding
$\gamma$ has to be with a minus:
\begin{equation}
\label{eq:svmm_primal}
\begin{aligned}
\min_{\vw, b, \cvw, \cb} & \quad
    \frac{1}{2} (\norms{\vw} - \gamma \norms{\cvw})
    + C \sum_{i=1}^{n} \xi_i(\cvw, \cb) \\
\st & \quad y_i (\inner{\vw, \vz_i} + b) \geq 1 - \xi_i(\cvw, \cb) \\
    & \quad \xi_i(\cvw, \cb) \geq 0
\end{aligned}
\end{equation}

This problem is clearly non-convex as the objective is now
a difference of convex functions. If there was a finite (local)
minimizer $(\vw^{\star},b^{\star},\cvw^{\star},\cb^{\star})$,
the KKT conditions would still hold
\citep[Theorem~2.3.8]{borwein:2000:convex}
for a Lagrange multiplier vector $(\valpha^{\star},\vbeta^{\star})$,
and one could show a result similar to Lemma~\ref{lem:svmp_necessary},
but with the reverse inequality.

Unfortunately, however, the problem (\ref{eq:svmm_primal})
is unbounded below, which is easy to see: the quadratic term
$\norms{\cvw}$ grows faster than the linear term $\xi_i(\cvw, \cb)$
and the feasible set is unbounded.
This shows that it is not trivial to modify the SVM$+$ algorithm
to obtain solutions from its complement, and it is an open question
if such a modification (with non-degenerate solutions) exists at all.

The phenomenon we observe here is that some of the WSVM solutions
($\rho(\vc, \vxi^{\star}) \geq 0$) can be computed easily
within the LUPI framework, while others ($\rho(\vc, \vxi^{\star}) < 0$)
may be completely out of reach.
What are the implications of this observation
in terms of learning a classifier?

Consider any training sample $S$ of size $n$ for a problem $\Prob$.
Let $f_{\vc,S}$ be a classifier constructed by the WSVM with weights $\vc$,
and let $\vxi^{\star}_{\vc,S}$ be the corresponding loss vector.
The set of all admissible weights $\Rb_{+}^{n}$
is partitioned into two subsets, $\Wc_+$ and $\Wc_-$,
depending on the sign of $\rho(\vc, \vxi^{\star}_{\vc,S})$.
Define the ``best'' weight vectors in each of the two classes as
$\vc_\pm = \argmin_{\vc \in \Wc_\pm} L(f_{\vc,S})$.
If $L(f_{\vc_-,S}) < L(f_{\vc_+,S})$, then the best classifier
corresponds to the weights that are out of reach for the SVM$+$, hence,
there are no privileged features that will yield an SVM$+$ classifier
as good as $f_{\vc_-,S}$.

This reasoning motivated us to consider weight generation schemes
that are unrelated to SVM$+$ and which are discussed next.

\section{How to choose the weights}
\label{sec:choose_weights}

Recall that we are interested in ways of incorporating prior knowledge
about the \emph{training data}.
In the SVM$+$ approach, the role of additional information is played by the
privileged features which are used to estimate the loss on the training sample.
The same effect, as we have established, can be achieved by importance
weighting. Taking into account the vast amount of work on weighted learning,
it seems that re-weighting of misclassification costs
is a very powerful method of incorporating prior knowledge.
We would like to stress, however, that a critical difference to, e.g.,
the cost-sensitive learning is that we are ultimately
interested in minimizing the \emph{non-weighted} expected loss and
\emph{the weights are only used to impose a bias on the learning algorithm}.

We also note that even though the SVM$+$ solutions are contained
within the WSVM solutions, there is \emph{no} implication that
any of the two algorithms is ``better''.
If privileged features are available, then SVM$+$ is a reasonable choice.
On the other hand, if there are no privileged features or if one has
concerns outlined at the end of Section~\ref{sec:svmminus}, then one
may want to consider a more general WSVM with some problem specific
scheme for computing weights.

In the following, we investigate two approaches that make
different assumptions about what is additionally available to the learning
algorithm at training time. The methods operate in a somewhat idealized
setting and are mainly aimed at motivating further research on
how to choose the weights.
They may be thought of as the empirical counterparts of a more
theoretical discussion involving the Oracle SVM in \citep{vapnik:2009:lupi}.
In particular, the weight learning method of
Section~\ref{sec:weight_learning} can be thought of as a way of
extracting additional information about the given training sample
from a validation sample which is used as a reference.

\subsection{Why instance weighting is important?}
\label{sec:instance_weighting_important}

Let us first motivate why instance weighting can be very important
in certain problems.

\begin{figure}[ht]
\centering
\centerline{\includegraphics[width=\columnwidth]{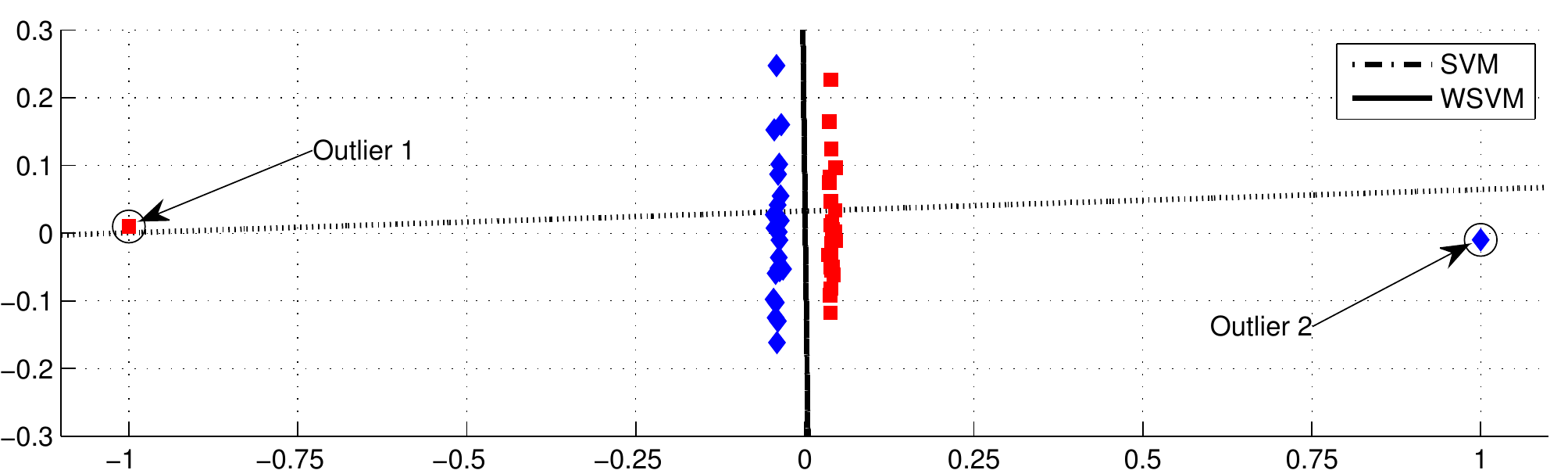}}
\caption{Illustration of the effect of instance weighting
on a toy problem in 2D.
Even though the problem is (almost) linearly separable,
the two outliers in the training set cause the SVM
to have a near chance level performance (horizontal line).
Assigning zero weight to the outliers allows the WSVM
to recover a near optimal solution (vertical line).}
\label{fig:toy_dataset1}
\end{figure}

Consider the toy problem shown in Figure~\ref{fig:toy_dataset1}.
The data comes from two linearly separable blobs, so it is possible
to achieve zero test error on them.
However, the training sample has been
contaminated with two outliers that lie extremely far from the optimal
decision boundary. Since the SVM uses a surrogate loss and not the 0-1 loss,
the cost of a point is higher the further the point is from the
separating hyperplane. Hence, the SVM ``prefers'' to keep the two outliers
close to the decision boundary, which leads to a near chance level
performance on this data set.
Instance weighting, on the other hand, allows one to alter the cost
of each point. In particular, if the two outliers are assigned zero weight,
then the WSVM is able to find a near optimal classifier.

\begin{figure}[ht]
\centering
\centerline{\includegraphics[width=\columnwidth]{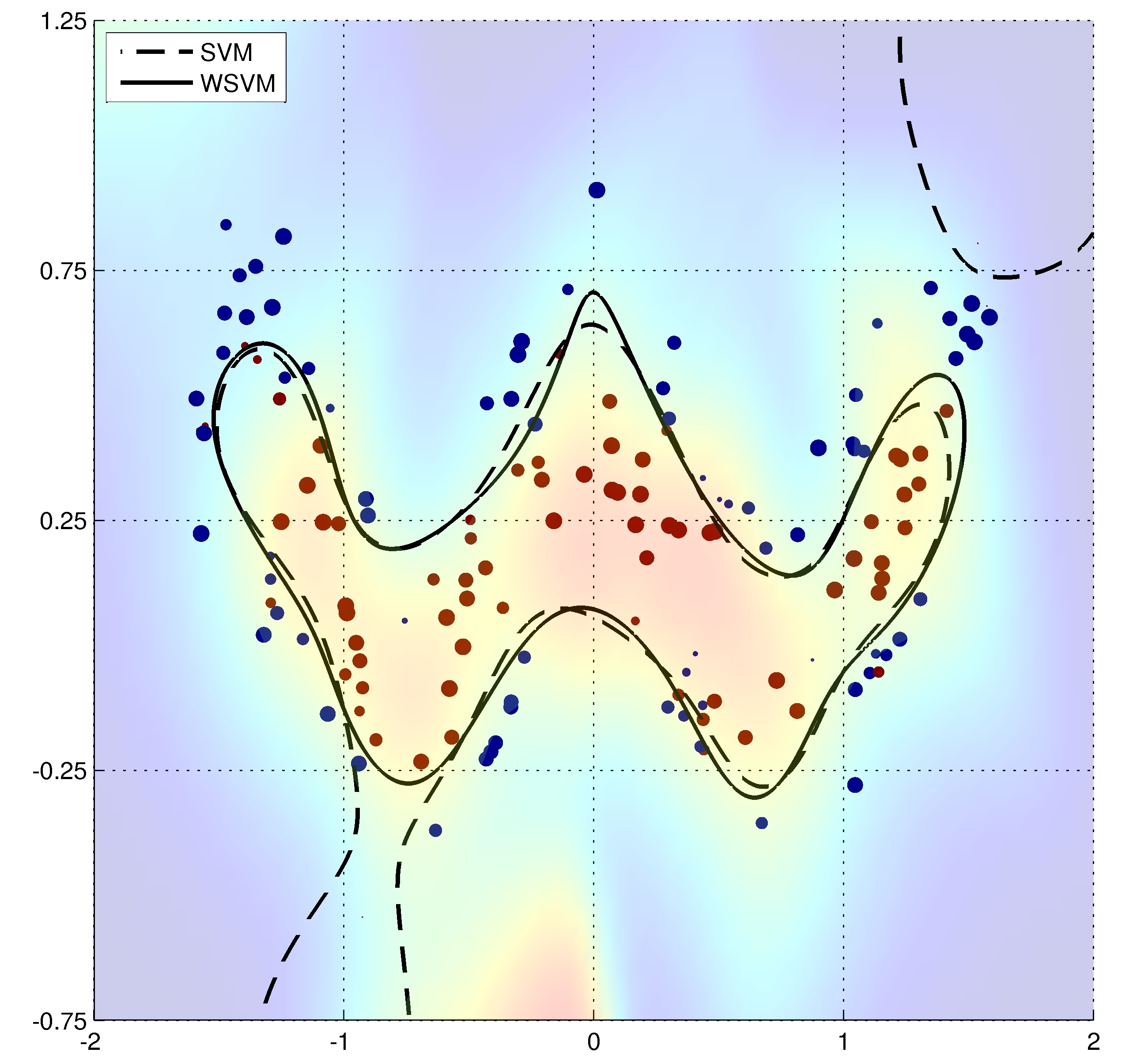}}
\caption{Importance weighting leads to a more stable estimate
of the decision boundary in a non-linear 2D problem.
The size of a data point corresponds to its weight, which is computed
from an estimate of $\Prob(Y=1|X)$ shown in background.
The WSVM (solid line) is less influenced by outliers
than SVM (dashed line) since the outliers are downweighted,
which ultimately results in better predictive performance.}
\label{fig:toy_dataset2}
\end{figure}

The second toy problem shown in Figure~\ref{fig:toy_dataset2} suggests
that an estimate of $\Prob(Y=1|X)$ could be used to compute
instance weights and improve predictive performance even
in the non-linear case, where the aforementioned problem of extreme
outliers is less likely to happen.
As before, the issue evolves around the points that lie either
too close to or even on the wrong side of the true decision boundary.
We used the standard Nadaraya-Watson estimator
(\ref{eq:nadaraya_watson}) to obtain an estimate of the
conditional probability (shown in background), which was then
used to compute instance weights (reflected by the size of points)
using the formula (\ref{eq:wgt:weights}) introduced below.
Note that the outliers are downweighted and have less
influence on the WSVM decision boundary (solid line)
than on the SVM one (dashed line). That leads to better accuracy,
as reported in Section~\ref{sec:exp:toy_data}.

\subsection{Access to an estimate of \texorpdfstring{$\Prob(Y=1|X)$}{P(Y|X)}}
\label{sec:access_cond_prob}

Clearly, having full access to the conditional probability $\Prob(1|X)$ is a
hypothetical scenario since in this case the classification problem is solved.
However, it is interesting to see how this type of information could be used in
construction of good weights.
As the first step, we note that if $\Prob(1|X)$ were available
at least for the \emph{training} points
one could directly compute the conditional
expectation and employ the following estimator
\begin{align*}
L'(f) \bydef & \frac{1}{n} \sum_{i=1}^{n} \big[ \loss(f(X_i)) \Prob(1|X_i) \\
     & + \loss(-f(X_i)) \Prob(-1|X_i) \big] ,
\end{align*}
which is an unbiased estimator of $L(f)$:
\begin{align*}
\Exp L'(f) &= \Exp\left[ \loss(f(X)) \Prob(1|X) + \loss(-f(X)) \Prob(-1|X)
\right] \\
 &= \Exp \Exp\left[ \loss(Y f(X)) | X \right] = \Exp \loss(Yf(X)) = L(f) .
\end{align*}

The property of being biased or not is of asymptotic nature and is arguably of
lesser interest in the small sample regime. Following this line of argument, we
consider a conservative weighted estimator given by:
\begin{align}
\hat{L}_{w}(f) &\bydef
\frac{1}{n} \sum_{i=1}^{n} w(X_i,Y_i) \loss(Y_i f(X_i)) ,
\label{eq:wgt:estimator} \\
w(X_i,Y_i) &\bydef \Prob(Y=Y_i|X=X_i).
\label{eq:wgt:weights}
\end{align}
It is not hard to check that $\hat{L}_{w}(f)$ is biased:
\begin{align*}
& \Exp \hat{L}_{w}(f) \\
& \quad = \Exp \Exp\left[ w(X, Y) \loss(Y f(X)) | X \right] \\
& \quad = \Exp\left[ \loss(f(X)) \Prob(1|X)^2 
    + \loss(-f(X)) \Prob(-1|X)^2 \right] \\
& \quad \leq \Exp\left[ \loss(f(X)) \Prob(1|X) 
    + \loss(-f(X)) \Prob(-1|X) \right] \\
& \quad = \Exp \Exp\left[ \loss(Y f(X)) | X \right] = L(f) .
\end{align*}
More precisely, $\hat{L}_{w}(f)$ is \emph{conservative}
in the sense that the points
far from the decision boundary are upweighted, while the points with
$\Prob(1|X) \approx 0.5$ receive relatively low weight.
This behavior is due to the $p \mapsto p^2$ transform
which is monotonically increasing and is strictly convex on $[0,1]$.
The monotonicity also ensures the following
important property of the obtained estimator when $\loss$ is the 0-1 loss:
\begin{align*}
\argmin_{f} \Exp \hat{L}_{w}(f) = f^* = \argmin_{f} \Exp L(f) ,
\end{align*}
that is, the $\hat{L}_{w}$ is minimized by the Bayes classifier
and \emph{the learning problem is not changed}.

If the bias of $\hat{L}_{w}$ is a concern, one can let the weights decay to one
as the size of the training sample increases. To this end, we consider the
following generalization of the weight function in (\ref{eq:wgt:weights}):
\begin{align}
c_{\tau}(X_i,Y_i) &\bydef w^{\tau}(X_i,Y_i) , \label{eq:wgt:weights_alpha}
\end{align}
where $\tau \in [0, \infty)$ is tuned along with the standard regularization
parameter. Note that SVM is recovered when the weights are given by
$c_0(X_i,Y_i) \equiv 1$.

When $\Prob(Y=1|X)$ is estimated from a training sample, the WSVM with weights
given by (\ref{eq:wgt:weights_alpha}) will mainly serve as a baseline for the
method introduced in the following section. However, it is conceivable that an
estimate of $\Prob(Y=1|X)$ could be available from a different source, e.g.,
from annotations provided by humans. The latter setting is evaluated in
Section~\ref{sec:exp:5_vs_8}.

\subsection{Learning the weights}
\label{sec:weight_learning}

Given a fixed training sample $S$, the weights in a weighted SVM
parametrize the set of hypotheses that the WSVM can choose from. Hence,
they could be learned within the standard framework of risk minimization
with the additional twist that the classifier $f$ depends on
the weights $\vc$ \emph{implicitly}:
\begin{align}
\vc^{\star} &= \argmin_{\vc \in \Rb_{+}^{n}} \Exp \loss(Y f_{\vc}(X)) ,
\label{eq:wlearning:wopt_risk} \\
f_{\vc} &= \argmin_{f \in \Hc} \frac{1}{2} \norms{f} + \sum_{i=1}^{n} c_i
\loss(y_i f(\vx_i)) . \label{eq:wlearning:fopt_risk}
\end{align}

Clearly, the optimization problem (\ref{eq:wlearning:wopt_risk}) cannot be
solved in practice since the underlying probability distribution
is unknown, hence, we replace $L(f)$ in
(\ref{eq:wlearning:wopt_risk}) with an estimator.
The latter, however, has to be different from the estimator
$\hat{L}_{w}$ in (\ref{eq:wlearning:fopt_risk}) to avoid overfitting.
We follow a simple approach and assume that a second sample $S'$
is available at training time.
The problem (\ref{eq:wlearning:wopt_risk}) is thus replaced with
\begin{align*}
\vc^{\star} &= \argmin_{\vc \in \Rb_{+}^{n}} \sum_{i=1}^{N} \loss(y'_i
f_{\vc}(\vx'_i)) .
\end{align*}

This idea follows the method of \cite{chapelle:2002:choosing} who suggested to
tune L2-SVM parameters by minimizing certain estimates of the generalization
error using a gradient descent algorithm. The use of the $L_2$ penalization of
the training errors allows one to additionally assume the hard margin case
which leads to a very specific derivation of the gradient
w.r.t.\ the parameters.
Instead, we proceed with a different approach and use a smooth version
of the hinge loss given below in (\ref{eq:smoothloss}).
Furthermore, we optimize (\ref{eq:wlearning:fopt_risk})
in the primal as suggested by \cite{chapelle:2007:primal}.
The weight learning problem can thus be stated as follows.
\begin{align}
\vc^{\star} &= \argmin_{\vc \in \Rb_{+}^{n}} \sum_{i=1}^{N} \loss(y'_i
[\bar{K}_{i}^{\top} \valpha^{\star}(\vc) + b^{\star}(\vc)]) ,
\label{eq:wlearning:wopt} \\
\begin{bmatrix}
\valpha^{\star} \\
b^{\star}
\end{bmatrix}
    &= \argmin_{\valpha, b} \frac{1}{2} \valpha^{\top}
\mK \valpha + \sum_{i=1}^{n} c_i \loss(y_i [K_{i}^{\top} \valpha + b]) ,
\label{eq:wlearning:fopt}
\end{align}
where $\bar{\mK}$ is the matrix with $\bar{\mK}_{ij} = k(\vx_i,\vx'_j)$,
and $K_i$, $\bar{K}_i$ are the $i^{th}$ columns of $\mK$ and $\bar{\mK}$.

Note that $f$ depends on the weights implicitly via the second optimization
problem and the main challenge in applying the gradient descent is the
computation of $\partial \valpha^{\star} / \partial \vc$
and $\partial b^{\star} / \partial \vc$.
These can be computed via implicit differentiation from the
optimality conditions as shown below.

\begin{theorem}
\label{thm:derivatives}
Let the loss function $\loss$ be convex and twice continuously differentiable
and let the kernel matrix $\mK$ be (strictly) positive definite. Define vectors
$\vu$ and $\vv$ componentwise for $i = 1, \ldots, n$ as
\begin{align*}
u_i &\bydef y_i \loss'(y_i [K_{i}^{\top} \valpha^{\star} + b^{\star}]) , \\
v_i &\bydef c_i \loss''(y_i [K_{i}^{\top} \valpha^{\star} + b^{\star}]) ,
\end{align*}
where $(\valpha^{\star}, b^{\star})$ is a solution of (\ref{eq:wlearning:fopt})
for a given $\vc$.
If $\vv \neq \zeros$, then the solution is unique,
$\valpha^{\star}$ and $b^{\star}$ are continuously differentiable w.r.t.\ $\vc$
and the corresponding gradient can be computed as follows.
\begin{align}
\label{eq:gradient}
\begin{bmatrix}
\frac{\partial \valpha^{\star}}{\partial \vc} \\
\frac{\partial b^{\star}}{\partial \vc}
\end{bmatrix}
= -
\begin{bmatrix}
\Id + \diag(\vv) \mK & \vv \\
\ones^{\top} & 0
\end{bmatrix}^{-1}
\begin{bmatrix}
\diag(\vu) \\
\zeros^{\top}
\end{bmatrix}
\end{align}
\end{theorem}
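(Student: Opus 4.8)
The plan is to apply the implicit function theorem to the optimality conditions of the inner problem~(\ref{eq:wlearning:fopt}), treating $\vc$ as a parameter. Since $\loss$ is convex and twice continuously differentiable and $\mK$ is strictly positive definite, the objective of~(\ref{eq:wlearning:fopt}) is a convex function of $(\valpha,b)$, so a minimizer is characterized by the stationarity conditions
\begin{align}
\label{eq:proofprop_stationarity}
\mK\valpha^{\star} + \sum_{i=1}^{n} c_i \loss'(y_i[K_i^{\top}\valpha^{\star}+b^{\star}])\, y_i K_i = \zeros,
\qquad
\sum_{i=1}^{n} c_i \loss'(y_i[K_i^{\top}\valpha^{\star}+b^{\star}])\, y_i = 0.
\end{align}
Using the definition of $\vu$ and writing $K_i$ as the $i$th column of $\mK$, the first equation becomes $\mK\big(\valpha^{\star} + \sum_i c_i\, \tilde u_i\, e_i\big)=\zeros$ after collecting columns, where the key observation is that the term $\sum_i c_i \loss'(\cdot) y_i K_i = \mK \operatorname{diag}(\vc)\,\tilde{\vu}$ with $\tilde u_i = y_i\loss'(y_i[K_i^\top\valpha^\star+b^\star])$; since $\mK$ is invertible this reduces cleanly to $\valpha^{\star} + \operatorname{diag}(\vc)\tilde{\vu} = \zeros$. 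I would thus define the map $G:(\valpha,b;\vc)\mapsto$ the left-hand sides of the two reduced equations and compute its Jacobian in $(\valpha,b)$.

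Next I would differentiate $G$ with respect to $(\valpha,b)$. Differentiating the reduced first block $\valpha + \operatorname{diag}(\vc)\tilde{\vu}(\valpha,b)$ in $\valpha$ produces $\Id + \operatorname{diag}(\vc)\operatorname{diag}(\loss''(\cdot))\,\mK = \Id + \operatorname{diag}(\vv)\mK$ via the chain rule (each $\tilde u_i = y_i\loss'(y_i[K_i^\top\valpha+b])$ has $\partial/\partial\valpha = y_i^2\loss''(\cdot)K_i^\top = \loss''(\cdot)K_i^\top$ since $y_i^2=1$), and differentiating in $b$ gives $\operatorname{diag}(\vc)\operatorname{diag}(\loss''(\cdot))\ones = \vv$ after the same $y_i^2=1$ simplification. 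Differentiating the second (offset) block in $\valpha$ gives $\ones^{\top}\operatorname{diag}(\vv)\mK \cdot$---wait, more carefully: the offset equation is $\ones^{\top}\operatorname{diag}(\vc)\tilde{\vu}=0$, whose $\valpha$-derivative is $\ones^{\top}\operatorname{diag}(\vv)\mK$ and whose $b$-derivative is $\ones^{\top}\vv$; I will instead use the unreduced offset equation whose derivative rows are $(\ones^{\top},0)$ as written in~(\ref{eq:gradient}), which is the cleaner bookkeeping and is consistent after left-multiplying the first block appropriately. The resulting Jacobian in $(\valpha,b)$ is exactly the matrix
\begin{align*}
J = \begin{bmatrix} \Id + \operatorname{diag}(\vv)\mK & \vv \\ \ones^{\top} & 0 \end{bmatrix},
\end{align*}
and differentiating $G$ with respect to $\vc$ (only $\valpha^{\star}+\operatorname{diag}(\vc)\tilde{\vu}$ depends on $\vc$ explicitly, through the factor $\operatorname{diag}(\vc)$ multiplying $\tilde u_i$) yields $\operatorname{diag}(\tilde{\vu}) = \operatorname{diag}(\vu)$ in the first block and $\zeros^{\top}$ in the offset row. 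Then $\partial[\valpha^{\star};b^{\star}]/\partial\vc = -J^{-1}[\operatorname{diag}(\vu);\zeros^{\top}]$, which is~(\ref{eq:gradient}).

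Three points need care and constitute the bulk of the work. First, I must verify $J$ is invertible when $\vv\neq\zeros$: since $\loss$ is convex, $\loss''\geq 0$, so $\operatorname{diag}(\vv)\succeq 0$; writing $D=\operatorname{diag}(\vv)$, a block computation shows $J$ is nonsingular iff $\begin{bmatrix}\Id+D\mK & \vv\\ \ones^\top & 0\end{bmatrix}$ has trivial kernel, which one checks by supposing $(\Id+D\mK)\vx + b\vv=\zeros$ and $\ones^\top\vx=0$, multiplying by $\mK$ and using positive definiteness together with $\vv = D\ones$; the condition $\vv\neq\zeros$ (equivalently $D\neq 0$) is what breaks the degeneracy that would otherwise leave the offset undetermined, mirroring the non-uniqueness phenomenon of Theorem~\ref{thm:svmw_unique}. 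Second, uniqueness of $(\valpha^{\star},b^{\star})$: strict positive definiteness of $\mK$ makes the objective strictly convex in $\valpha$, pinning $\valpha^{\star}$; strict convexity in $b$ along the relevant direction then follows once $\vv\neq\zeros$ since that forces $\loss''>0$ at some active point, so the reduced one-dimensional problem in $b$ is strictly convex. Third, the differentiability claim is the implicit function theorem applied to $G$, which is $C^1$ in all arguments because $\loss$ is $C^2$; invertibility of $J$ supplies the hypothesis. The main obstacle I anticipate is the bookkeeping in the second step---correctly reconciling the reduced stationarity form (which naturally produces rows $(\ones^{\top}\operatorname{diag}(\vv)\mK,\ \ones^{\top}\vv)$ in the offset block) with the stated clean form $(\ones^{\top},0)$; this requires recognizing that left-multiplying the system by a suitable invertible block matrix (or equivalently, not reducing the first equation by $\mK^{-1}$) transforms one into the other without changing the solution $\partial[\valpha^{\star};b^{\star}]/\partial\vc$, and checking the $y_i^2=1$ cancellations are applied consistently throughout.
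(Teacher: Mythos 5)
Your proposal follows essentially the same route as the paper's proof: implicitly differentiate the stationarity conditions of (\ref{eq:wlearning:fopt}) after cancelling the invertible $\mK$, arrive at the Jacobian system whose natural offset row is $(\vv^{\top}\mK,\;\ones^{\top}\vv)$ with right-hand side $-\vu^{\top}$, and convert it to the stated form $(\ones^{\top},0)$ with right-hand side $\zeros^{\top}$ by an invertible row operation (the paper replaces the last equation by the sum of the first $n$ equations minus the last), with uniqueness handled exactly as you sketch. The only material difference is your invertibility argument for the Jacobian, which uses a kernel/positive-semidefiniteness argument in place of the paper's Schur-complement determinant factorization restricted to the support of $\vv$; both are valid.
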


Note that this result can be directly applied to such popular loss functions as
the squared hinge loss and the logistic loss, and for the latter it
will always hold that $\vv \neq \zeros$ unless all weights are zero.
If $\vv = \zeros$, it can be seen that $\valpha^{\star}$ is
still uniquely defined and is continuously differentiable w.r.t.\ $\vc$
if $b$ is considered fixed. The ``gradient'' in this case is given by
\begin{align}
\label{eq:gradient_vzero}
\partial \valpha^{\star} / \partial \vc &= \diag(\vu), &
\partial b^{\star} / \partial \vc &= \zeros^{\top} .
\end{align}

\begin{figure}[ht]
\centering
\centerline{\includegraphics[width=\columnwidth]{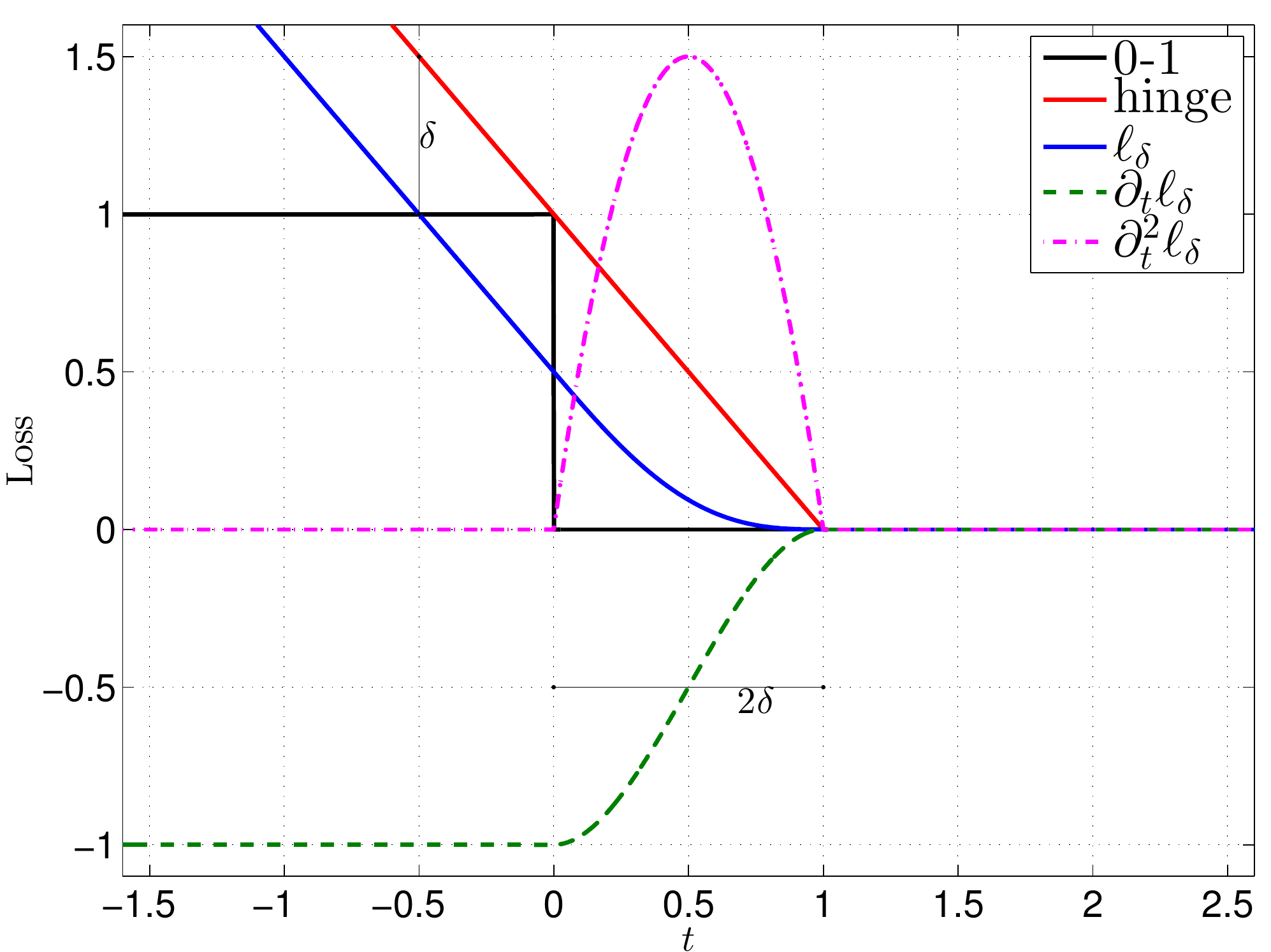}}
\caption{A 0-1 loss, a hinge loss, an approximate hinge loss $\loss_{\delta}$,
and its first two derivatives.}
\label{fig:hinge_losses}
\end{figure}

Ideally, to be consistent with the discussion about the relation between
the SVM$+$ and the WSVM, we would have to consider the hinge loss in the
weight learning problem. However, the hinge loss is not differentiable and
Theorem~\ref{thm:derivatives} does not apply. Instead, we consider a
differentiable approximation of the hinge loss that preserves
certain desirable properties of the latter.
We have chosen the loss function defined as follows
(Figure~\ref{fig:hinge_losses}).
\begin{align}
\label{eq:smoothloss}
\loss_{\delta}(t) \bydef
\begin{cases}
1 - t - \delta  &
    \mbox{if } t \leq 1 - 2 \delta \\
\frac{(1 - t)^3 (t - 1 + 4 \delta)}{16 \delta^3} &
    \mbox{if } 1 - 2 \delta < t < 1 \\
0  & \mbox{if } t \geq 1
\end{cases}
\end{align}
Note that, unlike certain other approximations, this function is twice
continuously differentiable. Like the hinge loss, it does not penalize points
with the margin $t \bydef y_i f(\vx_i) \geq 1$ and it grows linearly for
$t \leq 1 - 2 \delta$.

With the approximate hinge loss $\loss_{\delta}$ defined above, 
the $\vv \neq \zeros$ means that at least one of the data points
has to fall into the strictly convex region of the loss.
Clearly, this presents us with a tradeoff between having
a good approximation of the hinge loss (small $\delta$)
and a higher chance of being able to compute ``correct'' gradients
and thus make substantial progress in the optimization problem
(large $\delta$). We resolve the tradeoff by tuning
$\delta \in [0.01, 1]$ on a validation set.

\section{Experiments}
\label{sec:experiments}

In this section we present empirical evaluation of the algorithms
considered in this paper.
In our experiments, we used the implementation of the WSVM
by \citet{chang:2011:libsvm}
and the code for the SVM$+$ provided by \citet{pechyony:2011:fastopt}.
The weight learning problem was solved using our implementation
of the BFGS algorithm \citep{nocedal:2006:numopt}.
The general experimental setup is similar to that of
\citep{vapnik:2009:lupi}:
parameters are tuned on a validation set, which is not
used for training, and performance is evaluated on a test set.
Training subsets are randomly sampled from a fixed training set,
and results over multiple runs are aggregated showing the mean error rate
as well as the standard deviation.
Depending on the experiment, the validation set is either fixed or subsampled
randomly as well. The Gaussian RBF kernel is used in all of the experiments and
features are rescaled to be in $[0,1]$.
The weights in (\ref{eq:wgt:weights}) are computed from
$\eta(\vx) = 2 \Prob(1|\vx) - 1$,
which is either given directly by human experts or estimated via:
\begin{align}
\label{eq:nadaraya_watson}
\eta(\vx) = \frac{\sum_{i=1}^{n} K_h(\vx - \vx_i) y_i}
{\sum_{i=1}^{n} K_h(\vx - \vx_i)} ,
\end{align}
where $K_h$ is the Gaussian kernel with bandwidth $h$.

Note that in all experiments each algorithm has access to exactly
\emph{the same data}, and the only difference between different splits
is which data is used to construct a classifier (training)
and which is used to tune the hyper-parameters (validation).

\subsection{WSVM replicates SVM+}
\label{sec:exp:svmw_replicates_svmp}

\begin{figure*}[ht]
\centering
\centerline{\hfill
\includegraphics[width=0.33\textwidth]{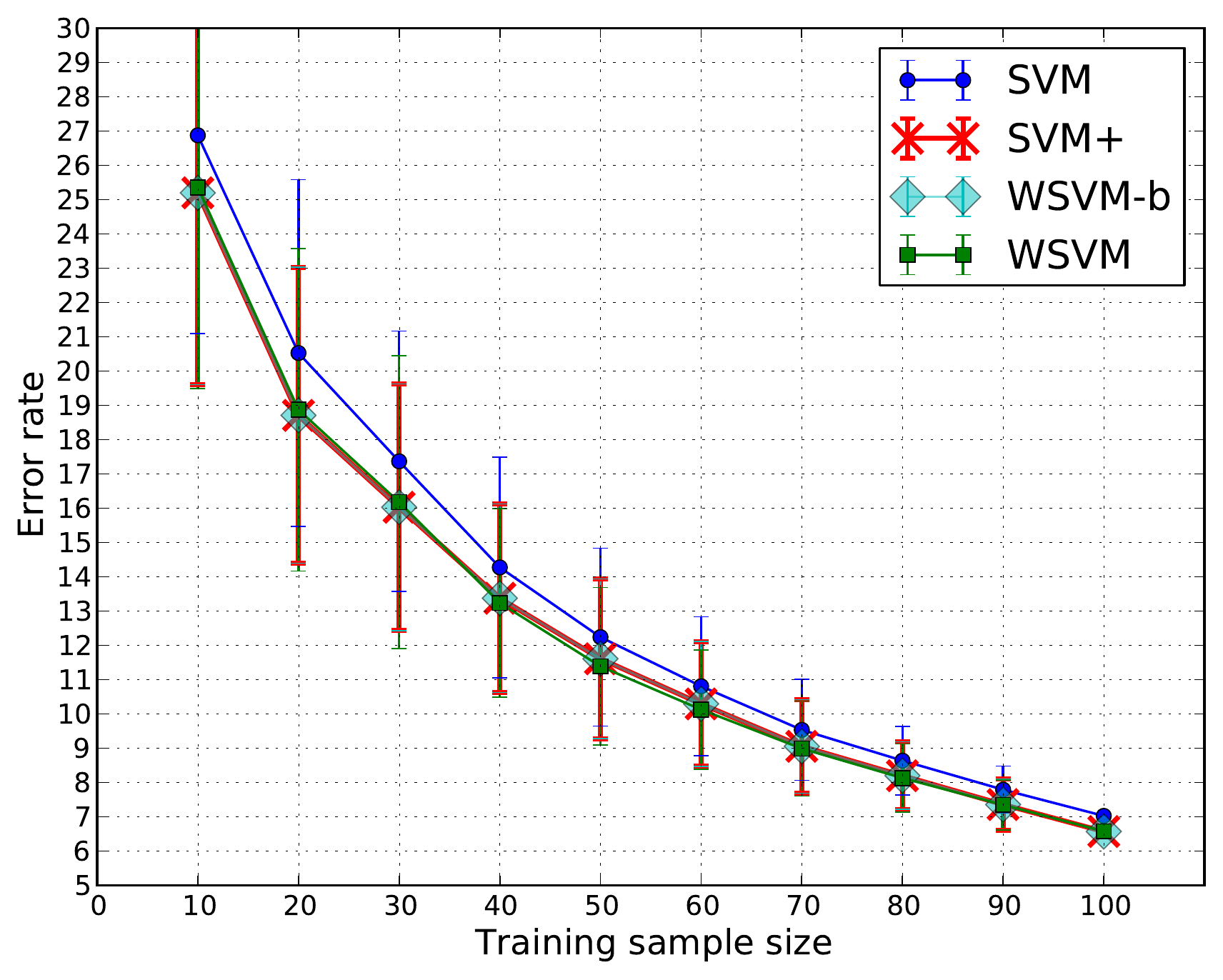}\hfill
\includegraphics[width=0.33\textwidth]{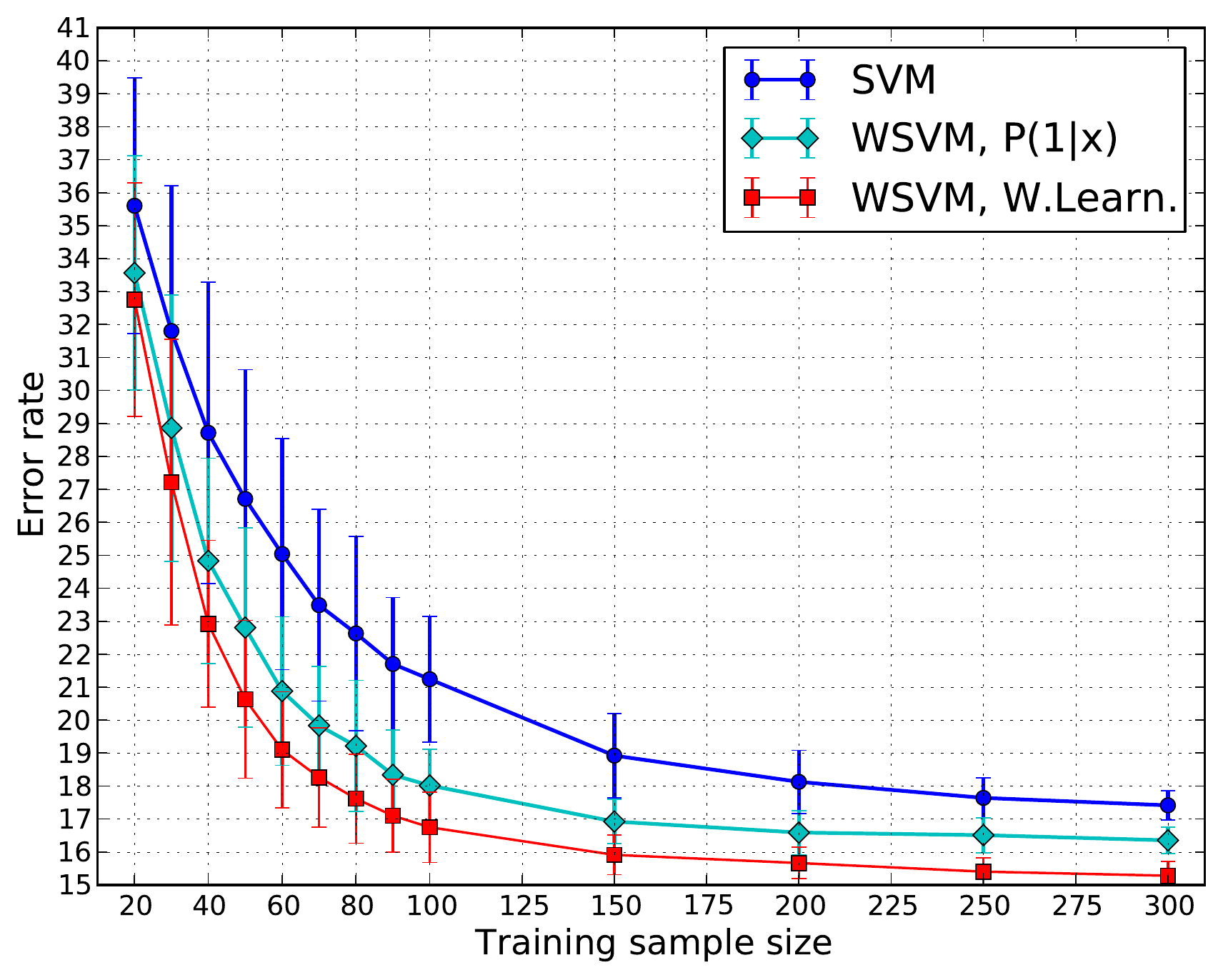}\hfill
\includegraphics[width=0.33\textwidth]{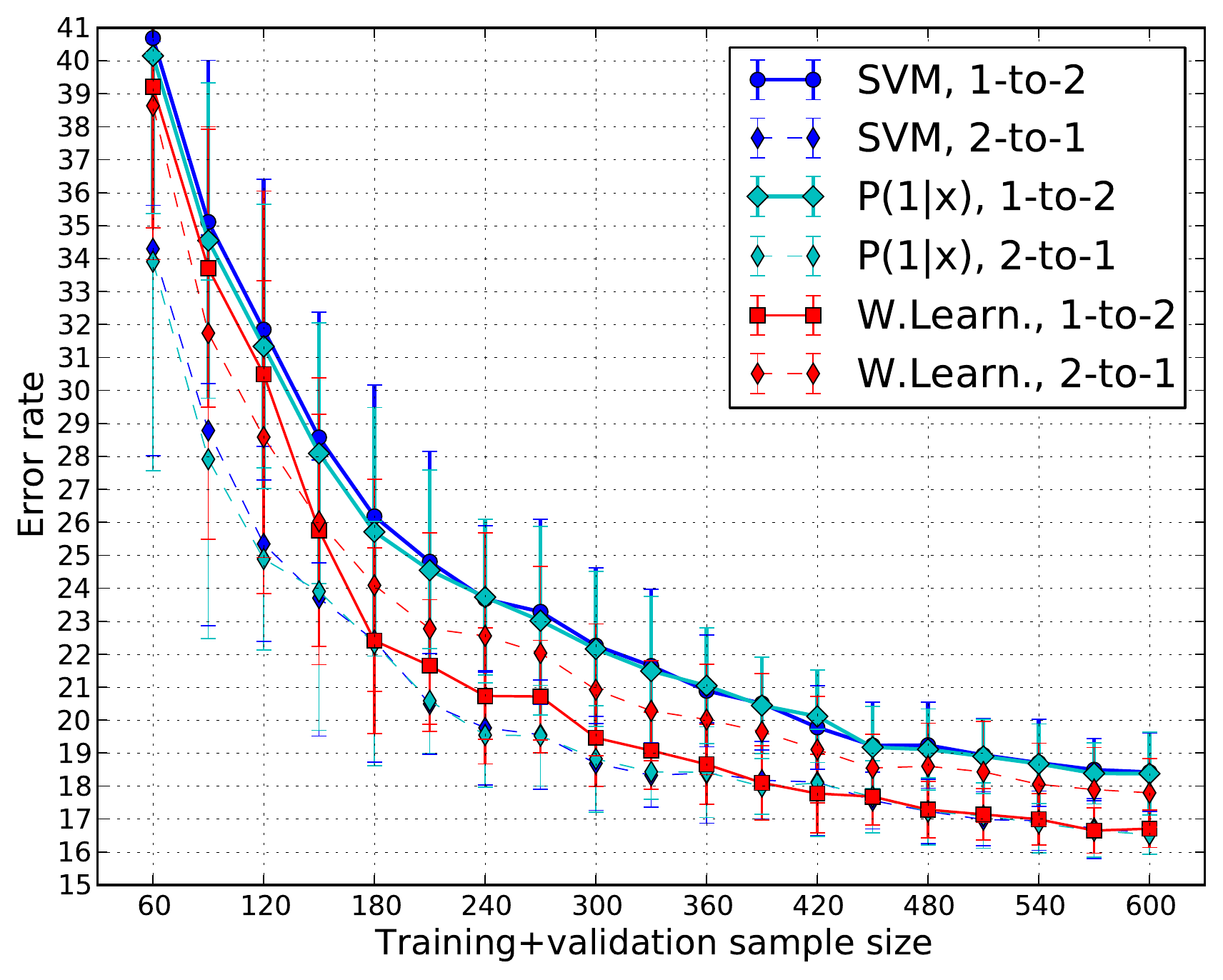}\hfill
}
\caption{SVM, SVM$+$, and WSVM error rates.
Left: Reproduction of the experiment of \cite{vapnik:2009:luhi}.
The SVM$+$ and the WSVM classifiers coincide up to the non-uniqueness of $b$.
Middle: Instance weighting leads to significant performance improvement
when a large validation set is available (toy data).
Right: Similar setting, but the training-to-validation splits are 1-to-2 and
2-to-1.}
\label{fig:mnist_svmw_eq_svmp_toy_w}
\end{figure*}

\begin{figure*}[ht]
\centering
\centerline{\hfill
\includegraphics[width=0.33\textwidth]{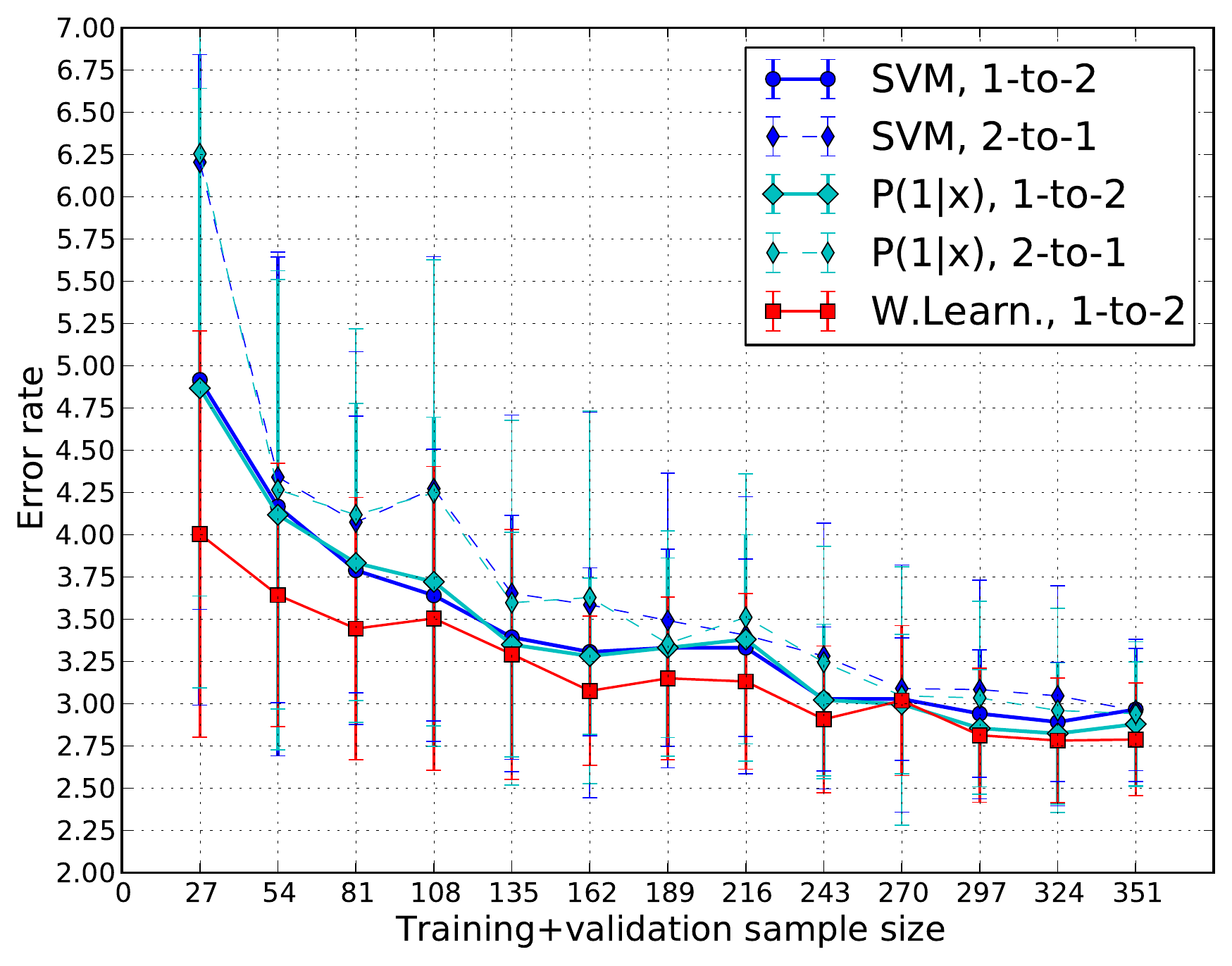}\hfill
\includegraphics[width=0.33\textwidth]{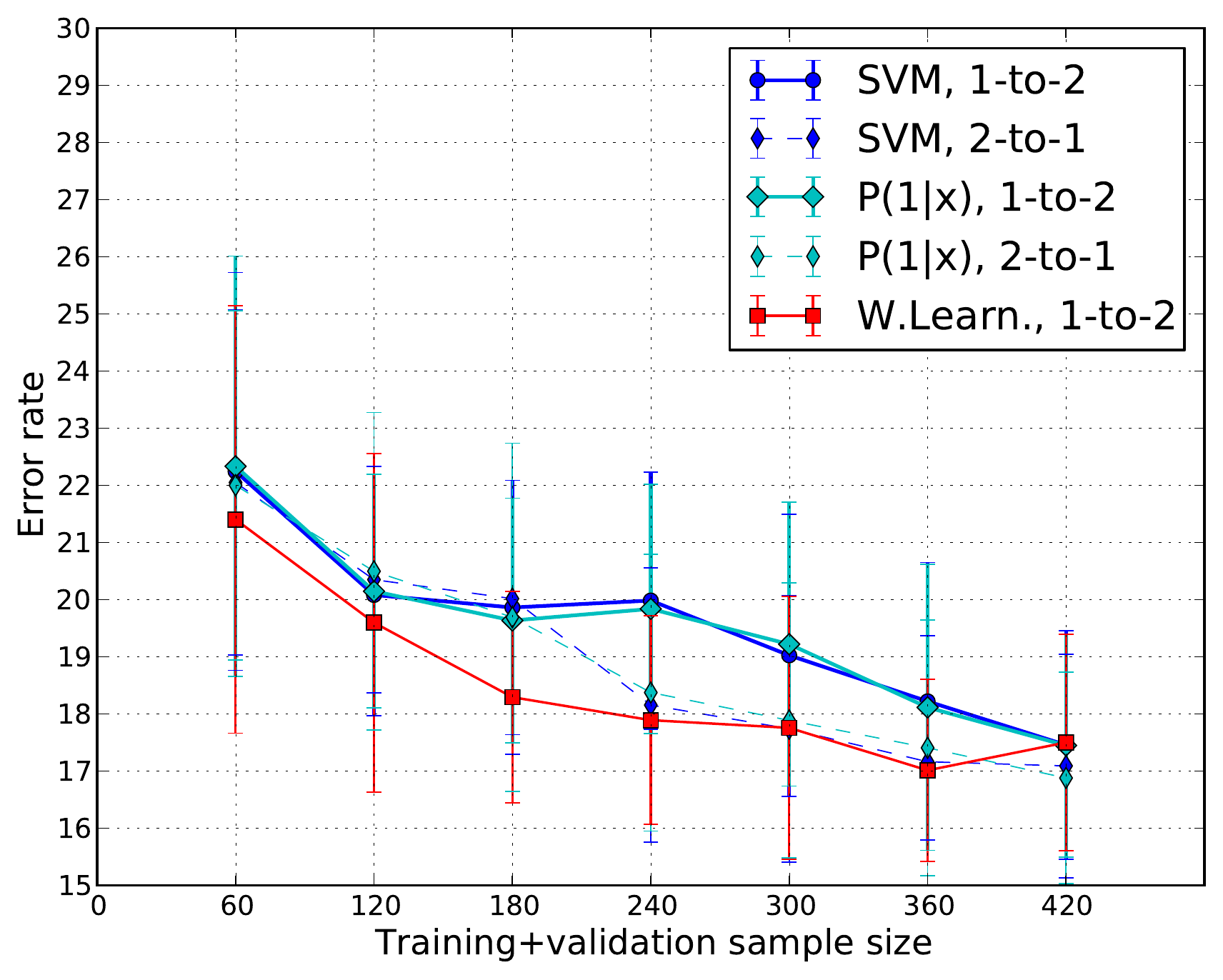}\hfill
\includegraphics[width=0.33\textwidth]{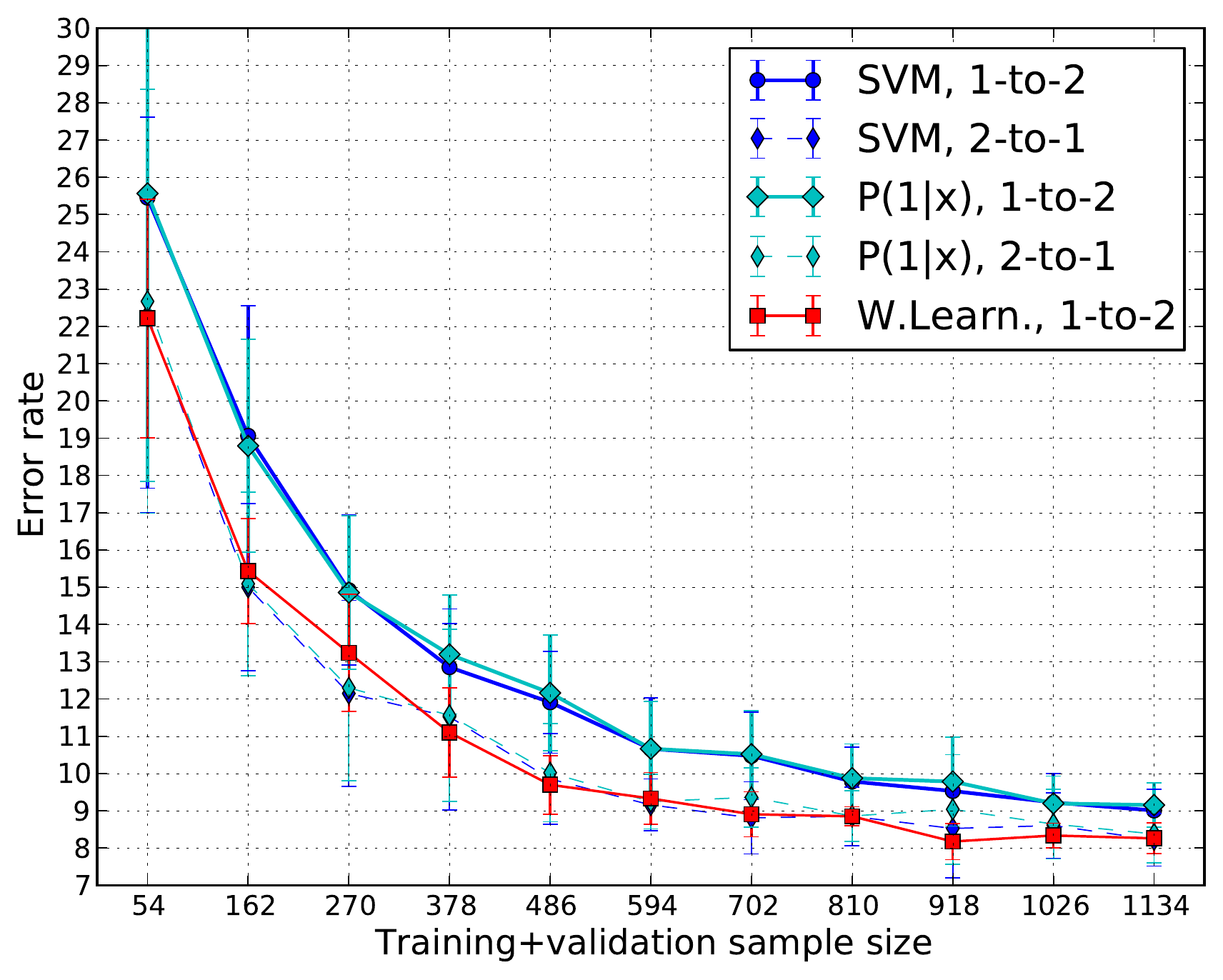}\hfill
}
\caption{SVM and WSVM error rates on the UCI repository data sets with
training-to-validation splits of 1-to-2 and 2-to-1. Left: Breast Cancer
Wisconsin. Middle: Mammographic Mass. Right: Spambase.}
\label{fig:uci_datasets}
\end{figure*}

We begin with the experimental verification of our theoretical findings of
Section~\ref{sec:svmp_svmw_relation}. We reproduced the handwritten digit
recognition experiment of \citet{vapnik:2009:luhi}, where the task is to
discriminate between $5$'s and $8$'s taken from the MNIST database and
downsized to $10 \times 10$ pixels.
We used the features provided by the authors and obtained similar error rates
for both the SVM and the SVM$+$,
see Figure~\ref{fig:mnist_svmw_eq_svmp_toy_w}, left.
Our results are averaged over 100 runs and include more subsets.

The weights for the WSVM algorithm were computed as
$\vc = \valpha^{\star} + \vbeta^{\star}$,
where $\valpha^{\star}$ and $\vbeta^{\star}$ come from the SVM$+$ solution.
We observed that
$\valpha^{\star}_{\rm WSVM} \approx \valpha^{\star}_{\rm SVM+}$.
However, we also observed that, in general,
$b^{\star}_{\rm WSVM} \neq b^{\star}_{\rm SVM+}$,
which is explained by the non-uniqueness of $b$ (Theorem~\ref{thm:svmw_unique}).
If $b^{\star}_{\rm SVM+}$ from the SVM$+$ model is used (WSVM-$b$ in the plot),
then the two classifiers are identical,
but if $b$ is tuned within the constraints imposed by the KKT
conditions (WSVM in the plot), then minor differences appear.

\subsection{Toy data}
\label{sec:exp:toy_data}

We now turn to the problem of choosing weights and evaluate the two
weight generation schemes introduced in Section~\ref{sec:choose_weights}.
In this experiment, data comes from a mixture of 2D Gaussians that form a
non-linear shape resembling ``W'', see Figure~\ref{fig:toy_dataset2}.
Similar to the previous setting, we sample from a fixed training set of size
400, tune parameters, estimate the $\Prob(1|\vx)$, and perform weight learning
on a validation set of size 4000, and test on a separate set of size 2000.
The results are averaged over 50 runs,
see Figure~\ref{fig:mnist_svmw_eq_svmp_toy_w}, middle.
Note that, just like in the experiment of \cite{vapnik:2009:luhi},
this is an idealistic setting where the validation set is so large
that model selection is close to optimal.
In practice, one would never split the available sample as 1-to-40,
therefore we also evaluate more ``reasonable'' splits 1-to-2 and 2-to-1 next.

Figure~\ref{fig:mnist_svmw_eq_svmp_toy_w}, right,
shows results of a similar experiment where the
validation sample is not fixed, but rather obtained by splitting the available
training data.
Since validation samples are now small, the estimation of $\Prob(1|\vx)$
fails and the corresponding WSVM performs on par with the standard SVM.
The weight learning, however, still yields performance improvement on
1-to-2 splits.
Moreover, the WSVM with weight learning is able to achieve a similar error rate
as the SVM trained on twice as much data. We also observe the effect of
overfitting when weight learning is performed on 2-to-1 splits,
and we omit it in further experiments.
Note that one could have anticipated that for the weight learning to
succeed the amount of validation data, in general,
has to be at least comparable to or larger than
the number of weights that are to be learned.

\subsection{UCI data sets}
\label{sec:exp:uci_datasets}

In this set of experiments we evaluate weight learning on three data sets from
the UCI repository \citep{frank:2010:uci}. For every data set, we first remove
any records with missing values and then split the remaining data randomly into
training and test sets of roughly equal size approximately preserving the
initial class distribution.
Table~\ref{tab:uci_stats} summarizes characteristics
of the obtained data sets. Smaller subsets are then sampled from the training
data and split into training and validation sets as 1-to-2 and 2-to-1. The
subsets sampling process is repeated 20 to 50 times depending on the amount of
data. The rest of the experimental setup is the same as before.

\begin{figure*}[ht]
\centering
\centerline{\hfill
\includegraphics[width=0.33\textwidth]{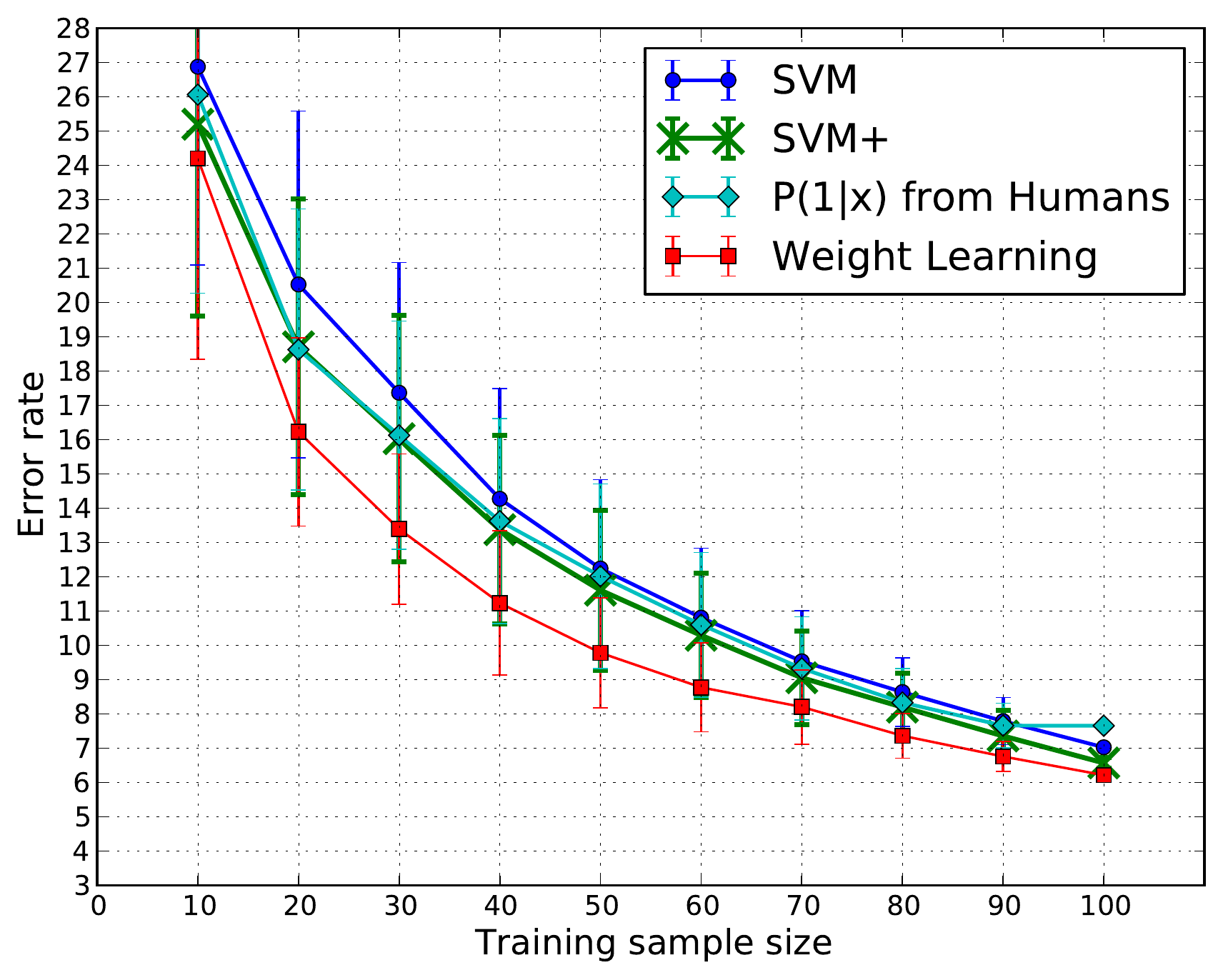}\hfill
\includegraphics[width=0.33\textwidth]{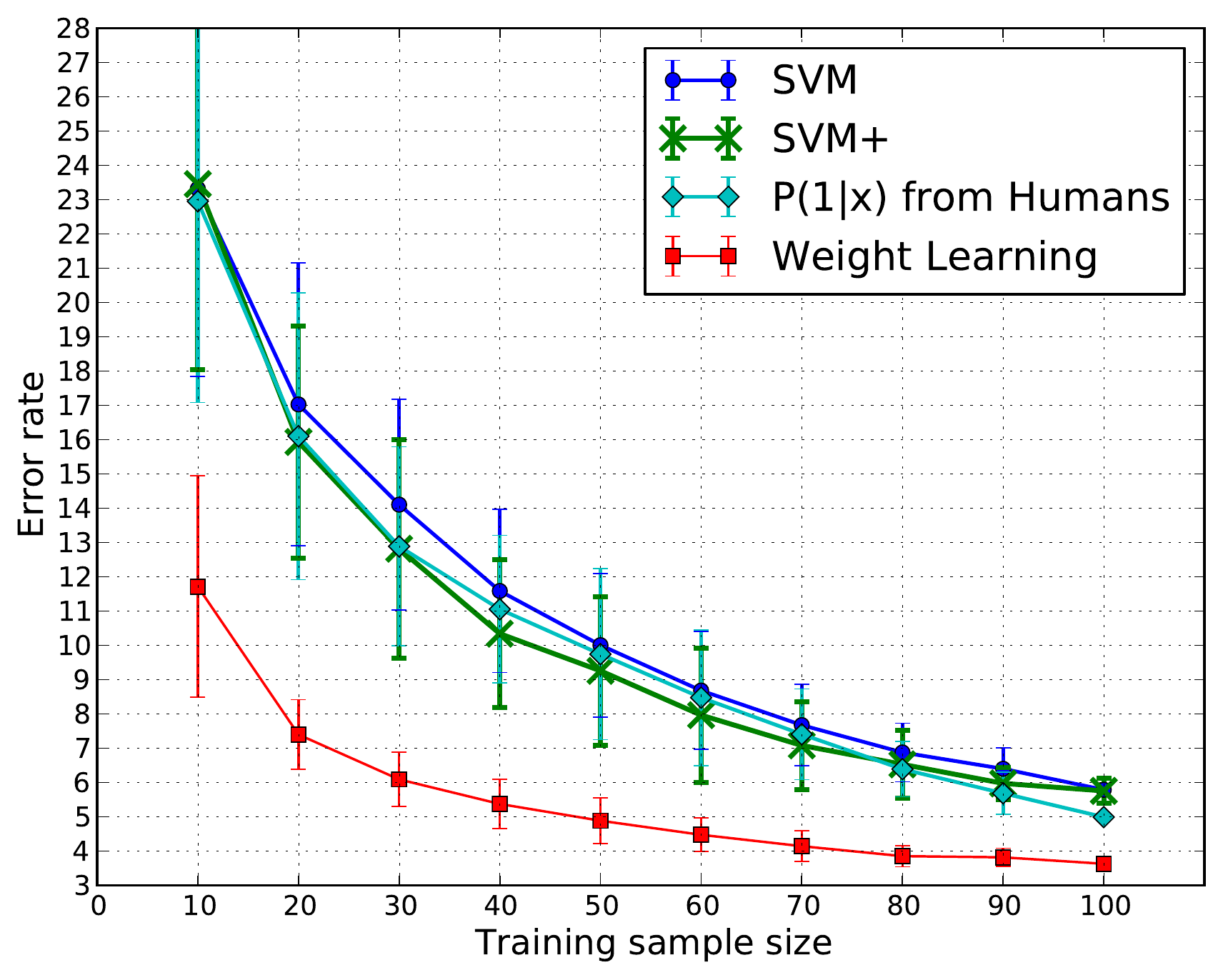}\hfill
}
\caption{Error rate comparison in the handwritten digit recognition experiment
of \cite{vapnik:2009:luhi}. $\Prob(1|\vx)$ was estimated from human rankings.
Left: The original setting. Right: The extended setting where each digit is
translated by 1 pixel in each of the 8 directions.}
\label{fig:mnist_all}
\end{figure*}

\begin{table}[ht]
\caption{Statistics of data sets from the UCI repository.}
\label{tab:uci_stats}
\centering
\begin{small}
\begin{sc}
\begin{tabular}{lccc}
\hline
Data set & Features & Training & Test \\
\hline
BCW & 9 & 351 & 332 \\
Mammographic & 4 & 420 & 410 \\
Spambase & 57 & 2430 & 2171 \\
\hline
\end{tabular}
\end{sc}
\end{small}
\end{table}

\textbf{Breast Cancer Wisconsin (BCW)} \citep{bennett:1992}: On this data set,
the weight learning on the 1-to-2 split performs on par or better than the SVM
on both splits, see Figure~\ref{fig:uci_datasets}. Notably, the SVM performed
worse on the 2-to-1 split, which we attribute to overfitting. The latter is not
too surprising considering the small amount of data and the capacity of the RBF
kernel, which makes the weight learning result even more remarkable.

\textbf{Mammographic Mass} \citep{elter:2007:biopsy}:
Again, the weight learning performs on par or better than
the SVM on all splits for almost all subsets. On the last subset,
however, the weight optimization did not yield any improvement,
and the resulting performance is the same as that of the corresponding SVM.

\textbf{Spambase}: On this data set, the general outcome is that the weight
learning brings roughly the same level of improvement as if twice as much data
were used for training the standard SVM. This can be interpreted as a more
efficient use of training data given the additional knowledge about
importance of each data point.

\subsection{Handwritten digit recognition (5's vs 8's)}
\label{sec:exp:5_vs_8}

Finally, we get back to the original handwritten digit recognition experiment of
\citet{vapnik:2009:luhi} and evaluate our weight generation schemes on that data.

In this experiment, we evaluate the first weight generation scheme
(\ref{eq:wgt:weights_alpha}) under the assumption that digit ranking is
available as the additional information, i.e., in addition to the class label
$\pm 1$, we are also given a confidence score between $-1$ and $1$. This is a
reasonable assumption e.g.\ for data sets where robust annotation is obtained
by aggregation of labeling from several human experts and is similar to
the setting considered by \citet{wu:2004:wmsvmpriorknow}.

We collected additional annotation in the form of ranking from three human
experts. The humans were presented with a random sample of the $10 \times 10$
pixel digits and were asked to label them using one of $5$ possible labels,
which we translated to a score in $\{-1, -0.5, 0, 0.5, 1\}$.
Each of the 100 digits from the training set was ranked 16 times and
the average score was then used as an estimate of $2 \Prob(1|x) - 1$.

Figure~\ref{fig:mnist_all} shows the corresponding experimental results. We
observe that additional information from human experts helps on small subsets,
but its influence degrades on larger subsets.
This might be in part due to the
difference in image representation used by SVMs and humans. In particular,
humans' recognition of digits is translation invariant, while the pixel-wise
representation is not. This leads us to our final experiment on the extended
version of that data set.

We extend the original training sample of 100 digits by shifting each digit
by 1 pixel in all 8 directions, thus obtaining 9 times the initial sample size.
We assume that both the human rankings and the privileged features from the
experiment of \citet{vapnik:2009:luhi} are unaffected by such translations
and we simply replicate them. The experimental results are presented in
Figure~\ref{fig:mnist_all}, right. Note that the WSVM with human rankings is
now consistently on par or better than SVM and is somewhat comparable to
SVM$+$.

Remarkably, weight learning now gives significant performance boost on
the extended version of the data set, which shows that it can be successfully
combined with other sources of additional information, like the hint about
translation invariance in this case.
Interestingly enough, \citet{lauer:2008:prior} discussed the possibility
of combining the virtual sample method, which we used to extend the
training set, with weighted learning where each virtual point would
be given a confidence score $c_i$.
Our weight learning algorithm does exactly that, but without trying to
model the measure of confidence. Instead, it attempts to directly
optimize an estimate of the expected loss $L(f)$.

\section{Conclusion}
\label{sec:conclusion}

We have investigated basic properties of the recently proposed
SVM$+$ algorithm, such as uniqueness of its solution,
and have shown that it is closely related to the well-known
weighted SVM. We revealed that all SVM$+$ solutions are constrained
to have a certain dependency between the dual variables and the incurred
loss on the training sample, and that the prior knowledge from the
SVM$+$ framework can be encoded via instance weights.

That motivated us to consider other sources of additional information
about the training data than the one given by privileged features.
In particular, we considered the weight learning method
in Section~\ref{sec:weight_learning} which allows one to learn
weights directly from data (using a validation set).
The latter approach is not limited to SVMs and can be extended to
other classifiers.

Experimental results confirmed our intuition that importance
weighting is a powerful method of incorporating prior knowledge.
In the idealized setting, we showed that the weight learning works
and yields significant performance improvement. The choice of weights
in a more practical setting is left for future work.

\appendix
\allowdisplaybreaks

\section{The KKT conditions}
\label{sec:kkt}

In convex optimization, the Karush-Kuhn-Tucker (KKT) conditions
are necessary and sufficient for a point to be primal and dual
optimal with zero duality gap \citep{boyd:2004:cvx}.

The KKT conditions corresponding to the weighted SVM problem
(\ref{eq:svmw_primal}) are given below:
\begin{subequations}
\label{eq:svmw_kkt}
\begin{align}
\textstyle\sum_{i=1}^{n} \alpha^{\star}_i y_i \vz_i &= \vw^{\star} ,
    \label{eq:svmw_kkt:w} \\
\textstyle\sum_{i=1}^{n} \alpha^{\star}_i y_i &= 0 ,
    \label{eq:svmw_kkt:b} \\
\alpha^{\star}_i + \beta^{\star}_i &= c_i ,
    \label{eq:svmw_kkt:xi} \\
\alpha^{\star}_i [ \xi^{\star}_i -1
    + y_i (\inner{\vw^{\star}, \vz_i} + b^{\star}) ] &= 0 ,
    \label{eq:svmw_kkt:comp1} \\
\beta^{\star}_i [ \xi^{\star}_i ] &= 0 ,
    \label{eq:svmw_kkt:comp2} \\
\xi^{\star}_i -1 + y_i (\inner{\vw^{\star}, \vz_i} + b^{\star}) &\geq 0 ,
    \label{eq:svmw_kkt:const1} \\
\alpha^{\star}_i \geq 0, \; \beta^{\star}_i \geq 0, \; \xi^{\star}_i &\geq 0 .
    \label{eq:svmw_kkt:const2}
\end{align}
\end{subequations}

And the KKT conditions corresponding to the SVM$+$ problem
(\ref{eq:svmp_primal}) are as follows:
\begin{subequations}
\label{eq:svmp_kkt}
\begin{align}
\textstyle\sum_{i=1}^{n} \alpha^{\star}_i y_i \vz_i &= \vw^{\star} ,
    \label{eq:svmp_kkt:w} \\
\textstyle\sum_{i=1}^{n} \alpha^{\star}_i y_i &= 0 ,
    \label{eq:svmp_kkt:b} \\
\textstyle\sum_{i=1}^{n} (\alpha^{\star}_i + \beta^{\star}_i - C) \cvz_i 
    &= \gamma \cvw^{\star} ,
    \label{eq:svmp_kkt:cw} \\
\textstyle\sum_{i=1}^{n} (\alpha^{\star}_i + \beta^{\star}_i - C) &= 0 ,
    \label{eq:svmp_kkt:cb} \\
\alpha^{\star}_i [ \inner{\cvw^{\star}, \cvz_i} + \cb^{\star} 
    -1 + y_i (\inner{\vw^{\star}, \vz_i} + b^{\star}) ] &= 0 ,
    \label{eq:svmp_kkt:comp1} \\
\beta^{\star}_i [ \inner{\cvw^{\star}, \cvz_i} + \cb^{\star} ] &= 0 ,
    \label{eq:svmp_kkt:comp2} \\
\inner{\cvw^{\star}, \cvz_i} + \cb^{\star}
    -1 + y_i (\inner{\vw^{\star}, \vz_i} + b^{\star}) &\geq 0 ,
    \label{eq:svmp_kkt:const1} \\
\alpha^{\star}_i \geq 0, \; \beta^{\star}_i \geq 0, \;
    \inner{\cvw^{\star}, \cvz_i} + \cb^{\star} &\geq 0 ,
    \label{eq:svmp_kkt:const2}
\end{align}
\end{subequations}

\section{Technical proofs}
\label{sec:proofs}

\subsection{Proof of Theorem~\ref{thm:svmp_unique}}
\label{sec:proof:svmp_unique}
\begin{theorem*}
The solution to the problem (\ref{eq:svmp_primal}) is unique in $(\vw,\cvw,\cb)$
for any $C > 0$, $\gamma > 0$. If there is a support vector, then $b$ is unique
as well, otherwise:
\begin{align*}
\max_{i \in \Ic_{+}} (1 - \inner{\cvw,\cvz_i} - \cb) \leq b \leq 
\min_{i \in \Ic_{-}} (\inner{\cvw,\cvz_i} + \cb - 1) .
\end{align*}
\end{theorem*}
\begin{proof}
Following \citep{burges:1999:unique}, let $F$ be the objective function:
\begin{align*}
F = \frac{1}{2} \norms{\vw}
    + \frac{\gamma}{2} \norms{\cvw}
    + C \sum_{i=1}^{n} (\inner{\cvw, \cvz_i} + \cb) ,
\end{align*}
and define $\vu \bydef (\vw, \cvw, \cb)^{\top}$.
Suppose $\vu_1$ and $\vu_2$ are two
solutions, then, since the problem is convex, a family of solutions is given by
$\vu_t = (1-t) \vu_1 + t \vu_2$, $t \in [0,1]$, and
$F(\vu_1) = F(\vu_2) = F(\vu_t)$.
Expanding $F(\vu_t) - F(\vu_1) = 0$ and differentiating w.r.t.\ $t$ yields:

\begin{align*}
(t-1) \norms{\vw_1} + (1-2t)\inner{\vw_1,\vw_2}
    + t\norms{\vw_2} + \\
\gamma \left[ (t-1) \norms{\cvw_1} + (1-2t)\inner{\cvw_1,\cvw_2}
    + t\norms{\cvw_2} \right] \\
+ t C \sum_{i=1}^{n} (\inner{\cvw_2 - \cvw_1, \cvz_i} + \cb_2 - \cb_1) = 0 , \\
\norms{\vw_1 - \vw_2} + \gamma \norms{\cvw_1 - \cvw_2} = 0 .
\end{align*}
Since $\gamma > 0$ it follows that $\vw_1 = \vw_2$ and $\cvw_1 = \cvw_2$.
Plugging that into the first equation yields $\cb_2 = \cb_1$.
Uniqueness of $b$ now follows from condition (\ref{eq:svmp_kkt:comp1}).
If all $\alpha_i = 0$ (i.e., there are no support vectors),
then $\vw = \zeros$ and the result follows from (\ref{eq:svmp_kkt:const1}).
\end{proof}

\subsection{Uniqueness of the dual solution}
\label{sec:proof:unique_dual}
\begin{proposition}
\label{prop:unique_dual}
If $(\valpha_1, \cvalpha_1)$ and $(\valpha_2, \cvalpha_2)$ are solutions to the
optimization problem (\ref{eq:svmp_dual}), then
\begin{align*}
(\valpha_1 - \valpha_2) &\in \Null(\mY\mK\mY)
    \cap \ones^{\perp} \cap \vy^{\perp} , \\
(\cvalpha_1 - \cvalpha_2) &\in \Null(\cmK) \cap \ones^{\perp}.
\end{align*}
If $\valpha_1$ and $\valpha_2$ are solutions to the problem
(\ref{eq:svmw_dual}), then
\begin{align*}
(\valpha_1 - \valpha_2) \in \Null(\mY\mK\mY)
    \cap \ones^{\perp} \cap \vy^{\perp} .
\end{align*}
\end{proposition}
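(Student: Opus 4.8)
The plan is to use convexity. Both duals have polyhedral feasible sets (every constraint, including the coupling constraint $\alpha_i \le C + \calpha_i$, is affine) and convex quadratic objectives, so in each case the set of optimal points is convex and the objective is constant along the segment joining any two optimal points. I will read off the claimed null-space and orthogonality relations by expanding that constant objective as a polynomial in the segment parameter.

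Take (\ref{eq:svmp_dual}) first. Let $(\valpha_1,\cvalpha_1)$ and $(\valpha_2,\cvalpha_2)$ be optimal with common optimal value $p^{\star}$, and set $\vd \bydef \valpha_1 - \valpha_2$. For every $t \in [0,1]$ the convex combination $(\valpha_2 + t\vd,\; \cvalpha_2 + t(\cvalpha_1 - \cvalpha_2))$ is feasible, so by convexity $F(\valpha_2 + t\vd) + \tfrac{1}{\gamma}\cF(\cvalpha_2 + t(\cvalpha_1-\cvalpha_2)) = p^{\star}$ for all such $t$. The left side is a quadratic polynomial in $t$ that is identically $p^{\star}$, hence its coefficients of $t^{2}$ and of $t$ both vanish. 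The coefficient of $t^{2}$ equals $\tfrac12\big(\vd^{\top}\mY\mK\mY\vd + \tfrac{1}{\gamma}(\cvalpha_1-\cvalpha_2)^{\top}\cmK(\cvalpha_1-\cvalpha_2)\big)$. Since $\mY\mK\mY$ is positive semidefinite (as $\mK$ is and $\mY$ is diagonal), $\cmK$ is positive semidefinite, and $\gamma > 0$, both summands are nonnegative, hence each is zero; because a positive semidefinite quadratic form vanishes only on the null space of its matrix, this yields $\vd \in \Null(\mY\mK\mY)$ and $\cvalpha_1 - \cvalpha_2 \in \Null(\cmK)$.

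For the orthogonality relations, the equality constraints $\vy^{\top}\valpha_j = 0$ and $\ones^{\top}\cvalpha_j = 0$ ($j=1,2$) immediately give $\vd \in \vy^{\perp}$ and $\cvalpha_1 - \cvalpha_2 \in \ones^{\perp}$. To obtain $\vd \in \ones^{\perp}$ I use the vanishing of the coefficient of $t$, namely $\vd^{\top}\mY\mK\mY\valpha_2 - \ones^{\top}\vd + \tfrac{1}{\gamma}(\cvalpha_1-\cvalpha_2)^{\top}\cmK\cvalpha_2 = 0$; substituting $\mY\mK\mY\vd = \zeros$ and $\cmK(\cvalpha_1-\cvalpha_2) = \zeros$ (and using symmetry of both matrices) collapses this to $-\ones^{\top}\vd = 0$. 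This establishes the first two displays. The WSVM statement follows from the same argument applied to $F$ alone: the $t^{2}$-coefficient $\tfrac12 \vd^{\top}\mY\mK\mY\vd$ vanishing gives $\vd \in \Null(\mY\mK\mY)$, the $t$-coefficient then reduces to $-\ones^{\top}\vd$ and gives $\vd \in \ones^{\perp}$, and the constraint $\vy^{\top}\valpha = 0$ gives $\vd \in \vy^{\perp}$.

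The computations are routine. The one subtlety is the order of the deductions for $\vd \in \ones^{\perp}$: one must first extract the null-space memberships from the quadratic term so that the linear term simplifies; relatedly, one needs positive semidefiniteness of $\mY\mK\mY$ to split the $t^{2}$-coefficient into separately vanishing pieces, which is exactly the step where $\gamma > 0$ is used.
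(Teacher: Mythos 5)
Your proof is correct and follows essentially the same route as the paper's: restrict the quadratic objective to the segment between two optimal points, kill the quadratic coefficient to get the null-space memberships via positive semidefiniteness, and then plug back into the linear coefficient to obtain $\ones^{\top}(\valpha_1-\valpha_2)=0$, with the remaining orthogonality relations read off from the equality constraints. The only difference is that you write out the SVM$+$ case in full (including the $\gamma>0$ splitting of the two nonnegative terms), whereas the paper only presents the WSVM part and notes that the other case is analogous.
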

\begin{proof}
The proof employs the same method as in the proof of
Theorem~\ref{thm:svmp_unique} and we only provide the part
concerning the WSVM problem.

Let $\mK' = \mY\mK\mY$ and consider a family of solutions
$\valpha_t = (1-t) \valpha_1 + t \valpha_2$, $t \in [0,1]$.
Note that $(\valpha_1 - \valpha_2) \in \vy^{\perp}$
follows directly from the optimization constraints.
Expanding $F(\valpha_t) - F(\valpha_1) = 0$ and
differentiating w.r.t.\ $t$ yields:
\begin{align*}
(t-1) \valpha_{1}^{\top} \mK' \valpha_{1}
    + (1-2t) \valpha_{1}^{\top} \mK' \valpha_{2}
    + t \valpha_{2}^{\top} \mK' \valpha_{2} \\
+ \ones^{\top} (\valpha_{1} - \valpha_{2}) = 0 , \\
(\valpha_{1} - \valpha_{2})^{\top} \mK' (\valpha_{1} - \valpha_{2}) = 0 .
\end{align*}
It follows that
$(\valpha_1 - \valpha_2) \in \Null(\mK')$.
Let $\valpha_1 = \valpha_2 + \vv$, $\vv \in \Null(\mK')$, then
from the first equation $\ones^{\top} \vv = 0$, which completes the proof.
\end{proof}

\begin{corollary}
\label{cor:unique_dual}
If $\mK$ has full rank, then solution to the problem (\ref{eq:svmw_dual}) is
unique. If $\mK$ and $\cmK$ have full rank, then solution to the problem
(\ref{eq:svmp_dual}) is unique.
\end{corollary}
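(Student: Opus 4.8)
The plan is to read the corollary off directly from Proposition~\ref{prop:unique_dual}, which already confines the difference of any two dual optimal points to explicit null spaces; a full-rank hypothesis simply collapses those to $\{\zeros\}$. So the only work is to check that full rank of the kernel matrix makes the relevant intersection trivial, and the main (and essentially only) point worth spelling out is why full rank of $\mK$ transfers to $\mY\mK\mY$.

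First I would observe that $\mY = \diag(\vy)$ with $y_i \in \{-1,1\}$ is invertible (indeed $\mY^{-1} = \mY$), so $\mY\mK\mY$ has full rank whenever $\mK$ does, giving $\Null(\mY\mK\mY) = \{\zeros\}$ and hence $\Null(\mY\mK\mY) \cap \ones^{\perp} \cap \vy^{\perp} = \{\zeros\}$. For the WSVM dual (\ref{eq:svmw_dual}), Proposition~\ref{prop:unique_dual} states that $\valpha_1 - \valpha_2$ lies in this intersection for any two solutions $\valpha_1, \valpha_2$, so $\valpha_1 = \valpha_2$ and the solution is unique. For the SVM$+$ dual (\ref{eq:svmp_dual}), the same reasoning with $\mK$ of full rank gives $\valpha_1 = \valpha_2$, while full rank of $\cmK$ yields $\Null(\cmK) = \{\zeros\}$, whence the second conclusion of Proposition~\ref{prop:unique_dual} forces $\cvalpha_1 = \cvalpha_2$; thus the pair $(\valpha, \cvalpha)$ is unique.

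There is no real obstacle here: all the substantive content lives in Proposition~\ref{prop:unique_dual}, and the corollary is just its specialization to non-singular kernel matrices. If anything deserves care it is only the remark that $\mY$ is an involution, which is exactly what lets the full-rank property pass from $\mK$ to $\mY\mK\mY$; everything else is immediate.
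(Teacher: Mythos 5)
Your argument is correct and is exactly the intended one: the paper states this corollary without proof precisely because it follows immediately from Proposition~\ref{prop:unique_dual} once one notes, as you do, that $\mY$ is invertible so full rank passes from $\mK$ to $\mY\mK\mY$, collapsing the relevant null spaces to $\{\zeros\}$.
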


\subsection{Proof of Proposition~\ref{prop:equiv_weights}}
\label{sec:proof:equiv_weights}
\begin{proposition*}
Let $(\vw^{\star},b^{\star},\vxi^{\star},\valpha^{\star}, \vbeta^{\star})$ be a
primal-dual optimal point for the WSVM problem (\ref{eq:svmw_primal}).
The point $(\vw^{\star},b^{\star},\vxi^{\star})$
is primal optimal for any weight vector $\vc \in \Wc$,
and all such weights are contained in $\Wc$.
\end{proposition*}
\begin{proof}
The proof consists in a straightforward application of the KKT conditions.
The additional constraint that
$\textstyle\sum_{i} \mu_i = \textstyle\sum_{i} \alpha^{\star}_i$
follows from Proposition~\ref{prop:unique_dual} since it must hold that
$(\vmu - \valpha^{\star}) \in \ones^{\perp}$.
\end{proof}

\subsection{Proof of Theorem~\ref{thm:svmw_from_svmp}}
\label{sec:proof:svmw_from_svmp}
\begin{theorem*}
Let
$(\vw^{\star},b^{\star},\cvw^{\star},\cb^{\star},
\valpha^{\star},\vbeta^{\star})$
be a primal-dual optimal point for the SVM$+$ problem.
There exists a choice of weights $\vc$, namely
$\vc = \valpha^{\star} + \vbeta^{\star}$, and $\vxi^{\star}$ such that
$(\vw^{\star},b^{\star},\vxi^{\star},\valpha^{\star},\vbeta^{\star})$ is a
primal-dual optimal point for the WSVM problem.
\end{theorem*}
\begin{proof}
Given any fixed feasible $\cvalpha$, the SVM$+$ problem (\ref{eq:svmp_dual}) is
equivalent to the WSVM problem (\ref{eq:svmw_dual}) with
$\vc = C\ones + \cvalpha$. In particular, if
$(\valpha^{\star},\cvalpha^{\star})$ is a solution to (\ref{eq:svmp_dual}),
then $\valpha^{\star}$ is a solution to (\ref{eq:svmw_dual}) with
$\vc = C\ones + \cvalpha^{\star} = \valpha^{\star} + \vbeta^{\star}$.
Let $\xi^{\star}_i = \inner{\cvw^{\star}, \cvz_i} + \cb^{\star}$,
then the point
$(\vw^{\star},b^{\star},\vxi^{\star},\valpha^{\star},\vbeta^{\star})$
verifies the KKT conditions for the WSVM problem (\ref{eq:svmw_primal}).
\end{proof}

\subsection{Proof of Lemma~\ref{lem:svmp_necessary}}
\label{sec:proof:svmp_necessary}
\begin{lemma*}
Assume any given $C > 0$, $\gamma \geq 0$ and let
$(\vw^{\star},b^{\star},\cvw^{\star},\cb^{\star},
\valpha^{\star},\vbeta^{\star} )$ be a primal-dual optimal point for
the SVM$+$ problem (\ref{eq:svmp_primal}), then the following holds:
\begin{align}
\label{eq:svmp_necessary:app}
\frac{ \sum_{i=1}^{n} (\alpha^{\star}_i + \beta^{\star}_i) h_i }{ \sum_{i=1}^{n}
(\alpha^{\star}_i + \beta^{\star}_i) } \geq \frac{1}{n} \sum_{i=1}^{n} h_i ,
\end{align}
where $h_i \bydef [1 - y_i (\inner{\vw^{\star}, \vz_i} + b^{\star})]_{+} $
is the hinge loss at a point $i = 1, \ldots, n$.
If $\gamma = 0$, then (\ref{eq:svmp_necessary:app}) is satisfied with equality.
\end{lemma*}
\begin{proof}
It follows from the KKT conditions that
\begin{align*}
\inner{\cvw^{\star}, \cvz_i} + \cb^{\star} & = h_i + \delta_i,
    & \delta_i & \geq 0 , \\
\alpha_{i}^{\star} > 0 \lor \beta_{i}^{\star} > 0 & \Rightarrow
    \delta_i = 0 , &
    (\alpha_{i}^{\star} + \beta_{i}^{\star}) \delta_i & = 0 .
\end{align*}
Multiplying by $(\alpha^{\star}_i + \beta^{\star}_i - C)$
and summing up yields
\begin{align*}
\gamma \inner{\cvw^{\star}, \cvw^{\star}} &= \sum_{i=1}^{n} (\alpha^{\star}_i +
\beta^{\star}_i) h_i - C \sum_{i=1}^{n} (h_i + \delta_i) .
\end{align*}
Note that
$ C = \frac{1}{n} \sum_{i=1}^{n} (\alpha^{\star}_i + \beta^{\star}_i) > 0 $,
hence
\begin{align*}
\gamma \inner{\cvw^{\star}, \cvw^{\star}} &= \sum_{i=1}^{n} (\alpha^{\star}_i +
\beta^{\star}_i) h_i \\
& - \frac{1}{n} \sum_{i=1}^{n} (\alpha^{\star}_i + \beta^{\star}_i)
\sum_{i=1}^{n} (h_i + \delta_i) .
\end{align*}
Since $\gamma \inner{\cvw^{\star}, \cvw^{\star}} \geq 0$, it must hold that
\begin{align}
\sum_{i=1}^{n} (\alpha^{\star}_i + \beta^{\star}_i) h_i 
& \geq \frac{1}{n} \sum_{i=1}^{n} (\alpha^{\star}_i + \beta^{\star}_i)
\sum_{i=1}^{n} (h_i + \delta_i) \notag \\
& \geq \frac{1}{n} \sum_{i=1}^{n} (\alpha^{\star}_i + \beta^{\star}_i)
\sum_{i=1}^{n} h_i . \label{eq:svmp_necessary:ineq}
\end{align}
Division by $\sum_{i=1}^{n} (\alpha^{\star}_i + \beta^{\star}_i)$
completes the proof.
\end{proof}

\subsection{SVM+ reduction to standard SVM}
\label{sec:proof:svmp_rho_equality}
We show that when there is an equality in the previous lemma,
then the SVM$+$ can be reduced to the standard SVM. For simplicity,
we only state this result in the linear setting, where
$\cvx_i \in \Rb^d$.
\begin{proposition}
\label{prop:svmp_rho_equality}
Assume the setting of Lemma~\ref{lem:svmp_necessary} and let
(\ref{eq:svmp_necessary:app}) be satisfied with equality, then
\begin{align*}
\inner{\cvw^{\star}, \cvx_i} + \cb^{\star} & = h_i, &
    i & = 1, \ldots, n
\end{align*}
Furthermore, the following holds.

1. If $\gamma > 0$, then $\cvw^{\star} = \zeros$ and $\cb^{\star} = h_i$
for all $i$, i.e.,
the loss on all data points is the same and the hard margin SVM is a
special case with $\cb^{\star} = 0$.

2. If $\gamma = 0$, then $\cmX \cvalpha^{\star} = \zeros$, where
$\cmX$ is the matrix of $\cvx_i$ stacked.
If additionally $\rank(\cmX) = n$,
then $\alpha^{\star}_i + \beta^{\star}_i = C$ for all $i$ and
any vector in $\Rb^n$ can be represented via
$\inner{\cvw^{\star}, \cvx_i} + \cb^{\star}$,
hence the soft margin SVM is recovered with
$\xi^{\star}_i = \inner{\cvw^{\star}, \cvx_i} + \cb^{\star}$.
\end{proposition}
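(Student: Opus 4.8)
The plan is to start from the KKT conditions for the SVM$+$ problem (\ref{eq:svmp_kkt}) and squeeze out the equality case. Recall from the proof of Lemma~\ref{lem:svmp_necessary} that $\inner{\cvw^{\star}, \cvx_i} + \cb^{\star} = h_i + \delta_i$ with $\delta_i \geq 0$ and $(\alpha^{\star}_i + \beta^{\star}_i)\delta_i = 0$, and that
\begin{align*}
\gamma \inner{\cvw^{\star}, \cvw^{\star}} = \sum_{i=1}^{n}(\alpha^{\star}_i + \beta^{\star}_i) h_i - \tfrac{1}{n}\Big(\sum_{i=1}^{n}(\alpha^{\star}_i + \beta^{\star}_i)\Big)\sum_{i=1}^{n}(h_i + \delta_i).
\end{align*}
When (\ref{eq:svmp_necessary:app}) holds with equality, the inequality chain (\ref{eq:svmp_necessary:ineq}) collapses, forcing both $\gamma\inner{\cvw^{\star},\cvw^{\star}} = 0$ and $\sum_i (\alpha^{\star}_i + \beta^{\star}_i)\delta_i \cdot (\text{positive constant})$ to vanish; since all $\alpha^{\star}_i + \beta^{\star}_i \geq 0$ and their sum $nC > 0$, the second equality gives $\sum_i \delta_i = 0$, hence $\delta_i = 0$ for all $i$ (each $\delta_i \geq 0$). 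This yields $\inner{\cvw^{\star},\cvx_i} + \cb^{\star} = h_i$ for every $i$, which is the first displayed claim.

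For part 1, if $\gamma > 0$ then $\gamma\inner{\cvw^{\star},\cvw^{\star}} = 0$ directly gives $\cvw^{\star} = \zeros$, so $\cb^{\star} = h_i$ for all $i$; in particular all hinge losses are equal. To see the hard-margin SVM appears as a special case, note that if additionally $\cb^{\star} = 0$ then every $h_i = 0$, i.e., $y_i(\inner{\vw^{\star},\vz_i} + b^{\star}) \geq 1$ for all $i$, which is exactly primal feasibility of the hard-margin SVM; and the remaining SVM$+$ KKT conditions (\ref{eq:svmp_kkt:w})--(\ref{eq:svmp_kkt:comp1}) reduce to the hard-margin SVM KKT conditions.

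For part 2, if $\gamma = 0$, then (\ref{eq:svmp_kkt:cw}) reads $\sum_i (\alpha^{\star}_i + \beta^{\star}_i - C)\cvx_i = \gamma\cvw^{\star} = \zeros$, i.e., $\cmX(\valpha^{\star} + \vbeta^{\star} - C\ones) = \zeros$; here I would set $\cvalpha^{\star} = \valpha^{\star} + \vbeta^{\star} - C\ones$ to match the notation of the dual. If $\rank(\cmX) = n$ then $\cmX$ has trivial null space, so $\alpha^{\star}_i + \beta^{\star}_i = C$ for all $i$; moreover $\cmX$ having rank $n$ means the map $\cvw \mapsto (\inner{\cvw,\cvx_i})_{i=1}^n$ is surjective onto $\Rb^n$ (choose $\cb^{\star} = 0$), so the constraint $\xi_i = \inner{\cvw,\cvx_i} + \cb$ imposes no restriction on the slacks, and setting $\xi^{\star}_i = \inner{\cvw^{\star},\cvx_i} + \cb^{\star} = h_i$ recovers the soft-margin SVM with weights $C$ on every point. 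The main obstacle I anticipate is bookkeeping the complementarity carefully: one must separately handle indices with $\alpha^{\star}_i + \beta^{\star}_i = 0$ (where $\delta_i$ need not have been zero a priori) versus those with $\alpha^{\star}_i + \beta^{\star}_i > 0$, and confirm that the equality case forces $\delta_i = 0$ on both types simultaneously — this is where the argument that $\sum_i \delta_i = 0$ (rather than a weighted sum) does the real work.
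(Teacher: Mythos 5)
Your proposal is correct and follows essentially the same route as the paper: force equality throughout the chain (\ref{eq:svmp_necessary:ineq}) to get $\gamma\|\cvw^{\star}\|^2=0$ and $\sum_i\delta_i=0$ (hence each $\delta_i=0$, since the second inequality involves the \emph{unweighted} sum of the $\delta_i$ --- exactly the point you flag at the end), then read off part~1 from $\cvw^{\star}=\zeros$ and part~2 from (\ref{eq:svmp_kkt:cw}) with $\gamma=0$ and the trivial null space of $\cmX$. Your version merely spells out the squeeze argument that the paper compresses into one line.
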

\begin{proof}
It follows from (\ref{eq:svmp_necessary:ineq}) that $\delta_i = 0$ and
$\inner{\cvw^{\star}, \cvx_i} + \cb^{\star} = h_i$ for $i = 1, \ldots, n$.
If $\gamma > 0$, then $\gamma \inner{\cvw^{\star}, \cvw^{\star}} = 0$
implies $\cvw^{\star} = \zeros$ and thus
$\cb^{\star} = h_i$ for all $i$.

If $\gamma = 0$, then (\ref{eq:svmp_kkt:cw}) implies
$\cmX \cvalpha^{\star} = \zeros$, where
$\cvalpha^{\star} = \valpha^{\star} + \vbeta^{\star} - C \ones$, as before.
If $\rank(\cmX) = n$, then $\cmX \cvalpha^{\star} = \zeros$
yields $\cvalpha^{\star} = \zeros$,
and so $\alpha^{\star}_i + \beta^{\star}_i = C$ for $i = 1, \ldots, n$.
Since $(\cvx_i)_{i=1}^{n}$ is in this case a basis in $\Rb^{n}$
and there is no penalty on $\norms{\cvw}$ in the objective function,
the SVM$+$ does not impose any additional constraints compared to
the soft margin SVM. The primal-dual optimal point of the SVM$+$ is thus
also optimal for the SVM with
$\xi_{i}^{\star} = \inner{\cvw^{\star}, \cvx_i} + \cb^{\star}$.
\end{proof}

\subsection{Proof of Theorem~\ref{thm:svmw_to_svmp_iff}}
\label{sec:proof:svmw_to_svmp_iff}
\begin{theorem*}
Let
$(\vw^{\star},b^{\star},\vxi^{\star},\valpha_{0}^{\star},\vbeta_{0}^{\star})$
be a primal-dual optimal point for the WSVM problem
with instance weights $\vc_0 \in \Rb_{+}^{n}$, not all zero.
There exists a choice of $C$, $\gamma$,
and correcting features $\{\cvx_i\}_{i=1}^{n}$
such that $(\vw^{\star},b^{\star})$ is optimal for the SVM$+$ problem iff:
\begin{align}
\label{eq:svmw_to_svmp_iff:app}
\exists \: \vc \in \Wc \; : \;
\rho(\vc, \vxi^{\star}) \bydef
    \inner{\vc - \bar{c} \ones, \vxi^{\star}} \geq 0 ,
\end{align}
where $\bar{c} \bydef n^{-1} \sum_{i=1}^{n} c_i$.
If $\rho(\vc, \vxi^{\star}) \geq 0$, one such possible choice is as follows:
\begin{align}
\label{eq:svmw_to_svmp_feat:app}
C &= \bar{c} , & 
\gamma &= \rho(\vc, \vxi^{\star}) , &
\cx_i &= \xi^{\star}_{i} - \cb^{\star} , \; \forall i
\end{align}
moreover, the optimal $\cw^{\star}$ and $\cb^{\star}$ in that case are:
\begin{align}
\label{eq:svmw_to_svmp_copt:app}
\cw^{\star} &= 1, & \cb^{\star} &= \inner{\vc, \vxi^{\star}} / \inner{\vc,
\ones} .
\end{align}
\end{theorem*}
\begin{proof}
\textbf{(\ref{eq:svmw_to_svmp_iff:app}) is necessary.}
Assume there exists an SVM$+$ setting such that
$(\vw^{\star},b^{\star},\cvw^{\star},\cb^{\star},
\valpha^{\star},\vbeta^{\star} )$
is a primal-dual optimal point for the SVM$+$ problem
(\ref{eq:svmp_primal}) and let
$\vc = \valpha^{\star} + \vbeta^{\star}$
(note that $(\valpha^{\star},\vbeta^{\star})$ and
$(\valpha_{0}^{\star},\vbeta_{0}^{\star})$ may be different).
Theorem~\ref{thm:svmw_from_svmp} states that
there exists $\vxi_{0}^{\star}$ such that
$(\vw^{\star},b^{\star},\vxi_{0}^{\star},
\valpha^{\star},\vbeta^{\star})$
is primal-dual optimal for the WSVM problem with weights $\vc$.
We need to show that $\vxi_{0}^{\star} = \vxi^{\star}$.
This follows directly from the KKT conditions when all
$c_{0,i} > 0$ and $c_i > 0$ since
$h_i = [1 - y_i(\inner{\vw^{\star}, \vx_i} + b^{\star})]_{+}$
are the same for both problems.
If some of the weights are zero, then the corresponding
$\xi_{i}^{\star}$ is not uniquely defined (it is unbounded from above)
and we have to assume that the algorithm returns the value at the lower bound,
i.e., $\xi_{i}^{\star} = h_i$.
Now, given that $\vxi_{0}^{\star} = \vxi^{\star}$,
$\vc \in \Wc$ by Proposition~\ref{prop:equiv_weights}
and $\rho(\vc, \vxi^{\star}) \geq 0$ by
Corollary~\ref{cor:svmp_from_svmw_necessary}.

\textbf{(\ref{eq:svmw_to_svmp_iff:app}) is sufficient.}
First, consider the case $\rho(\vc, \vxi^{\star}) > 0$ and let
$(\vw^{\star},b^{\star},\vxi^{\star},\valpha^{\star},\vbeta^{\star})$
be a primal-dual optimal point of the WSVM problem with weights $\vc$.
We construct $\{\cvx_i\}_{i=1}^{n}$ and provide
$C > 0$, $\gamma > 0$, $\cvw^{\star}$, and $\cb^{\star}$ such that
$(\vw^{\star},b^{\star},\cvw^{\star},\cb^{\star},
\valpha^{\star},\vbeta^{\star} )$
is primal-dual optimal for the corresponding SVM$+$ problem.

It is sufficient to look for one dimensional correcting features that
additionally satisfy $\sum_{i=1}^{n} c_i \cx_i = 0 $.
The KKT conditions in this case imply that
\begin{align}
\label{eq:svmw_to_svmp_iff:cw_c}
\cw^{\star} &= - \frac{C}{\gamma} \sum_{i=1}^{n} \cx_i , &
C &= \frac{1}{n} \sum_{i=1}^{n} c_i = \bar{c} .
\end{align}
We require for all $i = 1, \ldots, n$ that
\begin{align}
\label{eq:svmw_to_svmp_iff:xi}
\cw^{\star} \cx_i + \cb^{\star} = [1 - y_i (\inner{\vw^{\star}, \vx_i} +
b^{\star})]_{+} = \xi^{\star}_i.
\end{align}
Multiplying both sides by $c_i$ and summing up yields
\begin{align*}
\cb^{\star} = \inner{\vc, \vxi^{\star}} / \inner{\vc, \ones} .
\end{align*}
Plugging (\ref{eq:svmw_to_svmp_iff:cw_c}) into (\ref{eq:svmw_to_svmp_iff:xi})
and solving for $\cx_i$ one gets:
\begin{align}
\label{eq:svmw_to_svmp_iff:cx}
\cx_i = \pm \sqrt{ \frac{\gamma}{\rho(\vc, \vxi^{\star})} } (\cb^{\star} -
\xi^{\star}_i) .
\end{align}
Choosing $\gamma = \rho(\vc, \vxi^{\star})$ and the plus sign in
(\ref{eq:svmw_to_svmp_iff:cx}) for convenience,
(\ref{eq:svmw_to_svmp_iff:cw_c}) leads to
$\cw^{\star} = \rho(\vc, \vxi^{\star}) / \gamma = 1$.

Now, consider $\rho(\vc, \vxi^{\star}) = 0$.
Let $\cmX = [\cvx_1 \cdots \cvx_n]$ and set $\gamma = 0$.
Proposition~\ref{prop:svmp_rho_equality} and the KKT conditions imply:
\begin{align*}
C &= \bar{c} , & \cmX (\vc - \bar{c} \ones) &= \zeros, & \cmX^{\top}
\cvw^{\star} + \cb^{\star} \ones &= \vxi^{\star}.
\end{align*}
Hence, the matrix $\cmX$ must satisfy
\begin{align*}
(\vc - \bar{c} \ones) &\in \Null(\cmX) , & (\vxi^{\star} - \cb^{\star} \ones)
&\in \Range(\cmX^{\top}) .
\end{align*}
The above requirements translate to
\begin{align*}
\innern{\vc - \bar{c}, \vxi^{\star} - \cb^{\star} \ones} = 0 ,
\end{align*}
which holds for (\ref{eq:svmw_to_svmp_feat:app}), (\ref{eq:svmw_to_svmp_copt:app}),
and $\rho(\vc, \vxi^{\star}) = 0$.
\end{proof}

\subsection{Proof of Proposition~\ref{prop:svmw_to_svmp_iff_easy}}
\label{sec:proof:svmw_to_svmp_iff_easy}
\begin{proposition*}
Let
$(\vw^{\star},b^{\star},\vxi^{\star},\valpha^{\star},\vbeta^{\star})$
be a primal-dual optimal point for the WSVM problem
with instance weights $\vc \in \Rb_{+}^{n}$, not all zero. If
\begin{align*}
\Null(\mY\mK\mY) \cap \ones^{\perp} \cap \vy^{\perp} = \{\zeros\} ,
\end{align*}
then there exists a choice of $C$, $\gamma$, and $\{\cvx_i\}_{i=1}^{n}$
such that $(\vw^{\star},b^{\star})$ is optimal for the SVM$+$ problem iff:
\begin{align*}
\rho(\valpha^{\star}, \vxi^{\star}) =
    {\vxi^{\star}}^{\top} \big( \Id - \tfrac{1}{n}\ones\ones^{\top} \big)
    \valpha^{\star} \geq 0 .
\end{align*}
\end{proposition*}
\begin{proof}
Sufficiency follows directly from Theorem~\ref{thm:svmw_to_svmp_iff}
since $\vc = \valpha^{\star}$ is a valid choice of weights
(cf.\ Definition~\ref{def:equiv_weights}).
For necessity, note that $\valpha^{\star}$ is unique
by Proposition~\ref{prop:unique_dual} and all weights in $\Wc$
are of the form $\vc = \valpha^{\star} + \vbeta$, $\vbeta \in \Vc$.
The maximum in (\ref{eq:svmw_to_svmp_iff}) corresponds to
\begin{align*}
\max_{\vbeta} \sum_{i=1}^{n} \xi^{\star}_i \beta_i
- \frac{1}{n} \sum_{i=1}^{n} \xi^{\star}_i \sum_{i=1}^{n} \beta_i ,
\quad \st \; \beta_i \geq 0 ,
\end{align*}
which is attained at $\vbeta = \zeros$ since
$\forall i \; \xi^{\star}_i \beta_i = 0$.
\end{proof}

\subsection{Proof of Theorem~\ref{thm:derivatives}}
\label{sec:proof:derivatives}
\begin{theorem*}
Let the loss function $\loss$ be convex and twice continuously differentiable
and let the kernel matrix $\mK$ be (strictly) positive definite. Define vectors
$\vu$ and $\vv$ componentwise for $i = 1, \ldots, n$ as
\begin{align*}
u_i &\bydef y_i \loss'(y_i [K_{i}^{\top} \valpha^{\star} + b^{\star}]) , \\
v_i &\bydef c_i \loss''(y_i [K_{i}^{\top} \valpha^{\star} + b^{\star}]) ,
\end{align*}
where $(\valpha^{\star}, b^{\star})$ is a solution of (\ref{eq:wlearning:fopt})
for a given $\vc$.
If $\vv \neq \zeros$, then the solution is unique,
$\valpha^{\star}$ and $b^{\star}$ are continuously differentiable w.r.t.\ $\vc$
and the corresponding gradient can be computed as follows.
\begin{align}
\label{eq:gradient:app}
\begin{bmatrix}
\frac{\partial \valpha^{\star}}{\partial \vc} \\
\frac{\partial b^{\star}}{\partial \vc}
\end{bmatrix}
= -
\begin{bmatrix}
\Id + \diag(\vv) \mK & \vv \\
\ones^{\top} & 0
\end{bmatrix}^{-1}
\begin{bmatrix}
\diag(\vu) \\
\zeros^{\top}
\end{bmatrix}
\end{align}
\end{theorem*}
\begin{proof}
Uniqueness of solution follows from a similar argument as in the proof of
Theorem~\ref{thm:svmp_unique} and is obvious for $\valpha$. Let $b_{1}^{\star}$
and $b_{2}^{\star}$ be two optimal $b$ and define $b_{t}^{\star} =
(1-t)b_{1}^{\star} + t b_{2}^{\star}$. Considering the difference of the
objective function at $b_{t}^{\star}$ and $b_{1}^{\star}$ and differentiating
twice w.r.t.\ $t$, one arrives at
\begin{align*}
(b_{2}^{\star} - b_{1}^{\star}) \ones^{\top} \vv = 0  \quad \Rightarrow \quad
b_{2}^{\star} = b_{1}^{\star} .
\end{align*}

The optimality conditions of (\ref{eq:wlearning:fopt}) yield
\begin{align*}
\mK(\valpha^{\star} + \diag(\vu) \vc) &= \zeros , &
\inner{\vu, \vc} &= 0 .
\end{align*}
Since $\mK$ is non-singular it can be dropped from the first equation.
Computation of the total derivatives yields the linear system below.
\begin{align}
\label{eq:deriv:system}
\begin{bmatrix}
\Id + \diag(\vv) \mK & \vv \\
\vv^{\top} \mK & \ones^{\top} \vv
\end{bmatrix}
\begin{bmatrix}
\frac{\partial \valpha^{\star}}{\partial \vc} \\
\frac{\partial b^{\star}}{\partial \vc}
\end{bmatrix}
= -
\begin{bmatrix}
\diag(\vu) \\
\vu^{\top}
\end{bmatrix}
\end{align}
Note that (\ref{eq:deriv:system}) is equivalent to the system in
(\ref{eq:gradient:app}) since the last equation
can be equivalently replaced by the
sum of the first $n$ equations minus the last one. To apply the implicit
function theorem, it remains to show that the matrix
in (\ref{eq:gradient:app}) is
invertible. Recall that the determinant of a block matrix factors as the
determinant of a block and its Schur complement.
It is thus sufficient to show that
\begin{align*}
\det(\Id + \diag(\vv) \mK) \neq 0, \;
\ones^{\top} (\Id + \diag(\vv) \mK)^{-1} \vv \neq 0 .
\end{align*}
Assume w.l.o.g.\ that the first $m \leq n$ components of $\vv$ are non-zero and
define $M \bydef \Id + \diag(\vv) \mK$. Further,
\begin{align*}
M = 
\begin{bmatrix}
A & B \\
C & D
\end{bmatrix}
=
\begin{bmatrix}
\Id_m + \diag(\vv_m) \mK_m & B \\
\zeros_{n-m,m} & \Id_{n-m}
\end{bmatrix} ,
\end{align*}
where the $B$ block is irrelevant. It now follows that
\begin{align*}
& \det(M) \\
& \quad = \det(D) \det(A - B D^{-1} C) = \det(A) \\
& \quad = \det(\diag(\vv_m)) \det(\diag(\vv_m)^{-1} + \mK_m) \neq 0 ,
\end{align*}
where we use that $\diag(\vv_m)^{-1}$ is positive definite since $\vv_m \succ
\zeros_m$ due to convexity of $\loss$. Finally,
\begin{align*}
\ones^{\top} M^{-1} \vv &=
\ones^{\top}
\begin{bmatrix}
A^{-1} & -A^{-1} B D^{-1} \\
\zeros_{n-m,m} & D^{-1}
\end{bmatrix}^{-1} \vv \\
&=
\ones_{m}^{\top}
\left(\diag(\vv_m)^{-1} + \mK_m \right)^{-1}
\ones_{m} > 0 .
\end{align*}
\end{proof}

{\small
\bibliographystyle{icml2013}
\bibliography{main}
}

\end{document}